\renewcommand{\P}{\mathbb{P}}
\newcommand{\p}{\partial}
\newcommand{\dd}{\mathrm{d}}
\DeclareMathOperator{\R}{{\mathbb R}}
\newtheorem{prop}{Proposition}[section]  
\newtheorem{defn}[prop]{Definition}
\newtheorem{thm}[prop]{Theorem}
\newtheorem{cor}[prop]{Corollary}
\newtheorem{lem}[prop]{Lemma}
\date{}
\newcommand{\TabBesBeg}[1][1.0]{%
 \let\MyTable\table
 \let\MyEndtable\endtable
 }
\newcommand{\TabBesEnd}{%
 \let\table\MyTable
 \let\endtable\MyEndtable}
\newcommand{\FigBesBeg}[1][1.0]{%
 \let\MyFigure\figure
 \let\MyEndfigure\endfigure
 \renewenvironment{figure}[1]{\begin{SCfigure}[#1]##1}{\end{SCfigure}}}
\newcommand{\FigBesEnd}{%
 \let\figure\MyFigure
 \let\endfigure\MyEndfigure}
\newcommand{\f}{\frac}
\newcommand{\proba}[1]{\ensuremath{\mathbb{P}\left(#1\right)}}
\newcommand{\esp}[1]{\ensuremath{\mathbb{E}\left[#1\right]}}
\newcommand{\1}{\ensuremath{\mathds{1}}}
\renewcommand{\R}{\ensuremath{\mathbb{R}}}
\newcommand{\N}{\ensuremath{\mathbb{N}}}
\newcommand{\Z}{\ensuremath{\mathbb{Z}}}
\definecolor{orange}{cmyk}{0,0.5,1,0.3}
\definecolor{turquoise}{rgb}{0,0.4,0.4}
\begin{document}


\title{{Microscopic approach of a time elapsed neural model}}

\author{Julien Chevallier}

\address{Laboratoire J. A. Dieudonn\'e, UMR CNRS 6621, Universit\'e de Nice Sophia-Antipolis, Parc Valrose \\
06108 Nice Cedex 2, France\\
julien.chevallier@unice.fr}

\author{Mar\'ia Jos\'e C\'aceres}

\address{Departamento de Matem\'atica Aplicada , Universidad de Granada, Campus de Fuentenueva\\
E-18071 Granada, Spain\\
caceresg@ugr.es}

\author{Marie Doumic}


\address{UPMC University of Paris 6, JL Lions Lab., 4 place Jussieu \\ 75005 Paris, France}

\author{Patricia Reynaud-Bouret}

\address{Laboratoire J. A. Dieudonn\'e, UMR CNRS 6621, Universit\'e de Nice Sophia-Antipolis, Parc Valrose \\
06108 Nice Cedex 2, France\\
patricia.reynaud-bouret@unice.fr}


\maketitle


\begin{abstract}

The spike trains are the main components of the information processing in 
the brain. To model spike trains several point processes have been 
investigated in the literature. And more macroscopic approaches have also been
studied, using partial differential equation models. 
The main aim of the present article is to build a bridge between several point 
processes models (Poisson, Wold, Hawkes) that have been proved to 
statistically fit real spike trains data and age-structured partial
differential equations as introduced by Pakdaman, Perthame and Salort.
\end{abstract}

\keywords{Hawkes process;  Wold process; renewal equation; neural network}

\ccode{AMS Subject Classification:35F15, 35B10, 92B20, 60G57, 60K15}

\section*{Introduction}
In Neuroscience, the action potentials (spikes) are the main
components of the real-time information processing in the
brain. Indeed, thanks to the synaptic integration, the membrane voltage 
of a neuron 
depends on the action potentials emitted by some others, 
whereas if this membrane potential is sufficiently high, there is production 
of action potentials. 

To access those phenomena, schematically, one can proceed in two ways:
extracellularly record in vivo several neurons, at a same time, and 
have access to simultaneous spike trains (only the list of events corresponding 
to action potentials) 
or intracellularly record the whole membrane voltage of only one neuron at a time, 
being blind to the nearby neurons. 

Many people focus on spike trains. Those data are fundamentally random 
and can be modelled easily by time point processes, i.e. random countable 
sets of points on $\R_+$. Several point processes models have been investigated
in the literature, each of them reproducing different features of the neuronal
reality. The easiest model is the homogeneous Poisson process, which can only 
reproduce a constant firing rate for the neuron, but which, in particular, fails 
to reproduce refractory periods\footnote{Biologically, a neuron cannot produce 
two spikes too closely in time.}. It is 
commonly admitted that this model is too poor to be realistic. 
Indeed, in such a model, two points or spikes can be arbitrary close as soon as
their overall frequency is respected in average.  
Another more realistic model is the renewal process \cite{Pipa2013}, where the 
occurrence of a point or spike depends on the previous occurrence. 
More precisely, 
the distribution of delays between spikes (also called inter-spike intervals, ISI) 
is given and a distribution, which provides small weights to small delays,
is able to mimic refractory periods. 
A deeper statistical analysis has shown that Wold processes is showing 
good results, with respect to goodness-of-fit test on real data sets \cite{Pouzat09}. 
Wold processes are point processes for which the next occurrence of a spike depends 
on the previous occurrence but also on the previous ISI.
From another point of view, the fact that spike trains are usually non stationary 
can be easily modelled by inhomogeneous Poisson processes \cite{ventura_kernel}.
 All those models do not reflect one of the main features of spike trains, 
which is the synaptic integration and there has been various attempts to catch 
such phenomenon. One of the main model  is the Hawkes model, which has been 
introduced in \cite{chornoboy} and which has been recently shown to fit several 
stationary data \cite{RBRGTM}. Several studies have been done in similar directions 
(see  for instance \cite{BBS}).  More recently a vast interest has been shown to 
generalized linear models \cite{Pillow}, with which one can infer functional 
connectivity and which are just an exponential variant of Hawkes models.


There has {also} been several models of the full membrane voltage
such as Hodgkin-Huxley models. It is possible to fit some of those probabilistic
stochastic differential 
equations (SDE) 
on real voltage data \cite{JahnDitlevsen} and to use them to estimate meaningful
physiological parameters  \cite{DitSamson}. 
However, the lack of simultaneous data (voltages of different 
neurons  at the same time) prevent these models to be used as statistical 
models that can be fitted on network data, to estimate network parameters. 
A  simple SDE model taking synaptic integration into account is the well-known 
Integrate-and-Fire (IF) model. Several variations 
 have been proposed 
to describe several features of real neural networks such as oscillations 
\cite{B,BH}. 
{In particular, there exists hybrid IF models including inhomogeneous voltage driven Poisson process \cite{JBHD} that are able to mimic real membrane potential data. However}
up to our knowledge and unlike point processes models, no statistical test have been applied to 
show that {any of the previous variations of the IF model} fit real network data.

Both, SDE and point processes, approaches are microscopic descriptions, where random 
noise explains the intrinsic variability. Many authors have argued that there must
be some more macroscopic approach describing huge neural networks as a whole, using PDE 
formalism~\cite{CBGW,sirovich}. Some authors have already been able to perform link 
between PDE approaches as the macroscopic system and SDE approach (in particular IF models)
as the microscopic model~\cite{RBW,omurtag,mg}. 
Another macroscopic point of view on spike trains is proposed by Pakdaman, Perthame and 
Salort in a series of articles~\cite{PPS1,PPS2,PPS3}. It uses a nonlinear age-structured 
equation to describe the spikes density.
Adopting a population view, they aim at studying relaxation to equilibrium or 
spontaneous periodic oscillations. Their model is justified by a qualitative, heuristic 
approach. As many other models, their model shows several qualitative features such as 
oscillations that make it quite plausible for real networks, but once again there is no 
statistical proof of it, up to our knowledge.

In this context, the main purpose of the present article is to build a 
bridge between several point processes models that  have been proved to statistically fit real spike trains data 
and age structured PDE of the type of Pakdaman, Perthame and Salort. 
The point processes are the microscopic models, the PDE being their {meso-macroscopic} 
counterpart. In this sense, it extends PDE approaches for IF models to models 
that statistically fit true spike trains data. 
In the first section, we introduce Pakdaman, Perthame and Salort PDE (PPS) via its 
heuristic informal and microscopic description, which is based on IF models. 
Then, in Section \ref{secPP}, we develop the different point process models, 
quite informally, to draw the main heuristic correspondences between both approaches. 
In particular, we introduce the conditional intensity of a point process and 
a fundamental construction, called Ogata's thinning~\cite{ogatathin}, which allows a microscopic 
understanding of the dynamics of a point  process. 
Thanks to Ogata's thinning, in Section \ref{PDEpoint}, we have been able to 
rigorously derive a microscopic random weak version of (PPS) and to propose its expectation
deterministic counterpart. 
An independent and identically distributed (i.i.d) population version is also available. 
Several examples of applications are discussed 
in Section \ref{sec:ex}. 
To facilitate reading, technical results and proofs are included in 
two appendices.
The present work is clearly just a first to link point processes and PDE: there are much 
more open questions than answered ones and 
 this is discussed in the final conclusion. 
However, we think that this can be fundamental to acquire a deeper understanding 
of spike train models, their advantages as well as their limitations.

\section{Synaptic integration and (PPS) equation}
\label{SecPPS}

Based on the intuition that every neuron in the network
should behave in the same way, Pakdaman, Perthame and Salort  
proposed in \cite{PPS1} 
a deterministic PDE 
denoted (PPS)
in the sequel. The origin of this PDE is the 
classical 
 (IF) model. 
In this section we describe the link between the (IF) microscopic model
and the mesoscopic (PPS) model, 
the main aim being to show  thereafter the relation between (PPS) model and other
natural microscopic models for spike trains: point processes.

\subsection{Integrate-and-fire}

Integrate-and-fire  models describe the 
time evolution of the membrane 
potential, $V(t)$, by means of ordinary differential equations as follows
\begin{equation}\label{Equation-IF}
C_{m}\frac{\dd tV}{\dd t}=-g_L (V-V_L)+I(t),
\end{equation}
where $C_m$ is the capacitance of the membrane, $g_L$ is the leak 
conductance and $V_L$ is the leak reversal potential.
If $V(t)$ exceeds
a certain threshold $\theta$, the neuron fires / emits an action potential (spike) and $V(t)$ is reset to $V_L$.  The {\it synaptic current} $I(t)$ takes into account the fact that other presynaptic neurons fire and excite the neuron of interest, whose potential is given by $V(t)$.

As stated in \cite{PPS1}, the origin of (PPS) equation comes from 
\cite{PPCV}, where the explicit solution of 
 a classical IF model as \eqref{Equation-IF}
has been  discussed.
To be more precise the  membrane voltage of one neuron 
at time $t$ is described by:
\begin{equation}\label{IF}
V(t)=V_r+(V_L-V_r)e^{-(t-T)/\tau_m}+\int_{T}^t h(t-u) N_{input}(du),
\end{equation}
where $V_r$ is the resting potential 
satisfying  $V_L<V_r<\theta$,
$T$ is the last spike emitted by  the considered neuron,
$\tau_m$ is the time constant of the system (normally 
$\tau_m=g_L/C_m$),
$h$ is the excitatory post synaptic potential (EPSP) and 
$N_{input}$ is the sum of Dirac masses at each spike of the presynaptic 
neurons.
Since after firing, $V(t)$ is reset to $V_L<V_r$, there is a refractory period when the neuron is less excitable than at rest.
%
%
The constant time  $\tau_m$ indicates
whether the next spike can occur more or less rapidly.
The other main quantity,  $\int_{T}^t h(t-u) N_{input}(du)$,  is the {\it synaptic integration term}.

In \cite{PPCV}, they consider a whole random network of such IF neurons
 and look at the behavior of this model,
 where the only randomness is in the network. 
In many other studies
\cite{B,BH,CCP,CP,mg,sirovich,omurtag} 
IF models 
as \eqref{Equation-IF} are considered to finally obtain
 other systems of partial differential equations (different
to (PPS))
describing neural networks behavior.
In these studies, each presynaptic neuron is assumed to fire as an independent
 Poisson process and via  a diffusion approximation, the synaptic current is then approximated  by a continuous in 
time stochastic process of Ornstein-Uhlenbeck.


\subsection{The (PPS) equation}

The deterministic PDE  proposed by Pakdaman, Perthame and Salort, 
whose origin is also the mi\-cros\-co\-pic IF model \eqref{IF},
is the following:

$
\left(\mathrm{PPS}\right)
\quad 
\textrm{\begin{tabular}{l}  $\displaystyle \begin{cases}
\frac{\partial n\left(s,t\right)}{\partial t}+
\frac{\partial n\left(s,t\right)}{\partial s}+
p\left(s,X\left(t\right)\right)n\left(s,t\right)=0\\
m\left(t\right):=n\left(0,t\right)=
\int_{0}^{+\infty}p\left(s,X\left(t\right)\right)n\left(s,t\right)\dd s.
\end{cases}$
\end{tabular}}
$

\noindent
In this equation, $n(s,t)$ represents a probability density of neurons 
at time $t$ having discharged at time $t-s$. Therefore, 
$s$ represents the time elapsed since the last discharge. 
The fact that the equation is an elapsed time structured equation is 
natural, because the IF model \eqref{IF} clearly only depends on the time 
since the last spike. 
More informally, the variable $s$ represents the "age" of the neuron. 

The first equation of the system (PPS) 
represents a pure transport process and means that as time goes by, neurons 
of age $s$ and past given by $X(t)$ are either aging linearly or reset to age $0$
with rate $p\left(s,X\left(t\right)\right)$.

The second equation of (PPS) describes the fact
that 
when neurons spike, the 
age
(the elapsed time) returns to 0.
Therefore, $n(0,t)$ depicts the density of neurons undergoing a discharge at 
time $t$ and it is denoted by $m(t)$.
As a consequence of this boundary condition, for $n$ at $s=0$,
 the following conservation law is obtained: 
\begin{eqnarray*}
\int_0^\infty n\left(s,t\right)\dd s
= \int_0^\infty n\left(s,0\right)\dd s
\end{eqnarray*}
This means  that if $n\left(\cdot,0\right)$ is
a probabilistic density then $n\left(\cdot,t\right)$ 
can be interpreted as  a density at each time $t$. 
Denoting by $\dd t$ the Lebesgue measure and since $m(t)$ is the density of firing neurons at time $t$ in (PPS), $m(t) \dd t$ can also be interpreted as the limit of  $N_{input}(\dd t)$ in \eqref{IF} when the population of neurons becomes continuous.

The system (PPS) is nonlinear since  the rate $p\left(s,X(t)\right)$
depends
on 
$n(0,t)$ 
by means of the  quantity  $X(t)$:
\begin{equation}\label{X}
X(t)=\int\limits_0^t h(u) m(t-u)\dd u=\int\limits_0^t h(u) n(0,t-u) \dd u.
\end{equation}
The quantity  $X(t)$ represents the interactions between neurons.
It "takes into account the averaged propagation time for the ionic 
pulse in this network"~\cite{PPS1}. 
More precisely with respect to the IF models \eqref{IF}, 
this is the synaptic integration term, once the population becomes continuous. 
The only difference is that in \eqref{IF} the memory is cancelled once the 
last spike has occurred and this is not the case here.
However informally, both quantities have  the same interpretation.
Note nevertheless, that in \cite{PPS1}, the function $h$ can be much more
 general  than the $h$ of the IF models which clearly corresponds to EPSP. 
From now on and in the rest of the paper, $h$ is just a general non negative 
function without forcing the connection with EPSP.

The larger $p\left(s,X(t)\right)$  the more likely neurons of age $s$ 
and past $X(t)$
fire. Most of the time (but it is not a requisite), $p$ is assumed to be less than 1 and is interpreted as the  probability that neurons of age $s$ fire. However, as 
shown in 
Section \ref{PDEpoint}
and as 
interpreted in many population structured equation \cite{Cloez,DHKR,BP}, $p(s,X(t))$ is closer to a hazard rate, 
i.e. a positive quantity such that $p\left(s,X(t)\right)\dd t$ is informally the probability to fire  given that the neuron has not fired yet. 
In particular, it
could be not bounded by 1 and does not need to integrate to 1. A toy example is obtained if  $p\left(s,X(t)\right)=\lambda>0$, where  a steady state solution is $n(s,t)=\lambda e^{-\lambda s} {\bf 1}_{s\geq 0}$: this is the density of an exponential variable 
with parameter $\lambda$. 

However, based on the interpretation of $p\left(s,X(t)\right)$  as a probability bounded by 1, one of the main model that Pakdaman, Perthame and Salort consider is $p\left(s,X(t)\right)={\bf 1}_{s\geq \sigma(X(t))}$. This again can be easily interpreted by looking at \eqref{IF}. Indeed, since in the IF models the spike happens when the threshold $\theta$ is reached, one can consider that $p\left(s,X(t)\right)$ should be equal to 1 whenever
$$V(t)=V_r+(V_L-V_r)e^{-(t-T)/\tau_m}+ X(t)\geq \theta,$$
and 0 otherwise.
Since $V_L-V_r<0$, $p\left(s,X(t)\right)=1$  is indeed equivalent to $s=t-T$ larger than some decreasing function of $X(t)$. This has the double advantage to give a formula for the refractory period ($\sigma(X(t))$) and to model excitatory systems:  the refractory period decreases when the whole firing rate increases via $X(t)$ and this makes the neurons fire even more. 
This is for this particular case that Pakdaman, Perthame and Salort have shown existence of oscillatory behavior \cite{PPS2}.


Another important parameter in the (PPS) model and introduced in \cite{PPS1} is $J$, which can be seen with our formalism
as $\int h$ and which describes the  network connectivity or the strength of the interaction.
In \cite{PPS1} 
it has been proved that,
for highly or weakly connected 
networks, (PPS) model exhibits relaxation to 
steady state and periodic solutions have also been numerically observed for moderately connected
networks.
The authors in \cite{PPS2} 
have quantified the regime
where relaxation to a stationary solution occurs in terms 
of $J$ and described periodic solution for intermediate values
of $J$.

Recently, in \cite{PPS3}, the (PPS) model has been extended
including a fragmentation term, which describes the
adaptation and fatigue of the neurons. In this sense, this
new term incorporates the past activity of the neurons.
For this new model, in the linear case there is exponential 
convergence to the steady states, while in the
 weakly nonlinear
case  a total desynchronization in the network is proved.
Moreover, for greater nonlinearities, synchronization can again been numerically observed.




\vspace{-0.25cm}
\section{Point processes and conditional intensities as models for spike trains 
\label{secPP}}


We first start by quickly reviewing the main basic 
concepts
and notations of point processes, 
in particular, conditional 
intensities and Ogata's thinning \cite{ogatathin}. We refer the interested reader to \cite{Bre} for 
exhaustiveness 
and to \cite{BBVKF} for a much more condensed version, with the main useful 
notions.
\vspace{-0.3cm}
\subsection{Counting processes and conditional intensities}
We focus on locally
finite point processes on $\mathbb{R}$, equipped with the borelians  
$\mathcal{B}(\R)$. 
\begin{defn}[Locally finite point process]
  A  locally finite point process $N$ on $\mathbb{R}$  is a random set of points such that 
  it has almost surely (a.s.) a finite number of points in finite intervals.
  Therefore, associated to $N$ there is an ordered sequence of extended real 
  valued random times $(T_z)_{z\in\Z}$:   
$\cdots\leq T_{-1}\leq T_{0}\leq 0<T_{1}\leq \cdots$.
\end{defn}
\noindent For a measurable set $A$, $N_A$ denotes the number of points of $N$ in $A$. 
This is a random variable with values in 
{$\N\cup \{\infty\}$}.
\begin{defn}[Counting process associated to a point process]
  The process on $\R_+$ defined by $t\mapsto N_t:=N_{(0,t]}$ is called the 
    counting process associated to the point process $N$. 
\end{defn}

\noindent 
The natural and the predictable filtrations are fundamental for the present work.
\begin{defn}[Natural filtration of a point process]
 The natural filtration of $N$ is the family    
 $\left( \mathcal{F}_t^{N}  \right)_{t\in \mathbb{R} }$ of   $\sigma$-algebras 
 defined by  $\mathcal{F}_t^{N}=\sigma\left(N\cap(-\infty,t]\right)$.
\end{defn}
\begin{defn}[Predictable filtration of a point process]
 The predictable filtration of $N$ is the family of $\sigma$-algebra  
 $\left( \mathcal{F}_{t-}^{N}  \right)_{t\in \mathbb{R} }$ defined
 by $\mathcal{F}_{t-}^{N} = \sigma\left( N\cap(-\infty,t)\right)$.
\end{defn}
\vspace{-0.2cm}
The intuition behind this concept is that $\mathcal{F}_t^{N}$ contains all 
the information given by the point process at time $t$. 
In particular, it contains the information whether $t$ is a point of 
the process or not 
while $\mathcal{F}_{t-}^{N}$ only contains the 
information given by the point process strictly before $t$.
Therefore, it does not contain (in general) the information whether 
$t$ is a point or not. In this sense, 
$\mathcal{F}_{t-}^{N}$  represents (the information contained in) the past.

Under some rather classical conditions 
\cite{Bre}, which are always 
assumed to be satisfied here, one can associate to 
$(N_t)_{t\geq 0}$ a stochastic intensity $\lambda(t,\mathcal{F}_{t-}^{N})$, 
which is a non negative random  quantity. 
The notation $\lambda(t,\mathcal{F}_{t-}^{N})$
for the intensity 
refers to the predictable version of the intensity 
associated to the natural filtration  
and $(N_t-\int_0^t \lambda(u,\mathcal{F}_{u-}^{N})\dd u)_{t\geq 0}$ forms a local 
martingale \cite{Bre}.  
Informally, $\lambda(t,\mathcal{F}_{t-}^{N})\dd t$ represents the probability 
to have a new point in interval $[t,t+\dd t)$ given the past. 
Note that $\lambda(t,\mathcal{F}_{t-}^{N})$ 
should not be understood as a function, in the same way as density is 
for random variables. It is a  "recipe" 
explaining how the probability 
to find a new point at time $t$ depends on the past configuration
: since 
the past configuration 
depends on its own past, this is closer to a 
recursive formula. In this respect, 
the intensity should obviously
depend on $N\cap(-\infty,t)$ and not on $N\cap(-\infty,t]$ to predict the 
occurrence at time $t$, since 
we cannot 
 know whether 
$t$ is already a point or not.

The distribution of the point process $N$ on $\R$ is
completely characterized by the  knowledge of the intensity
$\lambda(t,\mathcal{F}_{t-}^{N})$ on $\R_+$ and the distribution of 
$N_-=N\cap\R_-$, 
which  is denoted by $\mathbb{P}_0$ in the sequel.
The  information about $\mathbb{P}_0$  is necessary since each point of $N$ 
may depend on the occurrence of all the previous points: if for all $t>0$, one knows 
the "recipe" $\lambda(t,\mathcal{F}_{t-}^{N})$ 
that gives the probability of a new point at time $t$
given the past 
configuration, 
one still needs to know the distribution of 
$N_-$ to obtain the whole process.

\noindent Two main assumptions are used 
depending on the type of results
 we seek:

$\left(\mathcal{A}^{\mathbb{L}^1,a.s.}_{\lambda, loc}\right)
\,
\textrm{\begin{tabular}{|l}  for any $T\geq 0$,  $\int_{0}^{T} \lambda(t,\mathcal{F}_{t-}^{N}) \dd t$ is finite a.s. 
\end{tabular}}$

$
\left(\mathcal{A}^{\mathbb{L}^1,exp}_{\lambda,loc}\right)
\,
\textrm{\begin{tabular}{|l}  for any $T\geq 0$,  $\esp{\int_{0}^{T} \lambda(t,\mathcal{F}_{t-}^{N}) \dd t}$ is finite. 
\end{tabular}}$\\
Clearly $\left(\mathcal{A}^{\mathbb{L}^1,exp}_{loc}\right)$ implies $\left(\mathcal{A}^{\mathbb{L}^1,a.s.}_{loc}\right)$. Note that $\left(\mathcal{A}^{\mathbb{L}^1,a.s.}_{loc}\right)$ implies non-explosion in finite time for the counting processes $(N_{t})$.

\begin{defn}[Point measure associated to a point process]
 The point measure associated to $N$ is denoted by $N(\dd t)$ and defined by
 $N(\dd t)=\sum_{i\in \Z} \delta_{T_i}(\dd t)$,
 where $\delta_u$ is the Dirac mass in $u$.
\end{defn}
By analogy with (PPS), and since points of point processes correspond to 
spikes (or times of discharge) for the considered 
neuron in spike train analysis, $N(\dd t)$ is the
microscopic equivalent of  the distribution of discharging neurons $m(t)\dd t$.
Following this analogy, and since $T_{N_t}$ 
is the last point less or equal to $t$ for every $t\geq 0$, the age $S_t$ at time $t$ is defined 
by $S_t=t-T_{N_t}$.
In particular, if $t$ is a point of $N$, then $S_{t}=0$.
Note that $S_t$ is $\mathcal{F}_t^{N}$ measurable for every $t\geq 0$
and therefore, $S_{0}=-T_{0}$ is $\mathcal{F}_{0}^{N}$ measurable.
To define an age at time $t=0$, one assumes that
\smallskip

$
\left(\mathcal{A}_{T_0}\right)
\,
\textrm{\begin{tabular}{|l} 
There exists {a} first point before $0$ for the process $N_-$, i.e. $-\infty< T_0$. 
\end{tabular}}
$
\smallskip

\noindent As we have remarked before, conditional intensity should depend on 
$N\cap(-\infty, t)$. Therefore, it cannot be function of $S_t$, since 
$S_t$ informs us if $t$ is a point or not. 
 That is the main reason for considering this $\mathcal{F}_{t-}^{N}$ measurable variable
\begin{equation}\label{eq:def:age:predictable}
S_{t-}=t-T_{N_{t-}},
\end{equation}
where $T_{N_{t-}}$ is the last point strictly before $t$ (see Figure \ref{fig:thin}). 
 Note also that knowing $(S_{t-})_{t\geq 0}$ or $(N_t)_{t\geq 0}$ is completely equivalent given $\mathcal{F}^{N}_{0}$.

The last and most crucial equivalence between (PPS) and the present point process set-up, consists in noting that the quantities $p(s,X(t))$ and $\lambda(t,\mathcal{F}_{t-}^{N})$ have informally the same meaning: they both represent a firing rate, i.e. both give the rate of discharge as a function of the past.
This dependence is made more explicit in $p(s,X(t))$ than in $\lambda(t,\mathcal{F}_{t-}^{N})$.

\vspace{-0.3cm}
\subsection{Examples}\label{sec:examples}
Let us review the basic point processes models of spike trains and see what kind of analogy is likely to exist between both models ((PPS) equation  and point processes). These informal analogies are possibly 
exact mathematical results (see Section \ref{sec:ex}).

\vspace{-0.2cm}
\paragraph{Homogeneous Poisson process} 
This is the simplest case where $\lambda(t,\mathcal{F}_{t-}^{N})=\lambda,$
with $\lambda$  a fixed positive 
constant
representing
the firing rate. 
There is no dependence in time $t$ (it is homogeneous) and no dependence with respect to the past. This case should be equivalent to $p(s,X(t))=\lambda$ in (PPS). 
This can be made even more explicit. Indeed in the case where the Poisson process exists on the whole real line (stationary case), it is easy to see that
$$\proba{S_{t-}>s}= \proba{N_{[t-s,t)}=0}=\exp(-\lambda s),$$
meaning that the age $S_{t-}$ obeys an exponential distribution with parameter $\lambda$, i.e. the steady state of the toy example developed for (PPS) when $ p(s,X(t))=\lambda$. 

\vspace{-0.2cm}
\paragraph{Inhomogeneous Poisson process}
To model non stationarity, one can use $\lambda(t,\mathcal{F}_{t-}^{N})=\lambda(t),$
which only depends on time. This case should be equivalent to the replacement of $p(s,X(t))$ in (PPS) by $\lambda(t)$.

\vspace{-0.2cm}
\paragraph{Renewal process}
This model is very useful to take refractory period into account. It corresponds to the case where the ISIs (delays between spikes) are independent and identically distributed (i.i.d.)
 with a certain given density $\nu$ on $\R_+$. The associated hazard rate is 
$$f(s)=\frac{\nu(s)}{\int_{s}^{+\infty} \nu(x) \dd x},$$
when $\int_{s}^{+\infty} \nu(x) \dd x>0$.
Roughly speaking, $f(s)\dd s$ is the probability that a neuron spikes with 
age $s$ given that 
its age is larger than $s$.
In this case, considering the set of spikes as the point process $N$, it is easy to show 
(see the Appendix \ref{ApRenewal})
 that its corresponding intensity is
$\lambda(t,\mathcal{F}_{t-}^{N})=f(S_{t-})$
which only depends on the age. One can also show quite easily that the process $(S_{t-})_{t> 0}$, which is equal to $(S_{t})_{t>0}$ almost everywhere (a.e.),
 is a Markovian process in time.  This renewal setting should be equivalent in the (PPS) framework to $p(s,X(t))=f(s).$

Note that many people consider IF models \eqref{IF} with Poissonian inputs with or without additive white noise. In both cases,  the system erases all memory after each spike and therefore the ISIs are i.i.d. Therefore as long as we are only interested by the spike trains and their point process models, those IF models are just a particular case of renewal process~\cite{BH,CCT,Delarue_Inglis_Rubenthaler_Tanre_2012,PPCV}.

\paragraph{Wold process and more general structures}
Let $A^1_t$ be the delay (ISI) between the last point and the 
occurrence just before 
(see also Figure \ref{fig:thin}), $A^1_t= T_{N_{t-}}-T_{N_{t-}-1}.$
A Wold process \cite{wold-lai,Daley:VereJones} is then 
characterized by $\lambda(t,\mathcal{F}_{t-}^{N})=f(S_{t-},A^1_t).$
This model has been 
matched to several real data thanks to goodness-of-fit tests \cite{Pouzat09} and is therefore one of our main example with the next discussed Hawkes process case. One can show in this case that the successive ISI's form a Markov chain of order 1 and that the continuous time process $(S_{t-}, A^1_t)$ is also Markovian.

This case should be equivalent to the replacement of $p(s,X(t))$ in (PPS) by $f(s,a^1)$, 
with $a^1$ denoting
the delay between the two previous spikes
. Naturally in this case,  one should expect a PDE of higher dimension 
with third variable $a^1$. 

More generally, one could define
\begin{equation}\label{eq:successive:ages}
A^k_t= T_{N_{t-}-(k-1)}-T_{N_{t-}-k},
\end{equation}
and point processes with intensity
$\lambda(t,\mathcal{F}_{t-}^{N})=f(S_{t-},A^1_t,...,A^k_t)$.
Those processes satisfy more generally that their ISI's form a Markov chain of  order $k$ and that the continuous time process $(S_{t-}, A^1_t,...,A^k_t)$ is also Markovian 
(see the Appendix \ref{ApWold}).

\begin{remark}
The dynamics of the successive ages is pretty simple.
On the one hand, 
the dynamics of the vector of the successive ages $(S_{t-},A^1_t,...,A^k_t)_{t>0}$ is deterministic between two jumping times. 
The first coordinate increases with rate $1$.
On the other hand, the dynamics at any jumping time $T$ is given by the following shift:
\begin{equation}\label{eq:dynamic:successive:ages}
\left\{
\begin{aligned}
& \text{the age process goes to $0$, i.e. } S_{T}=0, \\
& \text{the first delay becomes the age, i.e. } A_{T+}^{1}=S_{T-}, \\
& \text{the other delays are shifted, i.e. } A_{T+}^{i}=A_{T}^{i-1} \text{ for all } i\leq k.
\end{aligned}
\right.
\end{equation}
\end{remark}

\paragraph{Hawkes processes}
The most classical setting is the linear (univariate) Hawkes process, which corresponds to
$$\lambda(t,\mathcal{F}_{t-}^{N})=\mu + \int_{-\infty}^{t-} h(t-u) N(du),$$
where the positive parameter $\mu$ is called the spontaneous rate and the non negative function 
$h$, with support in $\R_+$,  
is called the interaction function, which is generally assumed to satisfy $\int_{\R_+} h <1$ to guarantee the existence of a stationary version \cite{Daley:VereJones}. 
This model has also been
matched to several real neuronal data thanks to goodness-of-fit tests  \cite{RBRGTM}. 
Since it can mimic synaptic integration, as 
explained below, this represents the main example of the present work.
 
In the case where $T_0$ tends to $-\infty$, this is equivalent to say that there is no point on the negative half-line and in this case, one can rewrite
$$\lambda(t,\mathcal{F}_{t-}^{N})=\mu + \int_{0}^{t-} h(t-u) N(du).$$
By  analogy between $N(\dd t)$ and $m(t)\dd t$, one sees that $\int_{0}^{t-} h(t-u) N(du)$ is indeed the analogous of $X(t)$ the synaptic integration in \eqref{X}. So one could expect that the PDE analogue is given by $p(s,X(t))=\mu+X(t)$. In Section \ref{sec:ex}, we show that this does not hold stricto sensu, whereas the other analogues work well.

Note that this model shares also some link with IF models. Indeed, the formula for the intensity is close to the formula for the voltage  \eqref{IF}, with the same flavor for the synaptic integration term. The main difference comes from the fact that when the voltage reaches a certain threshold, it fires deterministically for the IF model, whereas the higher the intensity, the more likely is the spike for the Hawkes model, but without certainty. In this sense Hawkes models seem closer to (PPS) since as we discussed before, the term $p(s,X(t))$ is closer to a hazard rate and never imposes deterministically the presence of a spike.

To model inhibition (see \cite{ieee} for instance), one can use functions $h$ that may take negative values and in this case
$\lambda(t,\mathcal{F}_{t-}^{N})=\left(\mu + \int_{-\infty}^{t-} h(t-u) N(du)\right)_+,$
which should correspond to $p(s,X(t))=\left(\mu+X(t)\right)_+$. 
Another possibility is 
$\lambda(t,\mathcal{F}_{t-}^{N})=\exp\left(\mu + \int_{-\infty}^{t-} h(t-u) N(du)\right),$
which is inspired by the generalized linear model as used by \cite{Pillow} and which should correspond to $p(s,X(t))=\exp\left(\mu+X(t)\right)$. 

Note finally that Hawkes models in Neuroscience (and their variant) are usually multivariate meaning that they model interaction between spike trains thanks to interaction functions between point processes, each process representing a neuron. To keep the present analogy as simple as possible, we do not deal with  those multivariate models in the present article. Some open questions in this direction are presented in conclusion.

\subsection{Ogata's thinning algorithm}
\label{sec:Ogata:Thinning}
To turn the analogy between $p(s,X(t))$ and $\lambda(t,\mathcal{F}_{t-}^{N})$  into a rigorous result on the  PDE level, we need to understand the intrinsic dynamics of the point process. This dynamics is 
often not
explicitly described in the literature (see 
e.g. the reference  book by Br\'emaud \cite{Bre}) because martingale theory provides a nice mathematical setting in which one can perform all the computations. However, when one wants to simulate point processes based on the knowledge of their intensity, there is indeed a dynamics that is required to obtain a practical algorithm. This method has been described at first by Lewis in the Poisson setting \cite{Lewis_Simul} and generalized by Ogata in \cite{ogatathin}. If there is a sketch of proof in \cite{ogatathin}, we have been unable to find any complete mathematical proof of this construction in the literature and we propose a full and mathematically complete version of this proof  with minimal assumptions in the Appendix \ref{sec:Thinning}. 
%
Let us just informally describe here, how this construction works.

The principle consists in assuming that is given an external homogeneous Poisson process $\Pi$ of intensity 1 in $\R_+^2$ and with associated point measure $\Pi\left(\dd t,\dd x\right) =\sum_{(T,V)\in {\Pi}}\delta_{\left(T,V\right)}(\dd t,\dd x)$.  This means in particular that
\begin{equation}\label{Pi}
\esp{\Pi(\dd t,\dd x)}=\dd t \ \dd x.
\end{equation}
Once a realisation of $N_-$ fixed, which implies that $\mathcal{F}_0^N$ is known and which can be seen as an initial condition for the dynamics, the construction of the process $N$ on $\R_+$ only depends on $\Pi$.

More precisely, if we know the intensity $\lambda(t,\mathcal{F}_{t-}^{N})$ in the sense of the "recipe" that explicitly depends on $t$ and $N\cap(-\infty,t)$, then once a realisation of $\Pi$ and of $N_-$ is fixed, the dynamics to build a point process $N$  with intensity $\lambda(t,\mathcal{F}_{t-}^{N})$ for $t\in\R_+$ is purely deterministic. It consists (see also Figure \ref{fig:thin}) in successively projecting
on the abscissa axis  the points that are 
below the graph of $\lambda(t,\mathcal{F}_{t-}^{N})$. Note that
{a point projection}
may change the shape of $\lambda(t,\mathcal{F}_{t-}^{N})$, just after the projection. Therefore 
the graph of $\lambda(t,\mathcal{F}_{t-}^{N})$ evolves thanks to the realization of $\Pi$. For a more mathematical description, see Theorem~\ref{thm:Thinning:R2} in the 
Appendix  \ref{sec:Thinning}.
 Note in particular that the construction ends on any finite interval $\left[ 0,T \right]$ a.s. if $\left(\mathcal{A}^{1,a.s}_{\lambda,loc}\right)$ holds.

Then the point process $N$, result of Ogata's thinning, is given by the union of $N_-$ on $\R_-$ and the projected points on $\R_+$. It admits the desired intensity 
$\lambda(t,\mathcal{F}_{t-}^{N})$ on $\mathbb{R}_{+}$. Moreover, the point measure can be represented by
\begin{equation}\label{defthin}
{\bf 1}_{t>0} \ N(\dd t)=\sum_{
\begin{subarray}{c}(T,X)\in \Pi~/\\ 
X \leq\lambda\left(T,\mathcal{F}_{T-}^{N} \right)
\end{subarray}}
\delta_{T}(\dd t)=
\left(\int_{x=0}^{\lambda \left(t,\mathcal{F}_{t-}^{N}\right)}\Pi \left(\dd t,\dd x\right)\right).
\end{equation}
\noindent NB: The last equality comes from the following convention. If $\delta_{(c,d)}$ is a Dirac mass in $(c,d)\in\R_+^2$, then
$\int_{x=a}^b \delta_{(c,d)}(\dd t,\dd x)$, as a distribution in $t$, is $\delta_{c}(\dd t)$ if $d\in[a,b]$ and 0 otherwise.
\begin{figure}[h!]
\begin{center}
\includegraphics[scale=0.8]{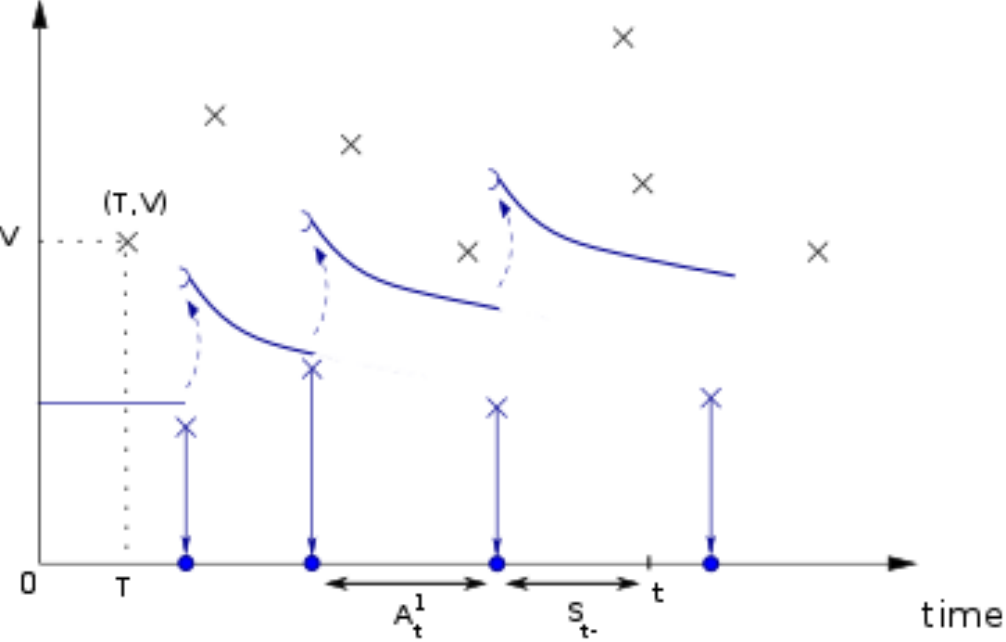}\\
~\vspace{-0.5cm}\\
\caption{\label{fig:thin} Example of Ogata's thinning algorithm on a linear Hawkes process with interaction function $h(u)=e^{-u}$ and no point before 0 (i.e. $N_-=\emptyset$). The crosses represent a realization of $\Pi$, Poisson process of intensity 1 on $\mathbb{R}_+^2$. The blue piecewise continuous line represents the intensity $\lambda(t,\mathcal{F}_{t-}^{N})$, which starts in 0 with value $\mu$ and then jumps each time a point of $\Pi$ is present underneath it.   The resulting Hawkes process (with intensity $\lambda(t,\mathcal{F}_{t-}^{N})$) is given by the blue circles. Age $S_{t-}$ at time $t$ and the quantity $A^1_t$ are also represented.}
~\vspace{-1cm}
\end{center}
\end{figure}

\section{\label{PDEpoint}From point processes to PDE}

Let us now present our main results. Informally, we want to describe the evolution of the distribution in $s$ of the age $S_{t}$ according to the time $t$. Note that at fixed time $t$, $S_{t-}=S_{t}$ a.s. and therefore it is the same as the distribution of $S_{t-}$. We prefer to study $S_{t-}$ since its predictability, i.e. its dependence in $N\cap(-\infty, t)$, makes all definitions proper from a microscopic/random point of view. 
Microscopically, the interest lies in the evolution of  $\delta_{S_{t-}}(\dd s)$ as a random measure.
But it should also be seen as a distribution in time, for equations like (PPS) to make sense.
Therefore, 
we need to go from a distribution only in $s$ to a 
distribution in both $s$ and $t$. Then one can either focus on the
 microscopic 
level, where 
the realisation of $\Pi$ in Ogata's thinning construction is fixed
or focus on the expectation of such a distribution. 

\subsection{\label{clean}A clean setting for bivariate distributions in age and time}

In order to obtain, from a point process,  (PPS) system we 
need to define bivariate distributions in $s$ and $t$ and  marginals 
(at least in $s$), in such a way that weak solutions of (PPS) are 
correctly defined.
Since we want to 
possibly
consider more than two variables for generalized 
Wold processes, we consider the following definitions.

In the following, $<\varphi,\nu>$ denotes the integral of the integrable function $\varphi$ with respect to the measure $\nu$.

Let $k\in \N$. For every bounded measurable function $\varphi$ of $(t,s,a_1,...,a_k)\in\R_+^{k+2}$, one can define 
$$\varphi^{(1)}_t(s,a_1,...,a_k)=\varphi(t,s,a_1,...,a_k) \quad \mbox{and} \quad \varphi^{(2)}_s(t,a_1,...,a_k)=\varphi(t,s,a_1,...,a_k).$$

Let us now define two sets of regularities for $\varphi$.\\
$\mathcal{M}_{c,b}(\R^{k+2}_+) 
\,
\textrm{\begin{tabular}{|l} The function $\varphi$ belongs to $\mathcal{M}_{c,b}(\R^{k+2}_+)$ if and only if  \\ $\quad\bullet$ $\varphi$ is a measurable bounded function, \\ $\quad\bullet$ there exists $T>0$ such that for all $t>T$, $\varphi^{(1)}_t=0$. \end{tabular}}$\\

\

\noindent$C_{c,b}^{\infty}(\R^{k+2}_+) 
\,
 \textrm{\begin{tabular}{|l} The function $\varphi$ belongs to $C_{c,b}^{\infty}(\R^{k+2}_+)$  if and only if \\ $\quad\bullet$ $\varphi$ is continuous, uniformly bounded, \\ $\quad\bullet$ $\varphi$ has  uniformly bounded derivatives of every order,\\ $\quad\bullet$ there exists $T>0$ such that for all $t>T$, $\varphi^{(1)}_t=0$. 
 \end{tabular}}$\\

Let $\left(\nu_{1}^{t}\right)_{t\geq 0}$ be a (measurable w.r.t. $t$) family
of positive measures on $\mathbb{R}_{+}^{k+1}$, and $\left(\nu_{2}^{s}\right)_{s\geq 0}$ be a (measurable
w.r.t. $s$) family of positive measures $\mathbb{R}_{+}^{k+1}$.
Those families satisfy the Fubini property if 
\\
\vspace{0.2cm}
\noindent$
\left(\mathcal{P}_{Fubini}\right)
\, \textrm{\begin{tabular}{|l}  for any $\varphi\in \mathcal{M}_{c,b}(\R_+^{k+2})$, 
$
\quad \int\langle \varphi^{(1)}_t,\nu_{1}^{t}\rangle \dd t=\int\langle\varphi^{(2)}_s, \nu_{2}^{s}\rangle \dd s.
$
\end{tabular}}$

In this case, one can define $\nu$, measure on $\R^{k+2}_+$, by the unique measure on $\R^{k+2}_+$ such that for any test function $\varphi$ in $\mathcal{M}_{c,b}(\R^{k+2}_+)$,
$$<\varphi,\nu>= \int\langle \varphi^{(1)}_t,\nu_{1}^{t}\rangle \dd t=\int\langle\varphi^{(2)}_s, \nu_{2}^{s}\rangle \dd s.$$
To simplify notations, 
for any such measure  $\nu(t,\dd s,\dd a_1,...,\dd a_k)$, we define
$$\nu(t,\dd s,\dd a_1,...,\dd a_k)= \nu_{1}^{t}(\dd s,\dd a_1,...,\dd a_k),  \quad \nu(\dd t,s,\dd a_1,...,\dd a_k)= \nu_{2}^{s}(\dd t,\dd a_1,...,\dd a_k).$$ 

In the sequel, we need in particular a measure on $\R_+^2$, $\eta_x$, defined for any real $x$ by its marginals that satisfy $\left(\mathcal{P}_{Fubini}\right)$ as follows
\begin{equation}\label{etax}\forall \, t,s \geq 0,\qquad  \eta_x(t,\dd s)= \delta_{t-x}(\dd s) {\bf 1}_{t-x>0}\quad \mbox{and}\quad \eta_x(\dd t,s)= \delta_{s+x}(\dd t){\bf 1}_{s\geq 0}.
\end{equation}
 It represents a Dirac mass "travelling" on the positive diagonal originated in $(x,0)$.

\subsection{The microscopic construction of a random PDE}

For a fixed realization of $\Pi$, we therefore want to define a random distribution $U(\dd t,\dd s)$ 
in terms of its marginals, thanks to $\left(\mathcal{P}_{Fubini}\right)$, such that, $U(t,\dd s)$ represents the distribution at time $t > 0$ of the age $S_{t-}$, i.e. 
\begin{equation}\label{upt}
\forall\, t> 0, \quad U(t,\dd s)=\delta_{S_{t-}}(\dd s)
\end{equation}
and satisfies similar equations as (PPS).
This is done in the following proposition. 
\begin{prop}\label{prop:Microscopic:System}
Let $\Pi$, $\mathcal{F}^N_{0}$ and an intensity $\left( \lambda(t,\mathcal{F}_{t-}^{N}) \right)_{t> 0}$ be given as in Section \ref{sec:Ogata:Thinning} and satisfying $\left(\mathcal{A}_{T_0}\right)$ and $\left(\mathcal{A}^{\mathbb{L}^1,a.s.}_{\lambda, loc}\right)$. On the event $\Omega$ of probability 1, where Ogata's thinning is well defined, let $N$ be the point process on $\mathbb{R}$ that is constructed thanks to Ogata's thinning with associated predictable age process $(S_{t-})_{t> 0}$ and whose points are denoted $\left( T_{i} \right)_{i\in \Z}$.
Let the (random) measure $U$ and its corresponding marginals be defined by
\begin{equation}\label{eq:def:Upsilon}
U\left(\dd t,\dd s\right)=\sum_{i=0}^{+\infty} \eta_{T_i}(\dd t,\dd s) \ \mathds{1}_{0\leq t\leq T_{i+1}}.
\end{equation}
Then, on $\Omega$, $U$
satisfies $\left(\mathcal{P}_{Fubini}\right)$ and $U(t,\dd s)=\delta_{S_{t-}}(\dd s)$. Moreover, on $\Omega$, $U$ is a solution in the weak sense of the following system 
 
\begin{gather}\label{eq:edp:micro}
\f{\partial}{\p {t}}U\left(\dd t,\dd s\right) +\f{\p}{\partial s} U\left(\dd t,\dd s\right) + \left(\int_{x=0}^{\lambda\left(t,\mathcal{F}_{t-}^{N}\right)}\Pi\left(\dd t,\dd x\right)\right) U\left(t,\dd s\right)=0,\\\label{eq:edp:micro:bound}
U\left(\dd t,0\right)=\int_{s\in \mathbb{R}_{+}} \left(\int_{x=0}^{\lambda\left(t,\mathcal{F}_{t-}^{N}\right)}\Pi\left(\dd t,\dd x\right)\right) U\left(t,\dd s\right)+\delta_0(\dd t)\1_{T_0=0}, \\
\label{eq:edp:micro:init}
U\left(0,\dd s\right)=\delta_{-T_0}(\dd s)\1_{T_0<0}=U^{in}(\dd s)\1_{s>0},
\end{gather}
where $U^{in}(ds)=\delta_{-T_0}(ds)$.
The weak sense means that 
 for any 
 $\varphi \in C^{\infty}_{c,b}(\mathbb{R}_+^{2})$, 
\begin{align}
&\!\!\!\!\!\!\int_{{\R_+\times \R_+}}\left(\frac{\partial}{\partial {t}}\varphi\left(t,s\right)+\f{\partial}{\partial {s}}\varphi\left(t,s\right)\right)
U\left(\dd t,\dd s\right) +
\nonumber
\\
&~\int_{{\R_+\times \R_+}}\!\![\varphi\left(t,0\right) - 
\varphi\left(t,s\right)]
\left(\int_{x=0}^{\lambda\left(t,\mathcal{F}_{t-}^{N}\right)}\Pi\left(\dd t,\dd x\right)\right)  U\left(t,\dd s\right) 
+ \varphi(0,-T_0)=0.
\label{eq:edp:micro:weak}
\end{align}
\end{prop}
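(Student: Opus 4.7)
The proposition asserts three things: (i) the Fubini property $(\mathcal{P}_{Fubini})$ for $U$; (ii) the identification $U(t,\dd s)=\delta_{S_{t-}}(\dd s)$ for $t>0$; and (iii) the weak identity \eqref{eq:edp:micro:weak}. All three rest on the same geometric picture: $U$ is a locally finite sum of ``travelling Diracs'' $\eta_{T_i}$ supported on the characteristic line $s=t-T_i$, truncated to the inter-spike strip $\{T_i\leq t\leq T_{i+1}\}$. Assumption $(\mathcal{A}^{\mathbb{L}^1,a.s.}_{\lambda, loc})$ forces non-explosion of $N$ on $\Omega$, so any test function $\varphi\in C_{c,b}^{\infty}(\R_+^{2})$ with time-support in $[0,T]$ sees only finitely many indices $i$, which legitimises all the formal manipulations below.

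For (i) and (ii), each $\eta_x$ satisfies $(\mathcal{P}_{Fubini})$ by construction \eqref{etax}, and this property is preserved under restriction to a measurable set in $t$; summing over the (a.s.\ locally finite) family $\{\eta_{T_i}\1_{0\leq t\leq T_{i+1}}\}_{i\geq 0}$ yields $(\mathcal{P}_{Fubini})$ for $U$. Now fix $t>0$ and inspect the $t$-section: $\eta_{T_i}(t,\dd s)=\delta_{t-T_i}(\dd s)\1_{t-T_i>0}$ restricted to $\{t\leq T_{i+1}\}$ is nonzero for exactly one index, namely $i=N_{t-}$, and on that strip $t-T_i=t-T_{N_{t-}}=S_{t-}$, which gives (ii).

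For (iii), inject $U$ into \eqref{eq:edp:micro:weak}. The first integral decomposes as a sum over $i\geq 0$ of contributions on the strips $\{T_i<t\leq T_{i+1}\}$ (with strip $i=0$ starting at $\max(0,T_0)=0$ since $T_0\leq 0$), and on each strip the chain rule gives $(\partial_t+\partial_s)\varphi(t,t-T_i)=\tfrac{\dd}{\dd t}\varphi(t,t-T_i)$; the fundamental theorem of calculus and a telescoping then produce
\[
-\varphi(0,-T_0)+\sum_{i\geq 1}\bigl[\varphi(T_i,T_i-T_{i-1})-\varphi(T_i,0)\bigr].
\]
For the second integral, Ogata's thinning formula \eqref{defthin} identifies $\int_{x=0}^{\lambda(t,\mathcal{F}_{t-}^{N})}\Pi(\dd t,\dd x)$ with $\sum_{i\geq 1}\delta_{T_i}(\dd t)$, and the same one-index-contributes argument as in (ii) yields $U(T_i,\dd s)=\delta_{T_i-T_{i-1}}(\dd s)$ for $i\geq 1$. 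Hence the second integral equals $\sum_{i\geq 1}\bigl[\varphi(T_i,0)-\varphi(T_i,T_i-T_{i-1})\bigr]$, which cancels the telescoping sum and leaves $-\varphi(0,-T_0)$, exactly balancing the final term $+\varphi(0,-T_0)$.

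The main obstacle is not any single computation but the bookkeeping around the initial spike $T_0$: one must separately handle $T_0<0$ and $T_0=0$ (the latter accounting for the extra $\delta_0(\dd t)\1_{T_0=0}$ in \eqref{eq:edp:micro:bound}), carefully read off the $t$-sections of $U$ at the spike times themselves (which matter because $N(\dd t)$ charges precisely those points), and check that the strict/non-strict inequalities in \eqref{etax} fit together so that the telescoping closes without a remainder. Non-explosion reduces each of these to a finite verification on every trajectory in $\Omega$.
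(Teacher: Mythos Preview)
Your proof is correct and follows essentially the same approach as the paper's own proof: identify the $t$-section of $U$ via the unique index $i=N_{t-}$, verify $(\mathcal{P}_{Fubini})$ by local finiteness of the sum (non-explosion), then compute the transport term strip-by-strip via the chain rule $(\partial_t+\partial_s)\varphi(t,t-T_i)=\tfrac{\dd}{\dd t}\varphi(t,t-T_i)$ and match it against the thinning term using \eqref{defthin} and $U(T_i,\dd s)=\delta_{T_i-T_{i-1}}(\dd s)$. The only cosmetic difference is an index shift: the paper sums over $i\geq 0$ with terms $\varphi(T_{i+1},T_{i+1}-T_i)$, whereas you sum over $i\geq 1$ with $\varphi(T_i,T_i-T_{i-1})$, which is the same after relabelling.
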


\noindent The proof of Proposition  \ref{prop:Microscopic:System} is included in Appendix \ref{proofProp:Microscopic:System}. Note also that thanks to the Fubini property, the boundary condition~\eqref{eq:edp:micro:bound} is satisfied also in a strong sense.

System \eqref{eq:edp:micro}--\eqref{eq:edp:micro:init} is
a random microscopic version of (PPS) if $T_0<0$, where $n(s,t)$ the density of the age at time $t$ is replaced by $U(t,\cdot)=\delta_{S_{t-}}$, the Dirac mass in the age at time $t$. 
The assumption $T_0<0$ is satisfied \emph{a.s.} if $T_0$ has a density, but this may not be the case for instance if the experimental device gives an impulse at time zero (e.g.~\cite{Pouzat09} studied Peristimulus time histograms (PSTH),  where the  spike trains are locked on a stimulus given at time 0).

This result may seem rather poor from a PDE point of view. However, since 
this equation is satisfied at a microscopic level, we are able to define correctly all the important quantities at a macroscopic level.  Indeed, the analogy between $p(s,X(t))$ and $\lambda(t,\mathcal{F}_{t-}^{N})$ is actually on the random microscopic scale a replacement of $p(s,X(t))$ by $\int_{x=0}^{\lambda\left(t,\mathcal{F}_{t-}^{N}\right)}\Pi\left(\dd t,\dd x\right)$, 
whose expectancy given the 
past is, heuristically speaking, equal
 to $\lambda\left(t,\mathcal{F}_{t-}^{N}\right)$ because the mean behaviour of $\Pi$ is given by the Lebesgue measure (see \eqref{Pi}).
Thus, the main question at this stage is : can we make this argument valid by taking the expectation of $U$? 
This is addressed in the next section.

 The property $\left(\mathcal{P}_{Fubini}\right)$ and the quantities $\eta_{T_i}$ mainly allows to define $U(\dd t,0)$ as well as $U(t,\dd s)$. As expected, with this definition, \eqref{upt} holds as well as
\begin{equation}\label{eq:relation:U:N}
U\left(\dd t,0 \right) = \mathds{1}_{t\geq 0} \ N(\dd t),
\end{equation}
i.e. the spiking measure (the measure in time with age 0) is the point measure.

Note also that the initial condition is given by  $\mathcal{F}^N_{0}$, 
since $\mathcal{F}^N_{0}$ fixes in particular the value of 
$T_{0}$ and $\left(\mathcal{A}_{T_0}\right)$ is required to give 
sense to the age at time $0$. 
To understand the initial condition, remark that
 if $T_{0}=0$, then $U(0,\cdot)=0\neq \lim_{t\rightarrow 0^{+}} U(t,\cdot) = \delta_{0}$ by definitions of $\eta_{T_i}$
 but that
 if $T_{0}<0$, $U(0,\cdot)= \lim_{t\rightarrow 0^{+}} U(t,\cdot)=\delta_{-T_{0}}$.

The 
conservativeness (i.e. for all $t\geq0$, $\int_0^\infty U(t,\dd s)=1$) is obtained  by using (a sequence of test functions converging to) $\varphi=\1_{t\leq T}.$ 

Proposition  \ref{prop:Microscopic:System} shows that the (random) measure $U$,
defined by \eqref{eq:def:Upsilon}, in terms of a given  point process $N$, 
is a weak solution of System \eqref{eq:edp:micro}-\eqref{eq:edp:micro:init}. 
The study of the well-posedness of this system could be addressed following, for instance, the ideas given in \cite{canizo2013measure}. In this case
$U$ should be the unique solution of system \eqref{eq:edp:micro}--\eqref{eq:edp:micro:init}.

As last comment about Proposition  \ref{prop:Microscopic:System}, we 
analyse the particular case of the linear Hawkes process, in the following
remark.
\begin{remark}
In the 
 linear Hawkes process, 
$\lambda(t,\mathcal{F}_{t-}^{N})=\mu + \int_{-\infty}^{t-} h(t-z) N(\dd z)$.
Thanks to~\eqref{eq:relation:U:N} one decomposes the intensity into a term given by the initial condition plus a term given by the measure $U$, 
$
\lambda(t,\mathcal{F}_{t-}^{N})=\mu + 
F_{0}(t) + 
\int_0^{t-} h(t-z) U(\dd z,0),
$
where $F_{0}(t)=\int_{-\infty}^{0} h(t-z) N_{-}(\dd z)$ is $(\mathcal{F}^{N}_{0})$-measurable and considered as an initial condition. Hence,
\eqref{eq:edp:micro}--\eqref{eq:edp:micro:init} 
 becomes a closed system in the sense that $\lambda(t,\mathcal{F}_{t-}^{N})$ is now an explicit function of the solution of the system. This is not true in general.
\end{remark}
\subsection{The PDE satisfied in expectation}
In this section, we want to find the system satisfied by the expectation of the random measure $U$. 
First, we need to give a proper definition of such an object. The construction is based on the construction of $U$ and is summarized in the following 
proposition.
(The proofs of all the results of this subsection 
are in 
 Appendix \ref{proofProp:Microscopic:System}).
\begin{prop}\label{prop:Fubini:nu}
Let $\Pi$, $\mathcal{F}^N_{0}$ and an intensity $\left( \lambda(t,\mathcal{F}_{t-}^{N}) \right)_{t> 0}$ be given as in Section \ref{sec:Ogata:Thinning} and satisfying $\left(\mathcal{A}_{T_0}\right)$ and  $\left(\mathcal{A}^{\mathbb{L}^1,exp}_{\lambda,loc}\right)$. Let $N$ be the process resulting of Ogata's thinning
and let $U$ be the random measure defined by~\eqref{eq:def:Upsilon}. Let $\mathbb{E}$ denote the expectation with respect to $\Pi$ and $\mathcal{F}_0^N$.

Then for any test function 
$\varphi$ 
in $\mathcal{M}_{c,b}(\mathbb{R}_+^{2})$,
 $\mathbb{E}\left[ \int 
\varphi(t,s) U(t,\dd s) \right]$ and 
$\mathbb{E}\left[ \int 
\varphi(t,s) U(\dd t,s) \right]$ are finite and one can define
 $u(t,\dd s)$ and $u(\dd t,s)$ by  
$$
\begin{cases}
\displaystyle \forall\, t\geq 0,\quad \int 
\varphi(t,s) u(t,\dd s) = \mathbb{E}\left[ \int 
\varphi(t,s) U(t,\dd s) \right],\\
\forall\, s\geq 0,\quad \int 
\displaystyle \varphi(t,s) u(\dd t,s) = \mathbb{E}\left[ \int 
\varphi(t,s) U(\dd t,s) \right].
\end{cases}
$$
Moreover, $u(t,\dd s)$ and $u(\dd t,s)$ satisfy  
$\left(\mathcal{P}_{Fubini}\right)$ and one can define $u(\dd t,\dd s)=u(t,\dd s) 
\dd t=u(\dd t,s) \dd s$ on $\R_+^2$, such that for any test function 
$\varphi$ 
in $\mathcal{M}_{c,b}(\mathbb{R}_+^{2})$,
$$
\int 
\varphi(t,s) u(\dd t,\dd s) = \mathbb{E}\left[ \int 
\varphi(t,s) U(\dd t,\dd s) \right],
$$
quantity which is finite.
\end{prop}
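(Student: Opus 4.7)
The plan is to transfer the pathwise Fubini property of $U$ established in Proposition~\ref{prop:Microscopic:System} to its expectation, the main preliminary being an integrability estimate strong enough to justify all exchanges between the expectation over $\Omega$ and the integrals in $(t,s)$.

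First I would prove the finiteness of the two expectations. From the definition~\eqref{eq:def:Upsilon} of $U$ combined with~\eqref{etax}, one has, pathwise, for any $\varphi \in \mathcal{M}_{c,b}(\R_+^2)$ with $\varphi^{(1)}_t=0$ for $t>T$,
$$\int \varphi(t,s)\, U(\dd t,\dd s) = \sum_{i\geq 0}\int_{\max(0,T_i)}^{\min(T_{i+1},T)} \varphi(t,t-T_i)\,\dd t,$$
which in absolute value is bounded by $T\, \|\varphi\|_\infty\, (N_T+1)$. By Assumption $\left(\mathcal{A}^{\mathbb{L}^1,exp}_{\lambda,loc}\right)$ and the compensator identity $\mathbb{E}[N_T]=\mathbb{E}\bigl[\int_0^T \lambda(t,\mathcal{F}^N_{t-})\,\dd t\bigr]$, the quantity $\mathbb{E}[N_T]$ is finite, hence the double integrals are absolutely integrable against $\mathbb{P}\otimes \dd t \otimes \dd s$. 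The same bound controls $\mathbb{E}[\int \varphi(t,s)\,U(t,\dd s)]$ (which, using $U(t,\dd s)=\delta_{S_{t-}}$, reduces to $\int_0^T \mathbb{E}[\varphi(t,S_{t-})]\,\dd t$) and $\mathbb{E}[\int \varphi(t,s)\,U(\dd t,s)]$.

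Next I would construct the marginal measures. For each $t\geq 0$, $U(t,\dd s)=\delta_{S_{t-}}(\dd s)$ is a random probability measure, so $u(t,\dd s)$ is naturally defined as the law of $S_{t-}$, and measurability in $t$ follows from the fact that $t\mapsto S_{t-}$ is predictable (hence jointly measurable). A symmetric argument working on the $s$-slices of $U(\dd t,s)$ yields a measurable family $u(\dd t,s)$. The content of the proposition is then that these two families are the marginals of a single measure on $\R_+^2$. To see this, I would apply the classical Fubini--Tonelli theorem on $\Omega\times\R_+^2$ to the non-negative integrand $|\varphi|$, legitimated by the dominating bound of the previous step. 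This yields
$$\mathbb{E}\Bigl[\int \langle \varphi^{(1)}_t,U_1^t\rangle\, \dd t\Bigr] = \int \langle \varphi^{(1)}_t,u_1^t\rangle\, \dd t,$$
and the analogous identity for the $s$-marginal. Combining these with the pathwise property $\left(\mathcal{P}_{Fubini}\right)$ for $U$ proved in Proposition~\ref{prop:Microscopic:System} and taking expectation gives $\left(\mathcal{P}_{Fubini}\right)$ for $u$. A monotone class argument extends the identity from non-negative bounded to all of $\mathcal{M}_{c,b}(\R_+^2)$, and the common value of the two iterated integrals defines, by a standard $\pi$-$\lambda$ uniqueness argument applied on rectangles, a unique measure $u(\dd t,\dd s)$ on $\R_+^2$ with the announced marginal structure.

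The main obstacle is the first step: one must verify that the total variation of the random measure $U$ on every time-bounded strip $[0,T]\times\R_+$ is dominated by $T(N_T+1)$, so that Assumption $\left(\mathcal{A}^{\mathbb{L}^1,exp}_{\lambda,loc}\right)$, rather than a stronger pathwise integrability, suffices to exchange expectation and integrals. Once this tightness-type estimate is in place, all remaining steps are routine measure-theoretic manipulations and the proposition follows at once from the pathwise statement of Proposition~\ref{prop:Microscopic:System}.
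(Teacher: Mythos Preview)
Your proposal is correct and follows essentially the same route as the paper: bound the total mass of $U$ on $[0,T]\times\R_+$ by a quantity of order $N_T+1$, invoke $\left(\mathcal{A}^{\mathbb{L}^1,exp}_{\lambda,loc}\right)$ to get $\mathbb{E}[N_T]<\infty$, and then use Fubini--Tonelli on $\Omega\times\R_+^2$ together with the pathwise $\left(\mathcal{P}_{Fubini}\right)$ for $U$ from Proposition~\ref{prop:Microscopic:System} to transfer it to $u$. The only cosmetic difference is that the paper treats the two marginals separately (the $t$-slice being trivial since $U(t,\cdot)$ is a probability, the $s$-slice requiring the explicit decomposition $\|\varphi\|_{L^\infty}\bigl(\mathds{1}_{-T_0\le s\le T-T_0}+N_T\,\mathds{1}_{0\le s\le T}\bigr)$), whereas you subsume both under the single bound $T\|\varphi\|_\infty(N_T+1)$; the monotone class and $\pi$--$\lambda$ remarks at the end are not needed since the bound already holds for all $\varphi\in\mathcal{M}_{c,b}(\R_+^2)$ directly.
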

In particular, since
$\int 
\varphi(t,s) u(t,\dd s) = \mathbb{E}\left[ \int 
\varphi(t,s) U(t,\dd s) \right]=\mathbb{E}\left[
\varphi(t,S_{t-})\right]$,
$u(t,\cdot)$ is therefore the distribution of $S_{t-}$, the (predictable version of the) age at time $t$. Now let us show that as expected, $u$ satisfies a system similar to (PPS).

\begin{thm}\label{prop: Expectation System}
Let $\Pi$, $\mathcal{F}^N_{0}$ and an intensity $\left( \lambda(t,\mathcal{F}_{t-}^{N}) \right)_{t> 0}$ be given as in Section \ref{sec:Ogata:Thinning} and  satisfying $\left(\mathcal{A}_{T_0}\right)$ and  $\left(\mathcal{A}^{\mathbb{L}^1,exp}_{\lambda,loc}\right)$. 
If $N$ is the process resulting of Ogata's thinning, 
$(S_{t-})_{t> 0}$ its associated predictable age process,  
$U$  its associated random measure, defined by  \eqref{eq:def:Upsilon},
and $u$ its associated  mean  measure,  defined in Proposition~\ref{prop:Fubini:nu}, then,
there exists a bivariate measurable function $\rho_{\lambda,\mathbb{P}_{0}}$ satisfying
\begin{equation}\label{eq:assumptions:rho}
\left\{
\begin{aligned}
& \forall\, T\geq 0,\, \int_{0}^{T} \int_{s} \rho_{\lambda,\mathbb{P}_{0}}(t,s) u(\dd t,\dd s) <\infty, \\
& \rho_{\lambda,\mathbb{P}_{0}}(t,s)=\mathbb{E}\left[ \lambda\left(t,\mathcal{F}_{t-}^{N}\right) \middle|S_{t-}=s \right] \quad  u(\dd t,\dd s) \emph{-
a.e}
\end{aligned}
\right.
\end{equation}
and such that $u$ is solution in the weak sense of the following system 
\begin{gather}\label{eq:edp:expect}
\f{\partial}{\p t}u\left(\dd t,\dd s\right)+\f{\p}{\partial s}u\left(\dd t,\dd s\right) +\rho_{\lambda,\mathbb{P}_{0}}(t,s)u\left(\dd t,\dd s\right)=0,\\
u\left(\dd t,0\right)=
\int_{s\in \mathbb{R}_+}\rho_{\lambda,\mathbb{P}_{0}}(t,s) u\left(t,\dd s\right)\,\dd t +\delta_0(\dd t)u^{in}(\{0\}),
\label{eq:edp:expect:bound} \\
u\left(0,\dd s\right)=u^{in}(\dd s)\1_{s>0},
\label{eq:edp:expect:init}
\end{gather}
where $u^{in}$ is the law of $-T_0.$ The weak sense means here that for any 
$\varphi\in C^{\infty}_{c,b}(\mathbb{R}_+^{2})$, 
\begin{align}\label{eq:edp:expect:weak}
&\!\!\!\!\!\!\!\int_{\R_+\times \R_+}\left(\f{\p}{\partial t}+\f{\p}{\partial s}\right) \varphi\left(t,s\right)u\left(\dd t,\dd s\right) + \nonumber \\
&~~~\int_{\R_+\times \R_+} [\varphi(t,0)-\varphi(t,s) ]\rho_{\lambda,\mathbb{P}_{0}}(t,s) u(\dd t,\dd s)  
+\int_{\R_+} \varphi(0,s)u^{in}(\dd s)=0,
\end{align}
\end{thm}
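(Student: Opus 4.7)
The plan is to take the expectation (over $\Pi$ and $\mathcal{F}_0^N$) of the microscopic weak formulation \eqref{eq:edp:micro:weak} established in Proposition \ref{prop:Microscopic:System}, and match each resulting term with the corresponding term in \eqref{eq:edp:expect:weak}. For any $\varphi \in C_{c,b}^\infty(\R_+^2)$, the transport terms $\int(\partial_t+\partial_s)\varphi \, U(\dd t,\dd s)$ have expectation equal to $\int(\partial_t+\partial_s)\varphi \, u(\dd t,\dd s)$ by the very definition of $u$ in Proposition \ref{prop:Fubini:nu}, while the boundary/initial contribution $\varphi(0,-T_0)$ has expectation $\int_{\R_+}\varphi(0,s)\,u^{in}(\dd s)$ since $u^{in}$ is by construction the law of $-T_0$. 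The integrability needed for these exchanges follows from the compact support in $t$ of $\varphi$ and the assumption $(\mathcal{A}^{\mathbb{L}^1,exp}_{\lambda,loc})$.

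The crucial step is the expectation of the jump term
\[
\int_{\R_+\times\R_+}[\varphi(t,0)-\varphi(t,s)]\left(\int_{x=0}^{\lambda(t,\mathcal{F}_{t-}^{N})}\Pi(\dd t,\dd x)\right)U(t,\dd s).
\]
By the Ogata-thinning representation \eqref{defthin} and $U(t,\dd s)=\delta_{S_{t-}}(\dd s)$, this quantity equals $\int_0^\infty [\varphi(t,0)-\varphi(t,S_{t-})]\,N(\dd t)$. Since $t\mapsto\varphi(t,0)-\varphi(t,S_{t-})$ is predictable (the age process $S_{t-}$ being $\mathcal{F}_{t-}^N$-measurable), the standard martingale characterization of the intensity (Br\'emaud \cite{Bre}), valid under $(\mathcal{A}^{\mathbb{L}^1,exp}_{\lambda,loc})$, gives
\[
\mathbb{E}\!\left[\int_0^\infty [\varphi(t,0)-\varphi(t,S_{t-})]\,N(\dd t)\right]=\mathbb{E}\!\left[\int_0^\infty [\varphi(t,0)-\varphi(t,S_{t-})]\,\lambda(t,\mathcal{F}_{t-}^N)\,\dd t\right].
\]
To bring this into the form requested in \eqref{eq:edp:expect:weak}, I would condition on $S_{t-}$: namely, consider for each $T>0$ the finite positive measure
\[
A\ \longmapsto\ \mathbb{E}\!\left[\int_0^T \mathds{1}_{(t,S_{t-})\in A}\,\lambda(t,\mathcal{F}_{t-}^N)\,\dd t\right]
\]
on $[0,T]\times\R_+$, which is absolutely continuous with respect to $u(\dd t,\dd s)$ restricted to $[0,T]\times\R_+$ (this measure equals $\mathbb{E}[\int_0^T \mathds{1}_A(t,S_{t-})\,\dd t]=\int_A u(\dd t,\dd s)$ when the intensity is replaced by $1$). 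The Radon-Nikodym theorem then yields a measurable density $\rho_{\lambda,\mathbb{P}_0}(t,s)$ on $\R_+\times\R_+$ (using a covering by $[0,T_n]$ with $T_n\uparrow\infty$) satisfying both properties in \eqref{eq:assumptions:rho}; by construction the jump term rewrites as $\int[\varphi(t,0)-\varphi(t,s)]\rho_{\lambda,\mathbb{P}_0}(t,s)\,u(\dd t,\dd s)$. Reassembling all four terms yields \eqref{eq:edp:expect:weak}. The strong forms \eqref{eq:edp:expect}--\eqref{eq:edp:expect:init} then follow from the weak form together with the Fubini property of $u$, exactly as in Proposition \ref{prop:Microscopic:System} (the $\delta_0(\dd t)u^{in}(\{0\})$ boundary contribution comes from taking expectation of $\delta_0(\dd t)\mathds{1}_{T_0=0}$ in \eqref{eq:edp:micro:bound}).

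The main obstacle is the construction of $\rho_{\lambda,\mathbb{P}_0}$: one must produce a genuinely measurable function on $\R_+\times\R_+$ that represents the conditional expectation $\mathbb{E}[\lambda(t,\mathcal{F}_{t-}^N)\mid S_{t-}=s]$ simultaneously $u(\dd t,\dd s)$-almost everywhere, despite the fact that for each fixed $t$ the conditional expectation is only defined up to a $\mathbb{P}_{S_{t-}}$-null set. The Radon-Nikodym argument above handles this globally, and the local integrability in the first line of \eqref{eq:assumptions:rho} is just the reformulation of $(\mathcal{A}^{\mathbb{L}^1,exp}_{\lambda,loc})$ through Fubini, namely $\mathbb{E}[\int_0^T\lambda(t,\mathcal{F}_{t-}^N)\,\dd t]<\infty$. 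A minor secondary point is justifying all Fubini exchanges between $\mathbb{E}$ and the $(t,s)$ integrals, which is immediate since $\varphi$ is bounded with compact support in $t$ and $\rho_{\lambda,\mathbb{P}_0}$ is $u$-locally integrable.
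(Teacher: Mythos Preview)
Your proof is correct and follows the same overall strategy as the paper: take the expectation of the microscopic weak formulation \eqref{eq:edp:micro:weak}, identify the transport and initial terms directly via Proposition~\ref{prop:Fubini:nu}, and handle the jump term by rewriting it as $\int[\varphi(t,0)-\varphi(t,S_{t-})]N(\dd t)$, applying the martingale property of the intensity, and then conditioning on $S_{t-}$.

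The one genuine difference lies in how you construct the jointly measurable version of $\rho_{\lambda,\mathbb{P}_0}$. You use a global Radon--Nikodym argument on the product space: the measure $A\mapsto\mathbb{E}\bigl[\int_0^T\mathds{1}_A(t,S_{t-})\lambda(t,\mathcal{F}_{t-}^N)\,\dd t\bigr]$ is absolutely continuous with respect to $u$ on $[0,T]\times\R_+$, and the density is automatically Borel-measurable in $(t,s)$. The paper instead writes down an explicit candidate,
\[
\rho_{\lambda,\mathbb{P}_0}(t,s)=\liminf_{\varepsilon\downarrow 0}\frac{\mathbb{E}\bigl[\lambda(t,\mathcal{F}_{t-}^N)\mathds{1}_{|S_{t-}-s|\le\varepsilon}\bigr]}{\mathbb{P}(|S_{t-}-s|\le\varepsilon)},
\]
argues joint measurability from the progressiveness of $(\lambda,S_{t-})$, and then invokes a Besicovitch differentiation theorem at each fixed $t$ to identify it with the conditional expectation $u(t,\dd s)$-a.e. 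Your route is arguably cleaner (it avoids the differentiation theorem and the separate measurability argument), while the paper's route has the advantage of producing an explicit pointwise formula for $\rho_{\lambda,\mathbb{P}_0}$ rather than an abstract density defined only $u$-a.e.
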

Comparing this system to (PPS), one first sees that 
$n(\cdot,t)$, the density of the age at time $t$, 
 is replaced by the mean  measure $u(t,\cdot)$. If  $u^{in}\in L^1(\R_+)$ we have $u^{in}(\{0\})=0$ so we get an equation which is exactly of renewal type, as (PPS). In the general case where $u^{in}$ is only a probability measure, the difference with (PPS) lies in the term $\delta_0(\dd t)u^{in}(\{0\})$ in the boundary condition for $s=0$ and in the term $\1_{s>0}$ in the initial condition for $t=0.$ Both these extra terms are linked to the possibility for  the initial measure $u^{in}$ to charge zero. This possibility is not considered in~\cite{PPS1} - else, a similar extra term would be needed in the setting of~\cite{PPS1} as well. As said above in the comment of Proposition~\ref{prop:Microscopic:System}, we want to keep this term here since it models the case where there is a specific stimulus at time zero~\cite{Pouzat09}.  

In general and without more assumptions on $\lambda$, it is not clear 
that 
$u$ is not only a measure 
satisfying $\left(\mathcal{P}_{Fubini}\right)$ but also absolutely continuous 
wrt to $\dd t\ \dd s$ and 
that the equations can be satisfied in a strong sense.

Concerning 
 $p(s,X(t))$, which has always been thought of as the equivalent of $\lambda(t,\mathcal{F}_{t-}^{N})$, it is not replaced by $\lambda(t,\mathcal{F}_{t-}^{N})$, which would have no meaning in general since this is a random quantity, nor by $\esp{\lambda(t,\mathcal{F}_{t-}^{N})}$ which would have been a first possible guess; it is replaced by $\esp{\lambda(t,\mathcal{F}_{t-}^{N})|S_{t-}=s}$. Indeed intuitively, since
$$\mathbb{E}\left[\int_{x=0}^{\lambda\left(t,\mathcal{F}_{t-}^{N}\right)} \Pi\left(\dd t,\dd x\right) \middle| \mathcal{F}_{t-}^{N} \right]= \lambda\left(t,\mathcal{F}_{t-}^{N}\right)\dd t, $$
the corresponding weak term can be interpreted 
as, for any test function $\varphi$,

\begin{eqnarray*}
\mathbb{E}\left[\int\varphi\left(t,s\right) \left(\int_{x=0}^{\lambda\left(t,\mathcal{F}_{t-}^{N}\right)} \Pi\left(\dd t,\dd x\right)\right) U\left(t,\dd s\right)\right]& = &\mathbb{E}\left[\int\varphi\left(t,s\right) \lambda\left(t,\mathcal{F}_{t-}^{N}\right) \delta_{S_{t-}}(\dd s) \dd t\right]\\
&=& \int_t \mathbb{E}\left[\varphi\left(t,S_{t-}\right)\lambda\left(t,\mathcal{F}_{t-}^{N}\right)\right] \dd t\\
&=& \int_t \mathbb{E}\left[\varphi\left(t,S_{t-}\right) \esp{\lambda\left(t,\mathcal{F}_{t-}^{N}\right)|S_{t-}} \right] \dd t,
\end{eqnarray*}
which is exactly $\int \varphi(t,s)\rho_{\lambda,\mathbb{P}_0}(t,s) u(\dd t,\dd s).$

This conditional expectation makes dependencies particularly complex, but this also enables to derive equations even in non-Markovian setting (as Hawkes processes for instance, see Section \ref{sec:ex}). More explicitly, $\rho_{\lambda,\mathbb{P}_{0}}(t,s) $ is a function of the time $t$, of the age $s$, but it also depends on $\lambda$, the shape of the intensity of the underlying process and on the distribution of the initial condition $N_-$, that is $\mathbb{P}_0$. As 
explained in Section \ref{secPP}, it is both the knowledge of $\mathbb{P}_{0}$ and 
$\lambda$
that characterizes the distribution of the process and in general the conditional expectation cannot be reduced to something depending on less than that.  In Section \ref{sec:ex}, we discuss several examples of point processes where one can (or cannot) reduce the dependence.

 Note that here again, we can prove that the equation is conservative by taking (a sequence of functions converging to) $\varphi=\1_{t\leq T}$ as a test function.

A direct corollary of Theorem~\ref{prop: Expectation System} can be deduced thanks to the law of large numbers. This can be seen as the interpretation of (PPS) equation at a macroscopic level, when the population of neurons is i.i.d.. 

\begin{cor}\label{prop:Macroscopic:System}
Let $\left( N^{i} \right)_{i=1}^{\infty}$ be some i.i.d. point processes with intensity given by $\lambda(t,\mathcal{F}_{t-}^{N^{i}})$ on $(0,+\infty)$  satisfying $\left(\mathcal{A}^{\mathbb{L}^1,exp}_{\lambda,loc}\right)$ and associated predictable age processes $(S_{t-}^{i})_{t> 0}$. Suppose  furthermore that the distribution of $N^{1}$ on $(-\infty,0]$ is given by $\mathbb{P}_{0}$ which is such that $\mathbb{P}_{0}(N_{-}^{1}=\emptyset)=0$. 

 Then there exists a measure $u$ satisfying $\left(\mathcal{P}_{Fubini}\right)$, weak solution of Equations~\eqref{eq:edp:expect} and~\eqref{eq:edp:expect:bound}, 
with $\rho_{\lambda, \mathbb{P}_0}$ defined by
$$\rho_{\lambda, \mathbb{P}_0}(t,s)= \esp{\lambda\left(t,\mathcal{F}_{t-}^{N^1}\right)|S_{t-}^1=s}, \quad u(\dd t,\dd s)-a.e.$$
and with $u^{in}$ distribution of the age at time 0, such that  
for any $\varphi\in C^{\infty}_{c,b}(\mathbb{R}_+^{2})$
\begin{equation}\label{eq:convergence:LLN}
\forall\, t>0,\quad \int \varphi(t,s) \left( \frac{1}{n}\sum_{i=1}^{n} \delta_{S_{t}^{i}}(\dd s) \right) \xrightarrow[n\rightarrow \infty]{a.s.} \int \varphi(t,s) u(t,\dd s),
\end{equation}
\end{cor}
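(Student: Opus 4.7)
The plan is to combine Theorem~\ref{prop: Expectation System}, applied to the single process $N^{1}$, with Kolmogorov's strong law of large numbers for the i.i.d.\ family $(N^{i})_{i\geq 1}$. The first ingredient produces the measure $u$, the function $\rho_{\lambda,\mathbb{P}_{0}}$ and the weak PDE statement; the second ingredient is what delivers the empirical convergence~\eqref{eq:convergence:LLN}. No new analytic or martingale machinery should be required beyond what has already been set up for Theorem~\ref{prop: Expectation System}.

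For the PDE part I would apply Theorem~\ref{prop: Expectation System} to $N=N^{1}$. The hypothesis $\left(\mathcal{A}^{\mathbb{L}^1,exp}_{\lambda,loc}\right)$ is assumed in the corollary, and $\left(\mathcal{A}_{T_{0}}\right)$ follows from $\mathbb{P}_{0}(N_{-}^{1}=\emptyset)=0$, which forces $T_{0}>-\infty$ almost surely. The theorem then yields a bivariate measurable $\rho_{\lambda,\mathbb{P}_{0}}$ satisfying~\eqref{eq:assumptions:rho} together with a mean measure $u$ obeying $\left(\mathcal{P}_{Fubini}\right)$ that solves~\eqref{eq:edp:expect}--\eqref{eq:edp:expect:init} in the weak sense with the stated initial datum $u^{in}$. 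By Proposition~\ref{prop:Fubini:nu}, the slice $u(t,\cdot)$ is precisely the law of $S_{t-}^{1}$, so the conditional expectation formula for $\rho_{\lambda,\mathbb{P}_{0}}$ announced in the corollary coincides with the one produced by Theorem~\ref{prop: Expectation System}.

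For the convergence, I would fix $\varphi\in C^{\infty}_{c,b}(\mathbb{R}_{+}^{2})$ and $t>0$, observe that the i.i.d.\ structure of $(N^{i})_{i\geq 1}$ transfers to the real-valued bounded sequence $\bigl(\varphi(t,S_{t}^{i})\bigr)_{i\geq 1}$, and invoke the strong law of large numbers to obtain
\[
\frac{1}{n}\sum_{i=1}^{n}\varphi(t,S_{t}^{i})\xrightarrow[n\to\infty]{a.s.}\mathbb{E}\bigl[\varphi(t,S_{t}^{1})\bigr].
\]
Identifying the right-hand side with $\int\varphi(t,s)\,u(t,\dd s)$ then reduces to the almost sure equality $S_{t}^{1}=S_{t-}^{1}$ at the fixed deterministic time $t$, which holds because $\mathbb{P}(t\in N^{1})=0$: under $\left(\mathcal{A}^{\mathbb{L}^1,exp}_{\lambda,loc}\right)$ the compensator $\int_{0}^{\cdot}\lambda(u,\mathcal{F}_{u-}^{N^{1}})\,\dd u$ is absolutely continuous, so $N^{1}$ has no fixed atoms.

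The argument has essentially no hard step. The only micro-point worth writing out carefully, and which I would expect to be the main (minor) obstacle, is this last identification $S_{t}^{1}=S_{t-}^{1}$ a.s.\ at fixed $t$: it is what bridges the predictable quantity that appears naturally in the PDE analysis of Proposition~\ref{prop:Fubini:nu} with the non-predictable age $S_{t}^{i}$ that appears in the empirical measure of~\eqref{eq:convergence:LLN}. Once this is noted, the corollary is a direct packaging of Theorem~\ref{prop: Expectation System} and one application of the strong law.
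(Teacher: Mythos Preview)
Your proposal is correct and follows essentially the same route as the paper: invoke Theorem~\ref{prop: Expectation System} for $N^{1}$ to produce $u$, $\rho_{\lambda,\mathbb{P}_0}$ and the weak PDE, then apply the strong law of large numbers to the i.i.d.\ bounded sequence $\bigl(\varphi(t,S_t^i)\bigr)_i$. The only cosmetic difference is that the paper first invokes the Inversion Theorem (Proposition~\ref{prop:inversion:theorem}) to realize each $N^{i}$ as an Ogata thinning---this is what formally places $N^{1}$ in the setting of Theorem~\ref{prop: Expectation System}---and then phrases the SLLN via the microscopic measures $U^{i}(t,\cdot)=\delta_{S_{t-}^i}$, while you bypass this and handle the $S_t=S_{t-}$ identification explicitly; the two arguments are equivalent.
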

In particular, informally, the fraction of neurons at time $t$ with age in $[s,s+\dd s)$ in this i.i.d. population of neurons indeed tends to $u(t,\dd s)$.


\section{Application to the various  examples \label{sec:ex}}
Let us now apply these results to the examples presented in Section \ref{sec:examples}.

\subsection{When the intensity only depends on time and age}
\label{sec:renewal}
If $\lambda\left(t,\mathcal{F}_{t-}^{N}\right)=f\left(t,S_{t-}\right)$
(homogeneous and inhomogeneous Poisson processes and renewal processes
are particular examples)
 then the intuition giving that $p(s,X(t))$ is analogous to $\lambda\left(t,\mathcal{F}_{t-}^{N}\right)$ works. Let us assume that $f(t,s)\in L^\infty(\R_+^2).$ We have
$\esp{\lambda\left(t,\mathcal{F}_{t-}^{N}\right)|S_{t-}=s}=f(t,s)$.  Under this assumption, we may apply Theorem~\ref{prop: Expectation System}, so that we know that the mean measure $u$ associated to the random process is a solution of System~\eqref{eq:edp:expect}--\eqref{eq:edp:expect:init}.
Therefore the mean measure $u$ satisfies a completely explicit PDE of the type (PPS) with $\rho_{\lambda,\mathbb{P}_{0}}(t,s)=f(t,s)$ replacing $p(s,X(t))$. In particular,  in this case $\rho_{\lambda,\mathbb{P}_{0}}(t,s)$ does not depend  on the initial condition. As already underlined, in general,  the distribution of the process is characterized by $
\lambda\left(t,\mathcal{F}_{t-}^{N}\right)=f\left(t,S_{t-}\right)$ and by the distribution of $N_-$. 
Therefore, in this  special case,
this dependence is actually reduced to the function $f$ and the distribution of $-T_0$. 
 Since
$ f(\cdot,\cdot) \in L^\infty\big([0,T]\times \R_+\big),$
assuming also $u^{in}\in L^1(\R_+),$ it is well-known that there exists a unique solution 
$u$ such that $\left( t\mapsto u(t,\cdot) \right)\in {\cal C}\bigl([0,T],L^1(\R_+)\bigr)$, see for instance~\cite{BP} Section 3.3. p.60.
{Note that following \cite{canizo2013measure} uniqueness for measure solutions may also be established,  h}ence the mean measure $u$ associated to the random process is the unique solution of System~\eqref{eq:edp:expect}--\eqref{eq:edp:expect:init}, and it is in ${\cal C}\bigl([0,T],L^1(\R_+)\bigr)$: the PDE formulation, together with existence and uniqueness, has provided a regularity result on $u$ which is obtained under weaker assumptions than through  Fokker-Planck / Kolmogorov equations. This is another possible application field of our results: using the PDE formulation to gain regularity. Let us now develop the Fokker-Planck / Kolmogorov approach for renewal processes.
\paragraph{Renewal processes}
The renewal process, i.e. when $\lambda\left(t,\mathcal{F}_{t-}^{N}\right)=f\left(S_{t-}\right)$, with  $f$ a continuous function on $\mathbb{R}_{+}$, has particular properties. As noted in Section \ref{sec:examples}, the renewal age process $(S_{t-})_{t>0}$ is an homogeneous Markovian process.  
It is known for a long time that it is easy to derive PDE on the corresponding density  through Fokker-Planck / Kolmogorov equations, once the variable of interest (here the age) is Markovian (see for instance~\cite{bossy_2010}). Here we briefly follow this line to see what kind of PDE can be derived through the Markovian properties and to compare the equation with the  (PPS) type system derived in Theorem \ref{prop: Expectation System}.

Since $f$ is continuous, the infinitesimal generator\footnote{The infinitesimal generator of an homogeneous Markov process $(Z_t)_{t\geq 0}$ is the operator $\mathcal{G}$ which is defined to act on every function $\phi : \mathbb{R}^{n}\rightarrow \mathbb{R}$ in a suitable space $\mathcal{D}$ by
$$
\mathcal{G} \phi (x) = \lim_{t \rightarrow 0^{+}} \frac{\mathbb{E}\left[ \phi(Z_{t}) \middle| Z_{0}=x \right] - \phi(x)}{t}.
$$} of $(S_{t})_{t> 0}$ is given by
\begin{equation}\label{eq:inf:generator}
(\mathcal{G}\phi)(x)=\phi'(x) + f(x)\left( \phi(0) - \phi(x) \right),
\end{equation}
for all $\phi \in C^{1}(\mathbb{R}_{+})$ (see~\cite{markovian:renewal}). Note that, since for every $t>0$ $S_{t-}=S_{t}$ a.s., the process $(S_{t-})_{t> 0}$ is also Markovian with the same infinitesimal generator.

Let us now define for all $t>0$ and all $\phi \in C^{1}(\mathbb{R}_{+})$,
$$
P_{t}\phi(x) = \mathbb{E}\left[ \phi(S_{t-}) \middle| S_{0}=x \right] = \int \phi(s) u_{x}(t,\dd s),
$$
where $x\in \mathbb{R}_{+}$ and $u_{x}(t,\cdot)$ is the distribution of $S_{t-}$ given that $S_{0}=x$. Note that $u_x(t,\dd s)$ corresponds to the marginal in the sense of $\left(\mathcal{P}_{Fubini}\right)$  of 
 $u_x$ given by Theorem~\ref{prop: Expectation System} with  $\rho_{\lambda,\mathbb{P}_0}(t,s)=f(s)$ and initial condition $\delta_{x}$, i.e. $T_{0}=-x$ a.s. 

In this homogeneous Markovian case, the forward Kolmogorov equation gives 
$$
\f{\partial}{\p {t}} P_{t} = P_{t} \mathcal{G}.
$$
 Let $\varphi \in C^{\infty}_{c,b}(\mathbb{R}_+^{2})$
and let $t> 0$. This implies that
\begin{eqnarray*}
\f{\partial}{\p {t}} \left( P_{t}\varphi(t,s) \right) & = & P_{t} \mathcal{G} \varphi(t,s) + P_{t} \f{\p}{\partial t}\varphi(t,s)\\
&=& P_{t} \left[ \f{\partial}{\p {s}}\varphi(t,s) + f(s)\left( \varphi(t,0) - \varphi(t,s) \right) + \f{\p}{\partial {t}}\varphi(t,s) \right].
\end{eqnarray*}
Since $\varphi$ is compactly supported in time, an integration with respect to $t$ yields
$$
-P_0 \varphi(0,s) = \int P_{t} \left(\f{\p}{ \partial {t}}+\f{\p}{\partial {s}} \right) \varphi(t,s) \dd t +\int P_{t} f(s)\left( \varphi(t,0) - \varphi(t,s) \right) \dd t,
$$
or equivalently
\begin{equation}\label{eq:edp:markovian:weak}
-\varphi(0,x) = \int\left(\f{\p}{\partial {t}}+\f{\p}{\partial {s}}\right) \varphi\left(t,s\right)u_{x}\left(t,\dd s\right) \dd t - \int ( \varphi(t,s)-\varphi(t,0) ) f(s) u_{x}(t,\dd s) \dd t,
\end{equation}
in terms of $u_{x}$. This is exactly Equation~\eqref{eq:edp:expect:weak} with $u^{in}=\delta_x$.

The result of Theorem \ref{prop: Expectation System} is stronger than the application of the forward Kolmogorov equation on  homogeneous Markovian systems 
since the result of Theorem \ref{prop: Expectation System} never used the Markov assumption and can be applied to non Markovian processes 
(see  Section~\ref{sec:hawkes:process:computations}).
So the present work is a general set-up where one can deduce PDE even from non Markovian microscopic random dynamics. {Note also that only boundedness assumptions and not continuity ones are necessary to directly obtain \eqref{eq:edp:markovian:weak} via Theorem \ref{prop: Expectation System}:} to obtain the classical Kolmogorov theorem, one would have assumed $f\in {\cal C}^0(\R_+^2)$ rather than $f\in L^\infty(\R_+^2).$

\subsection{\label{homMar} Generalized Wold process}

In the case where 
$\lambda\left(t,\mathcal{F}_{t-}^{N} \right)= f(S_{t-},A^1_t,...,A^k_t)$,
with $f$ being a non-negative function,
one can define in a similar way $ u_{k}\left(t,s,a_1,\dots,a_k\right)$ which is informally the distribution at time $t$ of the processes with age $s$ and past given by $a_1,...a_k$ for the last $k$ ISI's.
We want to investigate this case not for its Markovian properties, which are nevertheless presented in Proposition~\ref{prop:markovian:property} in the appendix for sake of completeness, but because this is the first basic example where the initial condition is indeed impacting $\rho_{\lambda,\mathbb{P}_0}$ in Theorem \ref{prop: Expectation System}.

\noindent To do so, the whole machinery applied on $u(\dd t,\dd s)$  is first extended in the  next result to $u_{k}\left(\dd t,\dd s,\dd a^1,\dots,\dd a^k\right)$ which represents the dynamics of the age and the last $k$ ISI's. This could have been done in a very general way by an easy generalisation of Theorem \ref{prop: Expectation System}. However to avoid too cumbersome equations, we express it only for generalized Wold processes to provide a clean setting to illustrate 
the impact of the initial conditions on $\rho_{\lambda,\mathbb{P}_0}$.
Hence, we similarly define a random distribution $U_{k}(\dd t,\dd s,\dd a_{1},\dots,\dd a_{k})$ such that its evaluation at any given time $t$ exists and is
\begin{equation}\label{upt:k}
U_{k}(t,\dd s,\dd a_{1},\dots,\dd a_{k})=\delta_{(S_{t-},A^1_t,...,A^k_t)}(\dd s,\dd a_{1},\dots ,\dd a_{k}).
\end{equation}
The following result states the PDE satisfied by $u_k=\esp{U_k}$.
\begin{prop}\label{prop: Expectation System:k}
Let $k$ be a positive integer and 
 $f$ be some non negative function on $\R_+^{k+1}$. Let $N$ be a generalized Wold process with predictable age process $(S_{t-})_{t> 0}$, associated points $(T_i)_{i\in \Z}$ and intensity $\lambda(t,\mathcal{F}_{t-}^{N})= f(S_{t-},A^1_t,...,A^k_t)$
satisfying $\left(\mathcal{A}^{\mathbb{L}^1,exp}_{\lambda,loc}\right)$, where $A^{1}_{t},\dots,A^{k}_{t}$ are the successive ages defined by~\eqref{eq:successive:ages}.
Suppose that $\mathbb{P}_{0}$ is such that $\mathbb{P}_{0}(T_{-k}>-\infty) = 1$.
Let $U_{k}$ be defined by
\begin{equation}\label{eq:def:Upsilon:k}
U_{k}\left(\dd t,\dd s,\dd a_{1},\dots,\dd a_{k}\right)=\sum_{i=0}^{+\infty} \eta_{T_i}(\dd t,\dd s) \prod_{j=1}^{k} \delta_{A_{T_{i}}^{j}}(\dd a_{j}) \ \mathds{1}_{0\leq t\leq T_{i+1}},
\end{equation}
{If $N$ is the result of Ogata's thinning on the Poisson process $\Pi$,} then $U_k$ satisfies \eqref{upt:k} and $\left(\mathcal{P}_{Fubini}\right)$ 
a.s.
in $\Pi$ and $\mathcal{F}^N_{0}$. 
Assume that the initial condition $u^{in}_k$, defined as the distribution of $(-T_{0},A^{1}_{0},\dots,A^{k}_{0})$ which is a random vector in $\mathbb{R}^{k+1}$, is such that $u^{in}_k(\{0\}\times \R^k_+)=0$. Then
 $U_k$ admits a mean measure $u_{k}$ which also satisfies $\left(\mathcal{P}_{Fubini}\right)$ and the following system in the weak sense: 
on $\R_+\times \R_+^{k+1}$,
\begin{gather}\label{eq:edp:expect:k}
\! \big\{ \!\f{\p}{\partial {t}}\!+\!\f{\p}{\partial {s}}\! \big\} u_{k}\!\left(\dd t,\dd s,\dd a_{1},...,\dd a_{k}\right)\! +\! f\!\left(s,a_{1},...,a_{k}\right)\!u_{k}\!\left(\dd t,\dd s,\dd a_{1},...,\dd a_{k}\right)\!=0,\\
u_{k}\left(\dd t,0,\dd s,\dd a_{1},... ,\dd a_{k-1}\right)\!=\!\!\int\limits_{a_{k}=0}^{\infty}\!\! f\!\left(s,a_{1},...,a_{k}\right) u_{k}\!\left(t,\dd s,\dd a_{1},...,\dd a_{k}\right)\,\dd t,
\label{eq:edp:expect:bound:k} \\
u_{k}\left(0,\dd s,\dd a_1,\dots,\dd a_k\right)=u^{in}_k\left(\dd s,\dd a_1,\dots,\dd a_k\right).\label{eq:edp:expect:init:k}
\end{gather}
\end{prop}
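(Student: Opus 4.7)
The plan is to mirror the three-step strategy of Proposition~\ref{prop:Microscopic:System} and Theorem~\ref{prop: Expectation System}, lifted to the enlarged state space $(t,s,a_{1},\dots,a_{k})$. First I would establish the pointwise identity~\eqref{upt:k} and the Fubini property pathwise. Second, I would derive a random weak PDE for $U_{k}$ via Ogata's thinning. Third, I would take expectations to obtain the claimed system for $u_{k}$.

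For the first step, fix a realization on the full-probability event where Ogata's thinning is well defined; the hypothesis $\mathbb{P}_{0}(T_{-k}>-\infty)=1$ ensures $A_{0}^{1},\dots,A_{0}^{k}$ are finite, so $U_{k}$ is well defined. For $t>0$ there is a unique index $i$ with $T_{i}\leq t<T_{i+1}$, and on this interval $S_{t-}=t-T_{i}$ while each $A_{t}^{j}$ is constant by the shift dynamics~\eqref{eq:dynamic:successive:ages}, matching the factors $\delta_{A_{T_{i}}^{j}}$ in~\eqref{eq:def:Upsilon:k} (with $A_{T_{i}}^{j}$ interpreted as the post-jump value, which is constant on $(T_{i},T_{i+1})$). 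This yields~\eqref{upt:k}. The Fubini property $(\mathcal{P}_{Fubini})$ follows by computing both marginals of each summand using the definition~\eqref{etax} of $\eta_{T_{i}}$, exactly as in Proposition~\ref{prop:Microscopic:System}.

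For the second step, take $\varphi\in C_{c,b}^{\infty}(\R_{+}^{k+2})$ and integrate $(\partial_{t}+\partial_{s})\varphi$ against $U_{k}$. On track $i\geq 1$ (parametrized by $t\in[T_{i},T_{i+1}]$, $s=t-T_{i}$, $a_{j}$ frozen at its post-jump value $\vec{a}^{(i)}$), the fundamental theorem of calculus contributes $\varphi(T_{i+1},T_{i+1}-T_{i},\vec{a}^{(i)})-\varphi(T_{i},0,\vec{a}^{(i)})$. Track~$0$ contributes $\varphi(T_{1},T_{1}-T_{0},A_{0}^{1},\dots,A_{0}^{k})-\varphi(0,-T_{0},A_{0}^{1},\dots,A_{0}^{k})$; the assumption $u_{k}^{in}(\{0\}\times\R_{+}^{k})=0$ forces $T_{0}<0$ a.s.\ and thus suppresses the Dirac boundary term at $t=0$ analogous to the one appearing in~\eqref{eq:edp:micro:bound}. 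Summing over $i$ and using~\eqref{eq:dynamic:successive:ages} to identify the post-jump state at $T_{i+1}$, namely $(0,S_{T_{i+1}-},A_{T_{i+1}}^{1},\dots,A_{T_{i+1}}^{k-1})$, with the start of track $i+1$, the sum telescopes and leaves $-\varphi(0,-T_{0},A_{0}^{1},\dots,A_{0}^{k})$ plus a sum of jump contributions $\varphi(T,S_{T-},A_{T}^{1},\dots,A_{T}^{k})-\varphi(T,0,S_{T-},A_{T}^{1},\dots,A_{T}^{k-1})$ over all $T\in N\cap\R_{+}$. Ogata's representation~\eqref{defthin} rewrites this jump sum as an integral against $\int_{x=0}^{\lambda(t,\mathcal{F}_{t-}^{N})}\Pi(\dd t,\dd x)$, producing the random weak PDE analogous to~\eqref{eq:edp:micro:weak}.

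For the third step, $(\mathcal{A}^{\mathbb{L}^{1},exp}_{\lambda,loc})$ together with an immediate extension of Proposition~\ref{prop:Fubini:nu} to the $(k+2)$-variate setting defines the mean measure $u_{k}=\mathbb{E}[U_{k}]$ and shows that it satisfies $(\mathcal{P}_{Fubini})$. Taking $\mathbb{E}$ in the random weak PDE and applying the Poisson compensation formula $\mathbb{E}\bigl[\int H_{t}\int_{x=0}^{\lambda(t,\mathcal{F}_{t-}^{N})}\Pi(\dd t,\dd x)\bigr]=\mathbb{E}\bigl[\int H_{t}\lambda(t,\mathcal{F}_{t-}^{N})\dd t\bigr]$, valid for the predictable integrand $H_{t}=\varphi(t,S_{t-},A_{t}^{1},\dots,A_{t}^{k})-\varphi(t,0,S_{t-},A_{t}^{1},\dots,A_{t}^{k-1})$, converts the jump sum into an integral of $H_{t}\lambda(t,\mathcal{F}_{t-}^{N})$ with respect to $\dd t\otimes\dd\mathbb{P}$. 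Crucially, because $\lambda(t,\mathcal{F}_{t-}^{N})=f(S_{t-},A_{t}^{1},\dots,A_{t}^{k})$ is already a measurable function of the state variables (unlike in Theorem~\ref{prop: Expectation System}, where conditioning was needed), this integral is directly $\int H\,f\,u_{k}(\dd t,\dd s,\dd a_{1},\dots,\dd a_{k})$, which is the weak form of~\eqref{eq:edp:expect:k}--\eqref{eq:edp:expect:init:k}. I expect the main obstacle to be the bookkeeping in Step~2: carefully pairing pre- and post-jump states via~\eqref{eq:dynamic:successive:ages} to get a clean telescoping and to identify the boundary term at $s=0$ with the integral over $a_{k}$ in~\eqref{eq:edp:expect:bound:k}; the remaining technicalities (extending Proposition~\ref{prop:Fubini:nu} and justifying interchange of $\mathbb{E}$ with the integrals) follow from $(\mathcal{A}^{\mathbb{L}^{1},exp}_{\lambda,loc})$ and dominated convergence.
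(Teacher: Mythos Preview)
Your proposal is correct and follows exactly the same three-step strategy as the paper's own sketch: generalize Proposition~\ref{prop:Microscopic:System} to obtain the random weak system for $U_{k}$ (your Steps~1--2), extend Proposition~\ref{prop:Fubini:nu} to define the mean measure $u_{k}$, and then repeat the argument of Theorem~\ref{prop: Expectation System} to pass to expectation (your Step~3). Your observation that the conditioning step producing $\rho_{\lambda,\mathbb{P}_{0}}$ is unnecessary here because $\lambda=f(S_{t-},A^{1}_{t},\dots,A^{k}_{t})$ is already a function of the tracked state is exactly the simplification the paper uses, and your bookkeeping of the shift~\eqref{eq:dynamic:successive:ages} in the telescoping sum is the right way to identify the boundary term~\eqref{eq:edp:expect:bound:k}.
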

We have assumed $u^{in}_k(\{0\}{\times \R_+^k})=0$ {(i.e. $T_0\not = 0$ a.s.)} for the sake of simplicity, but this assumption may of course be relaxed and Dirac masses at $0$ should then be added in a similar way as in Theorem~\ref{prop: Expectation System}. 

If $f\in L^\infty(\R_+^{k+1}),$ we may apply Proposition~\ref{prop: Expectation System:k}, so that the mean measure $u_k$ satisfy System~\eqref{eq:edp:expect:k}--\eqref{eq:edp:expect:init:k}. 
Assuming an initial condition $u^{in}_k\in L^1(\R_+^{k+1}),$ we can prove exactly as for the renewal equation (with a Banach fixed point argument for instance) that  there exists a unique solution 
$u_k$ such that $\left( t\mapsto u_{k}(t,\cdot) \right) \in {\cal C} \big(\R_+,L^1(\R_+^{k+1})\big)$~\cite{BP} 
to the generalized Wold case, 
the boundary assumption on the $k$th penultimate point before time $0$ being necessary to give sense to the successive ages at time $0$. By uniqueness, this proves that the mean measure $u_k$ is this solution, so that it belongs to ${\cal C} \big(\R_+,L^1(\R_+^{k+1})\big):$ Proposition~\ref{prop: Expectation System:k} leads to a regularity result on the mean measure.

Now that we have clarified the dynamics of the successive ages, one can look at this system from the point of view of Theorem \ref{prop: Expectation System}, that is when only two variables $s$ and $t$ are considered. In this respect, let us note that $U$ defined by~\eqref{eq:def:Upsilon} is such that
$
U(\dd t,\dd s)=\int_{a_{1},\dots,a_{k}} U_{k}(\dd t,\dd s,\dd a_{1},\dots,\dd a_{k}).
$
Since the integrals and the expectations are exchangeable in the weak sense, the mean measure $u$ defined in Proposition~\ref{prop:Fubini:nu} is such that
$
u(\dd t,\dd s)=\int_{a_{1},\dots,a_{k}} u_{k}(\dd t,\dd s,\dd a_{1},\dots,\dd a_{k}).
$
But \eqref{eq:edp:expect:k} in the weak sense means, for all $\varphi \in C_{c,b}^{\infty}(\R^{k+2})$,
\begin{multline}\label{eq:edp:markovian:weak:k}
\int \left(\f{\p}{\partial {t}}+\f{\p}{\partial {s}}\right)\varphi(t,s,a_1,...,a_k)u_{k}\left(\dd t,\dd s,\dd a_1,\dots,\dd a_k\right)\\
+ \int \left[\varphi\left(t,0,a_1,\dots,a_k\right)-\varphi(t,s,a_1,\dots,a_k) \right]f\left(s,a_1,\dots,a_k\right)u_{k}\left(\dd t,\dd s,\dd a_1,\dots,\dd a_k\right)\\
+ \int\varphi\left(0,s,a_1,\dots,a_k\right)u_{k}^{in}\left(\dd s,\dd a_1,\dots,\dd a_k\right)=0.
\end{multline}
Letting $\psi\in C_{c,b}^{\infty}(\R^{2})$ and $\varphi\in C_{c,b}^{\infty}(\R^{k+2})$ being such that
$$
\forall \, t,s,a_{1},\dots ,a_{k},\quad \varphi(t,s,a_{1},\dots ,a_{k}) = \psi(t,s),
$$
we end up proving that the function $\rho_{\lambda,\mathbb{P}_{0}}$ defined in Theorem~\ref{prop: Expectation System} satisfies
\begin{equation}\label{rhoMar}
\rho_{\lambda,\mathbb{P}_{0}}(t,s) u\left(\dd t,\dd s\right) = \int_{a_{1},\dots ,a_{k}} f\left(s,a_1,\dots,a_k \right) u_{k}\left(\dd t,\dd s,\dd a_1,\dots,\dd a_k\right), 
\end{equation}
$u(\dd t,\dd s)-$almost everywhere ($a.e.$).
Equation \eqref{rhoMar} means exactly from a probabilistic point of view that
$$\rho_{\lambda,\mathbb{P}_{0}}(t,s) = \esp{f(S_{t-},A^1_t,...,A^k_t)|S_{t-}=s}, \quad u(\dd t,\dd s)-a.e.$$
Therefore, 
in the particular case of generalized Wold process, the quantity $\rho_{\lambda,\mathbb{P}_{0}}$ depends on the shape of the intensity (here the function $f$) and also on $u_{k}$. But, by Proposition \ref{prop: Expectation System:k},  $u_{k}$ depends on its initial condition given by the distribution of $(-T_{0},A^{1}_{0},\dots,A^{k}_{0})$, and not only $-T_{0}$ as in the initial condition for $u$. That is, as announced in the remarks following Theorem \ref{prop: Expectation System}, $\rho_{\lambda,\mathbb{P}_{0}}$ depends in particular on the whole distribution of the underlying process before time $0$, namely $\mathbb{P}_{0}$ and not only on the initial condition for $u$. Here, for generalized Wold processes, it only depends on the last $k$ points before time $0$. For more general non Markovian settings, the integration cannot be simply described by a measure $u_k$ in  dimension ($k+2$) being integrated with respect to $\dd a^1...\dd a^k$. In general, the integration has to be done on all the "randomness" hidden behind the dependence of $\lambda(t,\mathcal{F}_{t-}^{N})$ with respect to the past once $S_{t-}$ is fixed and in this sense it depends on the whole distribution $\mathbb{P}_0$ of $N_-$. This is made even clearer on the following non  Markovian example: the Hawkes process.

\vspace{-0.3cm}
\subsection{Hawkes process}\label{sec:hawkes:process:computations}
As we have seen in Section \ref{sec:examples}, there are many different examples of Hawkes processes that can all be expressed as
$
\lambda\left(t,\mathcal{F}_{t-}^{N} \right) = \phi\left( \int_{-\infty}^{t-} h\left(t-x\right) N(\dd x)\right)
$,
where the main case is $\phi(\theta)=\mu+\theta$, for $\mu$ some positive constant, which is the linear case.

When there is no point before $0$, 
$\lambda\left(t,\mathcal{F}_{t-}^{N} \right) = \phi\left( \int_{0}^{t-} h\left(t-x\right) N(\dd x)\right)$.
In this case, the interpretation is so close to (PPS) that the first guess, which is wrong, would be that the analogous in (PPS) is 
\begin{equation}\label{pX}
p(s,X(t))=\phi(X(t)),
\end{equation}
where $X(t)=\esp{\int_{0}^{t-} h\left(t-x\right) N(\dd x)}=\int_0^t h\left(t-x\right) u(\dd x,0)$.
This is wrong, even in the linear case 
since $\lambda\left(t,\mathcal{F}_{t-}^{N} \right)$ depends on all the previous points. Therefore 
$\rho_{\lambda,\mathbb{P}_0}$ defined by~\eqref{eq:assumptions:rho}
corresponds to a conditioning given only the last point. 

By looking at this problem through the generalized Wold approach, one can hope that for $h$ decreasing fast enough:
$$\lambda\left(t,\mathcal{F}_{t-}^{N} \right)\simeq \phi\left(h(S_{t-})+h(S_{t-}+A^1_t)+...+h(S_{t-}+A^1_t+...+A^k_t)\right).$$

In this sense and with respect to generalized Wold processes described in the previous section, we are informally  integrating on "all the previous points" except the last one and not integrating over all the previous points. This is informally why \eqref{pX} is wrong even in the linear case.
Actually, $\rho_{\lambda,\mathbb{P}_0}$ is computable for linear Hawkes processes
: {we show in the next section that}
$\rho_{\lambda,\mathbb{P}_{0}}(t,s)\not=\phi( \int_{-\infty}^t h(t-x)u(\dd x,0) )={\mu+\int_0^{\infty} h(t-x) u(\dd x,0)}$ and {that} 
$ \rho_{\lambda,\mathbb{P}_{0}}$ explicitly depends on $\mathbb{P}_0$.

\subsubsection{Linear Hawkes process}


We are interested in Hawkes processes with a past before time $0$ given
by 
$\mathcal{F}^N_{0}$, which 
is 
not necessarily the past given by a stationary Hawkes 
process. 
To illustrate the fact that the past is impacting the value of $\rho_{\lambda,\mathbb{P}_0}$, we focus on two particular cases:\\
$\left(\mathcal{A}_{N_-}^{1}\right) \,
\textrm{\begin{tabular}{|l} $N_{-}=\{T_{0}\}$ a.s. and $T_{0}$ admits a bounded density $f_{0}$ on $\mathbb{R}_{-}$ \end{tabular}}$\\
$\left(\mathcal{A}_{N_-}^{2}\right) \,
\textrm{\begin{tabular}{|l} $N_{-}$ is an homogeneous Poisson process with intensity $\alpha$ on $\mathbb{R}_{-}$ \end{tabular}}$

Before stating the main result, we need some technical 
definitions. 
Indeed the proof  is based on the underlying  branching structure of the linear Hawkes process described in Section \ref{secClus} of the appendix and the following functions $(L_s,G_s)$ are naturally linked to this branching decomposition (see Lemma \ref{prop:implicit:formulas:L:G}).
%
%
\begin{lem}\label{unicityLG}
Let $h\in L^1(\R_+)$ such that $\|h\|_{L^1}<1$.
For all $s\geq 0,$ there exist a unique solution  $(L_s,G_s)\in L^1 (\R_+)\times L^\infty(\R_+)$ of the following system 
\begin{eqnarray}\label{eq:Gs}
\log(G_s(x))=\int_{0}^{(x-s)\vee 0} G_s(x-w) h(w) \dd w -\int_{0}^x h(w) \dd w,\\ 
\label{eq:Ls}
L_{s}(x)= \int_{s\wedge x}^{x} \left(h\left(w\right)+ L_{s}(w) \right) G_{s}(w) h(x-w) \, \dd w,
\end{eqnarray}
where $a\vee b$ (resp. $a\wedge b$) denotes the maximum (resp. minimum) between $a$ and $b$.
Moreover,  $L_s(x\leq s)\equiv 0,$ $ G_s:\R_+\to[0,1],$ and $L_s$ is uniformly bounded in $L^1.$ 
\end{lem}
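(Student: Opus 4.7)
The plan is to solve the system sequentially. Equation (\ref{eq:Gs}) does not involve $L_s$, so we first establish existence and uniqueness of $G_s$; once $G_s$ is known, equation (\ref{eq:Ls}) becomes a linear Volterra equation for $L_s$ with known coefficients. In both steps, the assumption $\|h\|_{L^1}<1$ provides the contraction constant in a Banach fixed point argument.

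For $G_s$, let $H(x):=\int_0^x h(w)\dd w$ and exponentiate (\ref{eq:Gs}), which suggests the fixed-point map
$$T(G)(x) := \exp\Bigl(\int_{0}^{(x-s)\vee 0} G(x-w)h(w)\dd w - H(x)\Bigr)$$
on $\mathcal{C}:=\{G\in L^\infty(\R_+):0\le G\le 1\}$, a closed subset of $L^\infty$. If $G\in \mathcal{C}$, then $0\le \int_0^{(x-s)\vee 0} G(x-w)h(w)\dd w \le H((x-s)\vee 0) \le H(x)$, so the exponent lies in $[-H(x),0]$ and $T(G)(x)\in(0,1]$; hence $T(\mathcal{C})\subset \mathcal{C}$. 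Because the exponent is nonpositive and $\exp$ is $1$-Lipschitz on $(-\infty,0]$,
$$|T(G_1)(x)-T(G_2)(x)|\le \int_0^{(x-s)\vee 0}|G_1-G_2|(x-w)h(w)\dd w \le \|h\|_{L^1}\|G_1-G_2\|_\infty,$$
and the Banach fixed-point theorem yields a unique $G_s\in\mathcal{C}$, with $G_s:\R_+\to[0,1]$.

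With $G_s$ determined, define
$$S(L)(x):=\int_{s\wedge x}^x (h(w)+L(w))G_s(w)h(x-w)\dd w,$$
and note that by the convention on the integration bounds $S(L)(x)=0$ for $x\le s$, which will imply $L_s(x\le s)\equiv 0$. By Fubini and $G_s\le 1$,
$$\|S(L)\|_{L^1}\le \int_s^\infty (h(w)+|L(w)|)G_s(w)\Bigl(\int_w^\infty h(x-w)\dd x\Bigr)\dd w \le \|h\|_{L^1}\bigl(\|h\|_{L^1}+\|L\|_{L^1}\bigr),$$
so $S$ maps $L^1(\R_+)$ into itself. The same computation applied to $L_1-L_2$ in place of $h+L$ gives $\|S(L_1)-S(L_2)\|_{L^1}\le \|h\|_{L^1}\|L_1-L_2\|_{L^1}$, a strict contraction, and the Banach fixed-point theorem delivers a unique $L_s\in L^1(\R_+)$. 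Inserting $L=L_s$ into the first inequality yields $\|L_s\|_{L^1}\le \|h\|_{L^1}^2/(1-\|h\|_{L^1})$, a bound independent of $s$.

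The delicate step is keeping $G_s$ inside $[0,1]$: without this, neither the $\log/\exp$ recasting of (\ref{eq:Gs}) nor the $1$-Lipschitz bound on $\exp$ is available, and the bound $G_s\le 1$ is also what closes the $L^1$ estimate for $L_s$. Everything else is standard Volterra-type fixed-point machinery, and the fact that the coupling between $G_s$ and $L_s$ is one-way (only $G_s$ appears as a coefficient in the equation for $L_s$) is what allows us to avoid any joint fixed-point argument.
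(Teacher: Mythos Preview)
Your proof is correct and follows essentially the same approach as the paper: a Banach fixed-point argument first for $G_s$ on the closed set of $[0,1]$-valued functions in $L^\infty$, then for $L_s$ in $L^1$, with the same explicit uniform bound $\|L_s\|_{L^1}\le \|h\|_{L^1}^2/(1-\|h\|_{L^1})$. The only cosmetic difference is in the Lipschitz estimate for the exponential: you use directly that $\exp$ is $1$-Lipschitz on $(-\infty,0]$ (since the exponent is nonpositive for $G\in\mathcal{C}$), whereas the paper invokes $|e^a-e^b|\le e^{\max(a,b)}|a-b|$ and then bounds the prefactor by $1$; both yield the same contraction constant $\|h\|_{L^1}$.
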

This result allows to define two other important quantities, $K_s$ and $q$, by, {for all  $s,t \geq 0, z\in \R$,}
\begin{multline}
K_{s}(t,z) := \int_{0}^{(t-s)\vee 0} \left[ h(t-x) +L_{s}(t-x) \right] G_{s}(t-x) h(x-z) \dd x, 
\\
\log(q(t,s,z))\!:=\! -\int_{(t-s)\vee 0}^{t}\!\!\!\!\!h(x-z) \dd x  - \int_{0}^{(t-s)\vee 0}\!\!\!\!\!\left[ 1-G_{s}(t-x) \right] h(x-z) \dd x. \label{Kqdef}
\end{multline}
Finally, the following result is just an obvious remark that helps to understand the resulting system.

\begin{remark}\label{unicityv}
For a non negative $\Phi\in L^\infty(\R_+)$ and $v^{in}\in L^\infty(\R_+)$, there exists a unique solution $v\in L^\infty(\R_+^2)$ in the weak sense to the following system, 
\begin{eqnarray}\label{eq:PDEPhi:v}
\f{\p}{\p t} v(t,s) + \f{\p}{\p s} v(t,s)+ \Phi(t,s) v(t,s)=0,\\
v(t,0)=1
\qquad v(t=0,s)=v^{in}(s)\label{eq:PDEPhi:bound:v}
\end{eqnarray}
Moreover $t\mapsto v(t,.)$ is in $C(R_+,L^1_{loc}(\R_+))$. 

If $v^{in}$ is a survival function (i.e. non increasing from 0 to 1), then $v(t,.)$ is a survival function  and $-\partial_s v$ is a probability measure for all $t>0$.
\end{remark}

\begin{prop}\label{prop:conditional:intensity:hawkes1}
Using the notations of Theorem~\ref{prop: Expectation System}, let $N$ be a Hawkes process with past before $0$ given by $N_-$  satisfying either $\left(\mathcal{A}_{N_-}^{1}\right)$ or $\left(\mathcal{A}_{N_-}^{2}\right)$ 
and with intensity on $\R_+$ given by
$$
\lambda(t,\mathcal{F}_{t-}^{N})=\mu + \int_{-\infty}^{t-} h(t-x) N(\dd x),
$$
where $\mu$ is a positive real number and $h \in L^\infty(\R_+)$ is a non-negative function with support in $\R_+$ such that $\int h <1$. 

Then, the mean measure $u$ defined in Proposition~\ref{prop:Fubini:nu} satisfies Theorem~\ref{prop: Expectation System} and moreover its integral $v(t,s):=\int\limits_s^\infty u(t,d\sigma)$  is the unique solution of the system \eqref{eq:PDEPhi:v}--\eqref{eq:PDEPhi:bound:v}
where $v^{in}$ is the survival function of $-T_0$,  and where  $\Phi=\Phi_{\P_0}^{\mu,h} \in L^{\infty}(\R_+)$ is defined by 
\begin{equation}\label{monPhi}
\Phi_{\P_0}^{\mu,h} =\Phi_+^{\mu,h} +\Phi_{-,\P_0}^{h},
\end{equation}
where for all non negative $s, t$
\begin{equation}\label{eq:implicit:equation:Phi:+}
\Phi^{\mu,h}_{+}(t,s) = \mu \left( 1+ \int_{s\wedge t}^{t} (h(x)+L_{s}(x)) G_{s}(x) \dd x \right),
\end{equation}
 and where
under Assumption $\left(\mathcal{A}_{N_-}^{1}\right)$, 
\begin{equation}\label{eq:Phi:-:1pt}
\Phi^{h}_{-,\mathbb{P}_{0}}(t,s) =\frac{\int_{-\infty}^{0\wedge(t-s)} \left( h(t-t_{0}) + K_{s}(t,t_{0}) \right)  q(t,s,t_{0}) f_{0}(t_{0}) \dd t_{0}}{\int_{-\infty}^{0\wedge(t-s)} q(t,s,t_{0}) f_{0}(t_{0}) \dd t_{0}},
\end{equation}
or, under Assumption $\left(\mathcal{A}_{N_-}^{2}\right)$, 
\begin{equation}\label{eq:Phi:-:Poisson}
\Phi^{h}_{-,\mathbb{P}_{0}}(t,s) =\alpha \int_{-\infty}^{0\wedge(t-s)} \left( h(t-z) + K_{s}(t,z) \right) q(t,s,z) \dd z.
\end{equation}
In these formulae, $L_s,~G_s,~K_s$ and $q$ are given by Lemma \ref{unicityLG} and \eqref{Kqdef}.
Moreover
\begin{equation}\label{rhoPhi}
\forall\, s\geq 0, \quad \int_s^{+\infty} 
\rho_{\lambda,\mathbb{P}_{0}}
(t,x)u(t,\dd x)=  \Phi^{\mu,h}_{\mathbb{P}_{0}}(t,s) \int_s^{+\infty}u(t,\dd x).
\end{equation}
\end{prop}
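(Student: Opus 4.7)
The plan is first to verify the hypotheses of Theorem~\ref{prop: Expectation System}. Since $h\in L^\infty(\R_+)$ with $\|h\|_{L^1}<1$, classical results for the linear Hawkes process (via the cluster representation or iteration on the intensity) give $\esp{N_t}<\infty$ for every $t\geq 0$, so $\left(\mathcal{A}^{\mathbb{L}^1,exp}_{\lambda,loc}\right)$ holds, and both $\left(\mathcal{A}_{N_-}^{1}\right)$ and $\left(\mathcal{A}_{N_-}^{2}\right)$ imply $\left(\mathcal{A}_{T_0}\right)$ a.s. Hence Theorem~\ref{prop: Expectation System} applies and produces the mean measure $u$ together with a function $\rho_{\lambda,\P_0}$ satisfying \eqref{eq:assumptions:rho}. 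The core of the proposition is then \eqref{rhoPhi}, which, using \eqref{eq:assumptions:rho} and the tower property, rewrites as the pointwise identity $\Phi^{\mu,h}_{\P_0}(t,s)=\esp{\lambda(t,\mathcal{F}_{t-}^{N})\mid S_{t-}\geq s}$. Once this is proved, the PDE for $v(t,s):=\int_s^\infty u(t,\dd\sigma)=\proba{S_{t-}\geq s}$ is obtained by testing the weak formulation \eqref{eq:edp:expect:weak} against smooth approximations of $\psi(t)\1_{\sigma\geq s}$ and passing to the limit: the transport terms yield $\partial_t v+\partial_s v$, the reaction term yields $\Phi^{\mu,h}_{\P_0}\,v$ by \eqref{rhoPhi}, conservativeness of $u$ gives $v(t,0)=1$, $v(0,\cdot)$ is the survival function $v^{in}$ of $-T_0$, and uniqueness follows from Remark~\ref{unicityv}.

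To compute $\esp{\lambda(t,\mathcal{F}_{t-}^{N})\mid S_{t-}\geq s}$, the key tool is the Hawkes--Oakes cluster (branching) representation: on $\R_+$, $N$ is a union of independent clusters, each rooted either at a point of $N_-$ or at an immigrant of a Poisson process of rate $\mu$ on $(0,\infty)$, with every root $\tau$ producing direct offspring as an inhomogeneous Poisson process of intensity $h(\cdot-\tau)$, recursively. Splitting $\lambda(t,\mathcal{F}_{t-}^{N})=\mu+\sum_{T_i<t}h(t-T_i)$ along this decomposition and taking conditional expectation yields the additive structure $\Phi=\Phi_++\Phi_-$ of \eqref{monPhi}. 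For a single cluster rooted at time $t-x$ with $x>0$, the natural quantities are $G_s(x)$, the probability that the cluster produces no point in $[t-s,t)$, and $L_s(x)$, the conditional expectation given this survival event of $\sum h(t-T)$ summed over descendants of the root. Conditioning on the times of the direct children of the root and using independence and the branching property of the sub-clusters generated by each such direct child, together with multiplicativity of indicators of disjoint survival events, leads exactly to the implicit equations \eqref{eq:Gs}--\eqref{eq:Ls}; Lemma~\ref{unicityLG} then provides the required existence, uniqueness and bounds.

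With $G_s$ and $L_s$ in hand, $\Phi^{\mu,h}_+$ is obtained by integrating the immigrant contribution against the Poisson intensity $\mu\,\dd\tau$ restricted to those $\tau$ for which the immigrant itself avoids $[t-s,t)$, equivalently $x=t-\tau\in(s\wedge t,t)$: each such immigrant contributes $h(x)+L_s(x)$ weighted by the cluster-survival factor $G_s(x)$, while the spontaneous rate $\mu$ produces the constant $1$ inside the parentheses of \eqref{eq:implicit:equation:Phi:+}. For $\Phi^{h}_{-,\P_0}$, each past point at $z\leq 0\wedge(t-s)$ contributes similarly: $K_s(t,z)$ aggregates the direct contribution $h(t-z)$ and the expected descendant contribution, while $q(t,s,z)$ is the joint probability that such an ancestor and its entire cluster avoid $[t-s,t)$ (the two terms in $\log q$ encode respectively forbidding direct children in $[t-s,t)$ and enforcing survival of sub-clusters of children in the allowed range). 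Under $\left(\mathcal{A}_{N_-}^{2}\right)$, Poisson independence of $N_-$ combined with Campbell's formula yields \eqref{eq:Phi:-:Poisson} directly; under $\left(\mathcal{A}_{N_-}^{1}\right)$, $N_-=\{T_0\}$ reduces the computation to Bayes' rule for $T_0\sim f_0$ conditioned on the survival event, producing the quotient \eqref{eq:Phi:-:1pt}. The main obstacle is precisely this cluster bookkeeping: one must carefully exploit conditional independence of sub-clusters given their parents to disentangle, within a single ancestor's cluster, the joint event that the various descendant trees avoid $[t-s,t)$ \emph{and} contribute a prescribed amount to $\lambda(t,\mathcal{F}_{t-}^{N})$, which is what produces the coupled nonlinear Volterra equations \eqref{eq:Gs}--\eqref{eq:Ls} for $G_s$ and $L_s$ and where most of the analytic effort lies.
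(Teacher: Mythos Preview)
Your proposal is correct and follows essentially the same approach as the paper: both hinge on the cluster (branching) representation, the identification $\Phi^{\mu,h}_{\P_0}(t,s)=\esp{\lambda(t,\mathcal{F}_{t-}^{N})\mid S_{t-}\geq s}$, the probabilistic interpretation of $G_s$ and $L_s$ as cluster survival probability and conditional contribution, and the marked--Poisson/thinning trick to compute conditional expectations given the survival event. Two small points of precision: (i) the split $\Phi=\Phi_++\Phi_-$ requires not just additivity of $\lambda$ but the \emph{independence} of $N_{>0}$ and $N_{\leq 0}$ together with the factorisation $\mathcal{E}_{t,s}(N)=\mathcal{E}_{t,s}(N_{>0})\cap\mathcal{E}_{t,s}(N_{\leq 0})$, which is what allows the conditional expectation to decouple (the paper states this explicitly via its decomposition proposition); (ii) in your description of $K_s(t,z)$, note that $K_s$ encodes only the contribution of the \emph{descendants} of the ancestor at $z$ (via its first-generation children), while the direct term $h(t-z)$ is kept separate in \eqref{eq:Phi:-:1pt}--\eqref{eq:Phi:-:Poisson}.
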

The proof is included in Appendix~\ref{app:Hawkes}. Proposition~\ref{prop:conditional:intensity:hawkes1}  gives a purely analytical definition for $v,$ and thus for $u,$ in two specific cases, namely $\left(\mathcal{A}_{N_-}^{1}\right)$ or $\left(\mathcal{A}_{N_-}^{2}\right)$.   In the general case,  treated in Appendix~\ref{ApExamples} (Proposition~\ref{prop:conditional:intensity:hawkes}),  there remains a dependence with respect to the initial condition $\mathbb{P}_{0}$, \emph{via} the function $\Phi^{h}_{-,\mathbb{P}_{0}}$.

\begin{remark}Contrarily to the general result in Theorem~\ref{prop: Expectation System}, Proposition~\ref{prop:conditional:intensity:hawkes1} focuses on the equation satisfied by $v(\dd t,s)=\int_s^{+\infty} u(\dd t,\dd x)$ because in Equation~\eqref{eq:PDEPhi:v} the function parameter $\Phi=\Phi_{\P_0}^{\mu,h}$ may be defined independently of the definitions of $v$ or $u,$ which is \emph{not} the case for the rate $\rho_{\lambda,\mathbb{P}_{0}}$ appearing in Equation~\eqref{eq:edp:expect}. Thus, it is possible to depart from the system of equations defining $v,$ study it, prove existence, uniqueness and regularity for $v$ under some assumptions on the initial distribution $u^{in}$ as well as on the birth function $h,$ and then deduce regularity or asymptotic properties for $u$ without any previous knowledge on the underlying process. 
\\ 
In Sections~\ref{sec:renewal} and~\ref{homMar}, we were able to use the PDE formulation to prove that the distribution of the ages $u$ has a density. Here, since we only obtain a closed formula for $v$ and not for $u$, we would need to derive Equation~
\eqref{eq:PDEPhi:v} in $s$ to obtain a similar result, so that we need to prove more regularity on $\Phi_{\P_0}^{\mu,h}$. Such regularity for $\Phi_{\P_0}^{\mu,h}$ is not obvious since it depends strongly on the assumptions on $N_-$. This paves the way for future research, where the PDE formulation would provide regularity on the distribution of the ages, as done above for renewal and Wold processes.
\end{remark}

\begin{remark} These two cases $\left(\mathcal{A}_{N_-}^{1}\right)$ and $\left(\mathcal{A}_{N_-}^{2}\right)$  highlight the dependence with respect to all the past before time $0$ (i.e. $\mathbb{P}_0$) and not only the initial condition (i.e. the age at time $0$). In fact, they can give the same initial condition $u^{in}:$ for instance, $\left(\mathcal{A}_{N_-}^{1}\right)$ with $-T_{0}$  exponentially distributed with parameter $\alpha>0$ gives the same law for $-T_0$ as $\left(\mathcal{A}_{N_-}^{2}\right)$ with parameter $\alpha$. However, if we fix some non-negative real number $s$, one can show that $\Phi^{h}_{-,\mathbb{P}_{0}}(0,s)$ is different in those two cases. 
It is clear from the definitions that for every real number $z$, $q(0,s,z)=1$ and $K_{s}(0,z)=0$.
%
 Thus, in the first case,
$$
\Phi^{h}_{-,\mathbb{P}_{0}}(0,s)= \frac{\int_{-\infty}^{-s} h(-t_{0})   \alpha e^{\alpha t_{0}} \dd t_{0}}{\int_{-\infty}^{-s} \alpha e^{\alpha t_{0}} \dd t_{0}} = \frac{\int_{s}^{\infty}h(z)\alpha e^{-\alpha z} \dd z}{\int_{s}^{\infty}\alpha e^{-\alpha z}\dd z},
$$
while in the second case,
$
\Phi^{h}_{-,\mathbb{P}_{0}}(0,s) = \alpha\int_{-\infty}^{-s} h(-z) \dd z = \alpha\int_{s}^{\infty} h(w) \dd w.
$
Therefore $\Phi^{h}_{-,\mathbb{P}_{0}}$ clearly depends on $\mathbb{P}_{0}$ and not just on the distribution of the last point before 0, and so is $\rho_{\lambda, \mathbb{P}_0}$. \end{remark}


{\begin{remark}
If we  follow our first guest, $\rho_{\lambda,\P_0}$ would be either $\mu+\int_0^t h(t-x) u(\dd x,0)$ or 
$\mu+\int_{-\infty}^t h(t-x) u(\dd x,0)$. 
In particular, it would 
not depend on the age $s$. Therefore by \eqref{rhoPhi}, so 
would $\Phi_{\P_0}^{\mu,h}$. But for instance at time $t=0$, when $N_-$ is an homogeneous Poisson process of parameter $\alpha$, $\Phi_{\P_0}^{\mu,h}(0,s)= \mu + \alpha\int_s^{+\infty} h(w) \dd w$, which obviously depends on $s$. Therefore the intuition linking Hawkes processes and (PPS) does not apply. 
\end{remark}
}


\subsubsection{Linear Hawkes process with no past before time 0}

A classical framework in point processes theory is the case {in $\left(\mathcal{A}_{N_-}^1\right)$} where 
$T_{0}\rightarrow -\infty$, or equivalently, when $N$ has intensity
$\lambda(t,\mathcal{F}_{t-}^{N})=\mu +\int_{0}^{t-} h(t-x) N(\dd x)$.
The problem in this case is that the age at time $0$ is not finite. The age is only finite for times greater than the first spiking time $T_{1}$. 

Here again, the quantity $v(t,s)$ reveals more informative and easier to use: having the distribution of $T_0$ going to $-\infty$ means that $\mbox{Supp}(u^{in})$ goes to $+\infty$, so that the initial condition for $v$ 
tends to value uniformly $1$ for any $0\leq s<+\infty.$ If we can prove that the contribution of $\Phi^{h}_{-,\P_0}$ vanishes, the following system is a good candidate to be the limit system:
\begin{gather}\label{eq:edp:mass:creation}
\f{\p}{\partial {t}} v^\infty\left(t,s\right)+\f{\p}{\partial {s}}v^\infty\left(t,s\right) +\Phi_+^{\mu,h}\left(t,s\right)v^\infty\left(t,s\right)=0,\\
v^\infty\left(t,0\right)=1,\qquad v^\infty(0,s)=1,
\label{eq:edp:mass:creation:bound}
\end{gather}
where $\Phi^{\mu,h}_{+}$ is defined in Proposition~\ref{prop:conditional:intensity:hawkes1}. This leads us to the following proposition. 

\vspace{-0.2cm}
\begin{prop}\label{prop:limit:edp:support:infinity}
Under the assumptions and notations of Proposition \ref{prop:conditional:intensity:hawkes1}, consider for all $M\geq 0$, 
$v_{M}$ 
the unique solution of system~\eqref{eq:PDEPhi:v}-\eqref{eq:PDEPhi:bound:v} with 
$\Phi$ 
given by Proposition~\ref{prop:conditional:intensity:hawkes1}, case  $\left(\mathcal{A}_{N_-}^{1}\right)$, with $T_{0}$ uniformly distributed in $[-M-1,-M]$.
Then, as $M$ goes to infinity, $v_{M}$ converges uniformly on any set of the type $(0,T)\times(0,S)$ towards the unique solution $v^\infty$ of System \eqref{eq:edp:mass:creation}-\eqref{eq:edp:mass:creation:bound}. 
\end{prop}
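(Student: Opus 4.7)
The plan is to compare $v_M$ and $v^\infty$ directly through the method of characteristics, since both solve a linear first-order transport system of the form \eqref{eq:PDEPhi:v}--\eqref{eq:PDEPhi:bound:v}. The only differences between the two systems lie in (i) the damping coefficient $\Phi$ (which is $\Phi_{\mathbb{P}_0}^{\mu,h}=\Phi_+^{\mu,h}+\Phi_{-,\mathbb{P}_0}^{h}$ for $v_M$, and $\Phi_+^{\mu,h}$ for $v^\infty$), and (ii) the initial data $v_M^{in}$, which is the survival function of $-T_0$ uniform on $[M,M+1]$. Fix $T,S>0$ and assume $M>S$. Then $v_M^{in}(s)=1$ for every $s\in[0,S]$, so already the initial data agree with those of $v^\infty$ on the relevant range.

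Using characteristics, along the line $\tau\mapsto (t_0+\tau,s_0+\tau)$ one gets the explicit formulas
\[
v_M(t,s)=\exp\!\left(-\int_{a(t,s)}^{t}\Phi_{\mathbb{P}_0}^{\mu,h}\bigl(\tau,s-(t-\tau)\bigr)\,\dd\tau\right),\qquad v^\infty(t,s)=\exp\!\left(-\int_{a(t,s)}^{t}\Phi_+^{\mu,h}\bigl(\tau,s-(t-\tau)\bigr)\,\dd\tau\right),
\]
where $a(t,s)=\max(0,t-s)$ and, in both cases, the incoming data on the boundary $\{s=0\}$ or on $\{t=0,\,s\leq S\}$ equals $1$. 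Using the elementary inequality $|e^{-a}-e^{-b}|\leq|a-b|$ for $a,b\geq0$, this gives the pointwise bound
\[
|v_M(t,s)-v^\infty(t,s)|\;\leq\;\int_{a(t,s)}^{t}\bigl|\Phi_{-,\mathbb{P}_0}^{h}\bigl(\tau,s-(t-\tau)\bigr)\bigr|\,\dd\tau\;\leq\;T\,\sup_{(\tau,\sigma)\in(0,T)\times(0,S)}\bigl|\Phi_{-,\mathbb{P}_0}^{h}(\tau,\sigma)\bigr|.
\]
So the whole problem reduces to showing that $\Phi_{-,\mathbb{P}_0}^{h}\to 0$ uniformly on $(0,T)\times(0,S)$ as $M\to\infty$.

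For this last (and central) step, I would work from the explicit formula \eqref{eq:Phi:-:1pt} with density $f_0=\mathbf{1}_{[-M-1,-M]}$. First, from the definition \eqref{Kqdef} of $q$, since $h\in L^1(\R_+)$ and $G_s$ takes values in $[0,1]$,
\[
|\log q(t,s,t_0)|\;\leq\;2\int_{0}^{t}h(x-t_0)\,\dd x\;\leq\;2\int_{-t_0}^{-t_0+T}h\;\xrightarrow[-t_0\to\infty]{}0
\]
uniformly in $(t,s)\in(0,T)\times(0,S)$; hence $q(t,s,t_0)\to 1$ uniformly on $[-M-1,-M]$ as $M\to\infty$, which implies the denominator of \eqref{eq:Phi:-:1pt} tends to $1$. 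Next, using that $L_s$ is uniformly bounded in $L^1$ (Lemma \ref{unicityLG}) and $G_s\leq 1$, together with the $L^\infty$ bound on $h$, one has
\[
\int_{-M-1}^{-M}\bigl(h(t-t_0)+K_s(t,t_0)\bigr)\,\dd t_0\;\leq\;\int_{M}^{M+T+1}h(u)\,\dd u\;+\;\|h\|_\infty\!\!\int_{a(t,s)}^{t}\!\!(h+L_s)(t-x)G_s(t-x)\!\!\int_{x+M}^{x+M+1}\!\!h(v)\,\dd v\,\dd x,
\]
and both terms vanish as $M\to\infty$ by absolute continuity of the Lebesgue integral applied to $h\in L^1$ (using the uniform boundedness of $L_s$ in $L^1$ for the second term, via Fubini). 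Combining the numerator and denominator estimates yields the desired uniform convergence $\Phi_{-,\mathbb{P}_0}^{h}\to 0$ on $(0,T)\times(0,S)$.

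The main obstacle is controlling the numerator of $\Phi_{-,\mathbb{P}_0}^{h}$ uniformly in $(t,s)$, because $K_s$ depends through $L_s,G_s$ in a nontrivial way on the age variable $s$, and $h$ is only assumed to lie in $L^1\cap L^\infty$ without any pointwise decay or compact support. The correct viewpoint is to exploit the fact that the density $f_0$ is an indicator of a window that translates to $-\infty$: what one gains is not pointwise decay of $h$, but smallness in average of $h$ over a translating window, which is exactly what $h\in L^1$ provides. Once this averaging argument is made uniform using the uniform $L^1$-bound on $L_s$ from Lemma~\ref{unicityLG}, the rest of the argument is a direct comparison via characteristics and Gronwall-type inequality.
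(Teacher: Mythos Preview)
Your approach is essentially the same as the paper's: both solve the transport equations explicitly via characteristics, observe that for $M>S$ the initial data already agree on $(0,S)$, and reduce the comparison to showing $\Phi^h_{-,\mathbb{P}_0}\to 0$ uniformly on $(0,T)\times(0,S)$, which is done by bounding the denominator of \eqref{eq:Phi:-:1pt} below (the paper uses the cruder constant bound $q\geq e^{-2\|h\|_{L^1}}$, you show $q\to 1$; either works) and bounding the numerator via Fubini and the tail $\int_M^\infty h$, using the uniform $L^1$ bound on $L_s$ from Lemma~\ref{unicityLG}.

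Two small slips in your write-up, neither fatal: in your bound for $\int_{-M-1}^{-M}K_s(t,t_0)\,\dd t_0$ the $x$-integral should run over $[0,(t-s)\vee 0]$ (the actual domain in the definition of $K_s$), not $[a(t,s),t]$; and the factor $\|h\|_\infty$ in front of that integral is spurious --- after Fubini you get directly $\int_{x+M}^{x+M+1}h\leq\int_M^\infty h$ times $\int_0^{(t-s)\vee 0}(h+L_s)(t-x)G_s(t-x)\,\dd x\leq \|h\|_{L^1}+\|L_s\|_{L^1}$, which is exactly the paper's estimate.
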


\section*{Conclusion}
We present in this article a bridge between univariate point 
processes, that 
can model the behavior of one neuron through 
its spike 
train, and a deterministic age structured PDE introduced 
by Pakdaman, Perthame and Salort, named (PPS).  
More precisely Theorem \ref{prop: Expectation System} present
a PDE that is satisfied by the distribution $u$ of the age $s$ at time $t$, where the age represents the delay between time $t$ and the last spike before $t$. This is done in a very weak sense and some technical structure, namely $\left(\mathcal{P}_{Fubini}\right)$, is required. 

The main point is that the "firing rate" which is a deterministic quantity written as $p(s,X(t))$ in (PPS) becomes the conditional expectation of the intensity given the age at time $t$ in Theorem \ref{prop: Expectation System}.
This first makes clear that $p(s,X(t))$ should be interpreted as a hazard rate, which gives the probability that a neuron fires given that it has not fired yet. Next, it makes clearly rigorous several "easy guess" bridges between both set-ups  when the intensity only depends on the age. But it also explained why when the intensity has a more complex shape (Wold, Hawkes), this term can keep in particular the memory  of all that has happened before time~$0$. 

One of the main point of the present study is the Hawkes process, for which what was clearly expected was a legitimation of the term $X(t)$ in the firing rate $p(s,X(t))$ of (PPS), which 
models
 the synaptic integration. This is not the case, and the interlinked equations that have been found for the cumulative distribution function $v(t,\cdot)$ do not have a simple nor direct deterministic interpretation. However one should keep in mind that the present  bridge, in particular in the population wide approach, has been done for independent neurons.  This has been done to keep the complexity of the present work reasonable as a first step. But it is also quite obvious that interacting neurons cannot be independent. So one of the main question is: can we recover (PPS) as a limit with precisely a term of the form $X(t)$ if we consider multivariate Hawkes processes that really model interacting neurons ?

\vspace{-0.3cm}
\section*{Acknowledgment}
This research was partly supported by the European Research Council (ERC Starting Grant SKIPPERAD number 306321), by the french Agence Nationale de la Recherche (ANR 2011 BS01 010 01 projet Calibration) and by the interdisciplanary axis MTC-NSC of the University of Nice Sophia-Antipolis. MJC acknowledges support from the projects MTM2014-52056-P (Spain)  and P08-FQM-04267 from Junta de Andaluc\'ia (Spain).
We  warmly thank  Fran\c{c}ois Delarue for very fruitful discussions and ideas.
\appendix

\vspace{-0.2cm}
\section{Proofs linked with the PDE}

\vspace{-0.2cm}
\subsection{Proof of Proposition \ref{prop:Microscopic:System}}
\label{proofProp:Microscopic:System}
First, let us verify that  $U$ satisfies Equation~\eqref{upt}. For any $t> 0$,
$$
U(t,\dd s)=\sum_{i\geq0} \eta_{T_i}(t,\dd s) \mathds{1}_{0\leq t\leq T_{i+1}},
$$
by definition of $U$. Yet,
$\eta_{T_i}(t,\dd s) = \delta_{t-T_{i}}(\dd s) \mathds{1}_{t>T_{i}}$,
and the only $i\in \mathbb{N}$ such that $T_{i}<t\leq T_{i+1}$ is $i=N_{t-}$. So, for all $t> 0$,
$U(t,\dd s)= \delta_{t-T_{N_{t-}}}(\dd s) = \delta_{S_{t-}}(\dd s)$.

Secondly, let us verify that $U$ satisfies $\left(\mathcal{P}_{Fubini}\right)$. Let $\varphi\in \mathcal{M}_{c,b}(\R_+^2)$, 
and let $T$ be such that for all $t>T$, $\varphi^{(1)}_t=0$.
Then since $U(t,\dd s)=\sum_{i=0}^{+\infty} \eta_{T_i}(t,\dd s) \mathds{1}_{0\leq t\leq T_{i+1}}$,
\begin{multline*}
\left| \int_{\mathbb{R}_{+}}\left( \int_{\mathbb{R}_{+}} \varphi(t,s) U(t,\dd s)\right) \dd t \right| \leq   \int_{\mathbb{R}_{+}}\left( \int_{\mathbb{R}_{+}} |\varphi(t,s)| \sum_{i\geq 0} \eta_{T_i}(t,\dd s) \mathds{1}_{0\leq t\leq T_{i+1}}  \right) \dd t\\
= \sum_{i\geq 0}  \int_{\mathbb{R}_{+}} |\varphi(t,t-T_i)| \mathds{1}_{t> T_i} \mathds{1}_{0\leq t\leq T_{i+1}}   \dd t 
= \sum_{i\geq 0}  \int_{\max(0,T_i)}^{T_{i+1}} |\varphi(t,t-T_i)| \dd t\\
= {\int_0^{T_1} |\varphi(t,t-T_0)| +  \sum_{i/0< T_i<T} \int_{T_i}^{T_{i+1}} |\varphi(t,t-T_i)| \dd t.}
\end{multline*}
Since there is a finite number of points of $N$ between $0$ and 
$T$, 
on $\Omega$, this quantity  is finite and one can exchange $\sum_{i\geq 0} $ and $\int_{t=0}^{+\infty}\int_{s=0}^{+\infty}$. 
Therefore, since all the $\eta_{T_i}$ satisfy 
$\left(\mathcal{P}_{Fubini}\right)$ and 
$\varphi(t,s) \mathds{1}_{0\leq t\leq T_{i+1}}$ is in $\mathcal{M}_{c,b}(\R_+^2)$, so does $U$.

For the dynamics of $U$, similar computations lead for every 
$\varphi\in C^{\infty}_{c,b}(\mathbb{R_+}^2)$ to
$$
\int\varphi\left(t,s\right)U\left(\dd t,\dd s\right) = \sum_{i\geq 0}\int_{\max(0,-T_{i})}^{T_{i+1}-T_{i}}\varphi\left(s+T_{i},s\right)\dd s.$$
We also  have
\begin{align}
& \!\!\!\int\left(\f{\p}{\partial {t}}+\f{\p}{\partial {s}}\right)\varphi\left(t,s\right)U\left(\dd t,\dd s\right)  =  \sum_{i\geq 0}\int_{\max(0,-T_{i})}^{T_{i+1}-T_{i}}\left(\f{\p}{\partial {t}}+\f{\p}{\partial {s}}\right)\varphi\left(s+T_{i},s\right)\dd s\nonumber \\
&~~  =\sum_{i\geq 1} \left[\varphi\left(T_{i+1},T_{i+1}-T_{i}\right)-\varphi\left(T_{i},0\right)\right] + \varphi(T_{1},T_{1}-T_{0}) - \varphi(0,-T_{0}). \label{eq:Transport:entire}
\end{align}
It remains to express the term with
$\int_{x=0}^{\lambda\left(t,\mathcal{F}_{t-}^{N}\right)}\Pi\left(\dd t,\dd x\right) =\sum_{i\geq 0}\delta_{T_{i+1}}(\dd t)$, that is
\begin{align}
&\!\!\!\!\int\varphi\left(t,s\right)U\left(t,\dd s\right)\sum_{i\geq 0}\delta_{T_{i+1}}\left(\dd t\right)  =   \int \left(\int \varphi\left(t,s\right)U\left(t,\dd s\right) \right) \sum_{i\geq 0}\delta_{T_{i+1}}\left(\dd t\right)\nonumber \\
&\hspace{3cm}  =  \int \varphi\left(t,S_{t-}\right) \sum_{i\geq 0}\delta_{T_{i+1}}(\dd t)
  =  \sum_{i\geq 0}\varphi\left(T_{i+1},T_{i+1}-T_{i}\right),\label{eq:Lose:entire}
\end{align}
and, since $\int U\left(t,\dd s\right)=1$ for all $t>0$,
\begin{equation}
\int \int\varphi\left(t,0\right) U\left(t,\dd s\right) \sum_{i\geq 0}\delta_{T_{i+1}}(\dd t) =\sum_{i\geq 0}\varphi\left(T_{i+1}.0\right),\label{eq:Reinitialisation:entire}
\end{equation}
Identifying all the terms in the right-hand side of Equation~\eqref{eq:Transport:entire}, 
this lead to Equation~\eqref{eq:edp:micro:weak}, which is the weak formulation of System~\eqref{eq:edp:micro}--\eqref{eq:edp:micro:init}.

\vspace{-0.2cm}
\subsection{Proof of Proposition \ref{prop:Fubini:nu}}

Let $\varphi\in \mathcal{M}_{c,b}(\R_+^2)$, 
and let $T$ be such that for all $t>T$, $\varphi^{(1)}_t=0$.
Then,
\begin{equation}\label{eq:upt:bounded}
\int |\varphi(t,s)| U(t,\dd s) \leq || \varphi ||_{L^\infty} \mathds{1}_{0\leq t \leq T}
,
\end{equation}
since at any fixed time $t>0$, $\int U(t,\dd s) = 1$.
Therefore,
the expectation $\mathbb{E}\left[ \int \varphi(t,s) U(t,\dd s) \right]$ is well-defined and finite and so $u(t,.)$ is well-defined.

On the other hand, at any fixed age $s$,
\begin{eqnarray*}
\int |\varphi(t,s)| U(\dd t,s) & = & \sum_{i=0}^{\infty} |\varphi(s+T_{i},s)| \mathds{1}_{0\leq s \leq T_{i+1}-T_{i}}
\\
& = & 
\sum_{i\geq 0} |\varphi(s+T_{i},s)| \mathds{1}_{0 \leq s+T_{i} \leq T} \mathds{1}_{0\leq s \leq T_{i+1}-T_{i}},
\end{eqnarray*}
because for all $t>T$, $\varphi^{(1)}_t=0$. Then, one can deduce the following bound
\begin{multline*}
\int |\varphi(t,s)| U(\dd t,s) \\ 
\leq  |\varphi(s+T_{0},s)| \mathds{1}_{-T_{0}\leq s \leq T-T_{0}} \mathds{1}_{0\leq s \leq T_{1}-T_{0}}+ \sum_{i\geq 1} |\varphi(s+T_{i},s)| \mathds{1}_{0\leq s \leq T} \mathds{1}_{T_{i}\leq T}\\
 \leq ||\varphi||_{L^\infty} \left( \mathds{1}_{-T_{0}\leq s \leq T-T_{0}} + N_{T}  \mathds{1}_{0\leq s \leq T}\right) .
\end{multline*}
Since the intensity is $L_{loc}^{1}$ in expectation, 
$
\mathbb{E}\left[ N_{T} \right]= \mathbb{E}\left[ \int_{0}^{T} \lambda(t,\mathcal{F}_{t-}^{N}) \dd t \right]
\!\!
< 
\!\!
\infty
$
{and}
\begin{equation}\label{eq:ups:bounded}
\mathbb{E}\left[ \int |\varphi(t,s)| U(\dd t,s) \right] \leq ||\varphi||_{L^\infty} \left( \mathbb{E}\left[ \mathds{1}_{-T_{0}\leq s\leq T-T_{0}} \right] + \mathbb{E}\left[ N_{T} \right]\mathds{1}_{0\leq s \leq T} \right),
\end{equation}
so the expectation is well-defined and finite and so $u(\cdot,s)$ is well-defined.

Now, let us show $\left(\mathcal{P}_{Fubini}\right)$. 
{First} Equation~\eqref{eq:upt:bounded} implies
$$
\int\ \mathbb{E}\left[ \int |\varphi(t,s)| U(t,\dd s) \right] \dd t \leq T || \varphi ||_{L^\infty},
$$
and Fubini's theorem implies that the following integrals are well-defined and that the following equality holds,
\begin{equation}\label{eq:fubini:nut}
\int\ \mathbb{E}\left[ \int \varphi(t,s) U(t,\dd s) \right] \dd t = \mathbb{E}\left[ \int \int  \varphi(t,s) U(t,\dd s) \dd t \right].
\end{equation}
Secondly, Equation~\eqref{eq:ups:bounded} implies
$$
\int\ \mathbb{E}\left[ \int |\varphi(t,s)| U(\dd t,s) \right] \dd s \leq  || \varphi ||_{L^\infty} \left( T + T \mathbb{E}\left[ N_{T} \right] \right),
$$
by exchanging the integral with the expectation and Fubini's theorem implies that the following integrals are well-defined and that the following equality holds,
\begin{equation}\label{eq:fubini:nus}
\int\ \mathbb{E}\left[ \int \varphi(t,s) U(\dd t,s) \right] \dd s = \mathbb{E}\left[ \int \int  \varphi(t,s) U(\dd t,s) \dd s \right].
\end{equation}
Now, it only remains to use $\left(\mathcal{P}_{Fubini}\right)$ for $U$ to deduce that the right members of Equations~\eqref{eq:fubini:nut} and~\eqref{eq:fubini:nus} are equal. Moreover, $\left(\mathcal{P}_{Fubini}\right)$ for $U$ tells that these two quantities are equal to
$\mathbb{E}\left[ \int \int  \varphi(t,s) U(\dd t,\dd s) \right]$.
This 
{concludes the proof.}

\vspace{-0.2cm}
\subsection{Proof of Theorem \ref{prop: Expectation System}}
\label{sec:proof:thm:expectation:system}
Let 
$
\rho_{\lambda,\mathbb{P}_0}(t,s):=\liminf_{\varepsilon\downarrow 0} \frac{\mathbb{E}\left[ \lambda(t,\mathcal{F}_{t-}^{N}) \mathds{1}_{\left| S_{t-}-s \right|\leq \varepsilon} \right]}{\mathbb{P}\left( \left| S_{t-}-s \right|\leq \varepsilon \right)},
$
for every $t>0$ and $s\geq 0$.
Since $(\lambda(t,\mathcal{F}_{t-}^{N}))_{t>0}$ and $(S_{t-})_{t>0}$ 
are predictable processes, and a fortiori progressive processes 
(see page $9$ in \cite{Bre}), 
$\rho_{\lambda,\mathbb{P}_0}$ is a measurable function of $(t,s)$.

For every $t>0$, let 
$\mu_{t}$ {be} 
 the measure defined by $\mu_{t}(A) = \mathbb{E}\left[ \lambda(t,\mathcal{F}_{t-}^{N}) \mathds{1}_{A}(S_{t-}) \right]$ for 
all measurable set $A$. 
{Since} Assumption $\left(\mathcal{A}^{\mathbb{L}^1,exp}_{\lambda,loc}\right)$
implies {that}  $\dd t$-a.e. 
$\esp{\lambda(t,\mathcal{F}_{t-}^{N})}<\infty$
and 
since $u(t,\dd s)$ is  the distribution of $S_{t-}$,
$\mu_{t}$  is absolutely continuous with respect to $u(t,\dd s)$ for $\dd t$-almost every $t$.

Let $f_{t}$ denote the Radon Nikodym derivative of $\mu_{t}$ with respect to $u(t,\dd s)$. For $u(t,\dd s)$-a.e. 
$s$, $f_{t}(s)= \mathbb{E}\left[ \lambda(t,\mathcal{F}_{t-}^{N}) 
\middle|\, S_{t-}=s \right]$ by definition of the conditional expectation. 
Moreover, a Theorem of Besicovitch \cite[Corollary 2.14]{mattila_1999} claims that for $u(t,\dd s)$-a.e. $s$, $f_{t}(s)= \rho_{\lambda,\mathbb{P}_0}(t,s)$.
Hence, {the equality}$\rho_{\lambda,\mathbb{P}_0}(t,s)= \mathbb{E}\left[ \lambda(t,\mathcal{F}_{t-}^{N}) \middle|\, S_{t-}=s \right]$ {holds}
$u(t,\dd s)\dd t=u(\dd t,\dd s)$-almost everywhere.

Next, in order to use $\left(\mathcal{P}_{Fubini}\right)$, let us note that for any $T,K>0$, 
\begin{equation}\label{eq:rho:bounded:measurable}
\rho_{\lambda,\mathbb{P}_{0}}^{K,T}: (t,s)\mapsto \left(\rho_{\lambda,\mathbb{P}_0}(t,s)\wedge K\right) {\bf 1}_{0\leq t \leq T}  \in \mathcal{M}_{c,b}(\R_+^2)
\end{equation}
Hence, $\int \int \rho_{\lambda,\mathbb{P}_{0}}^{K,T}(t,s) u(\dd t,\dd s) = \int \left( \int \rho_{\lambda,\mathbb{P}_{0}}^{K,T}(t,s) u(t,\dd s)\right)  \dd t $
which is always upper bounded by
$\int_0^T\left(\int \rho_{\lambda,\mathbb{P}_0}(t,s) u(t,\dd s)\right) \dd t=\int_0^T \mu_t(\R_+) \dd t = \int_0^T \mathbb{E}\left[ \lambda(t,\mathcal{F}_{t-}^{N})\right] 
\dd t<\infty.$

Letting $K\to \infty$, one has that $\int_0^T \int \rho_{\lambda,\mathbb{P}_0}(t,s)u(\dd t,\dd s)$ is finite for all $T>0$.

Once $\rho_{\lambda,\mathbb{P}_0}$ correctly defined, the proof of Theorem~\ref{prop: Expectation System} is  a direct consequence of Proposition \ref{prop:Microscopic:System}. 

More precisely, let us show that~\eqref{eq:edp:micro:weak} implies~\eqref{eq:edp:expect:weak}.
Taking the expectation of~\eqref{eq:edp:micro:weak} gives that for all $\varphi\in C^{\infty}_{c,b}(\R_+^2)$,
\begin{multline}\label{eq:edp:weak:expect-1}
\mathbb{E}\left[\int \left[\varphi\left(t,s\right)-\varphi(t,0)\right]
 \left(\int_{x=0}^{\lambda\left(t,\mathcal{F}_{t-}^{N}\right)} \Pi\left(\dd t,\dd x\right)\right) U\left(t,\dd s\right)\right] -\int\varphi\left(0,s\right)u^{in}\left(\dd s\right)\\
 -\int\left(\partial_{t}+\partial_{s}\right) \varphi\left(t,s\right)u\left(\dd t,\dd s\right) =0.
\end{multline}

Let us denote $\psi(t,s):=\varphi(t,s)-\varphi(t,0)$. Due to
Ogata's thinning construction,
$\left(\int_{x=0}^{\lambda\left(t,\mathcal{F}_{t-}^{N}\right)} \Pi\left(\dd t,\dd x\right)\right) = N(\dd t) \mathds{1}_{t>0}$
where $N$ is the point process constructed by thinning, and so,
\begin{equation}\label{eq:toto1}
\! \! \mathbb{E}\left[\int\psi\left(t,s\right) \left(\int_{x=0}^{\lambda\left(t,\mathcal{F}_{t-}^{N}\right)} \Pi\left(\dd t,\dd x\right)\right) U\left(t,\dd s\right)\right] =
\mathbb{E}\left[\int_{t>0} \psi\left(t,S_{t-}\right) N(\dd t) \right].
\end{equation}
But $\psi(t,S_{t-})$ is a  $(\mathcal{F}_{t}^{N})$-predictable process and
$$
\mathbb{E}\left[ \int_{t>0} | \psi(t,S_{t-}) | \lambda(t,\mathcal{F}_{t-}^{N}) \dd t  \right]\leq \|\psi\|_{L^\infty} \  \mathbb{E}\left[ \int_{0}^{T}\lambda(t,\mathcal{F}_{t-}^{N}) \dd t \right] < \infty,
$$
hence, using the martingale property of the predictable intensity,
\begin{equation}\label{eq:toto2}
\mathbb{E}\left[\int_{t>0} \psi\left(t,S_{t-}\right) N(\dd t) \right] = \mathbb{E}\left[\int_{t>0} \psi\left(t,S_{t-}\right) \lambda\left(t,\mathcal{F}_{t-}^{N}\right) \dd t \right].
\end{equation}
Moreover, thanks to Fubini's Theorem, the right-hand term is finite and equal to $\int \mathbb{E}[ \psi\left(t,S_{t-}\right) \lambda(t,\mathcal{F}_{t-}^{N}) ]\dd t$,  which can also be seen as
\begin{equation}\label{eq:toto3}
\int \mathbb{E}\left[ \psi\left(t,S_{t-}\right) \rho_{\lambda,\mathbb{P}_{0}}(t,S_{t-}) \right]\dd t = \int \psi(t,s) \rho_{\lambda,\mathbb{P}_{0}}(t,s) u(t,\dd s) \dd t.
\end{equation}
For all $K>0$, $\left( (t,s)\mapsto \psi(t,s) \left(\rho_{\lambda,\mathbb{P}_{0}}(t,s)\wedge K\right) \right) \in \mathcal{M}_{c,b}(\R_+^2)$ and, from $\left(\mathcal{P}_{Fubini}\right)$, it is clear that
$
\int \! \psi(t,s) \left(\rho_{\lambda,\mathbb{P}_{0}}(t,s)\wedge K\right) u(t,\dd s) \dd t = \int \! \psi(t,s) \left(\rho_{\lambda,\mathbb{P}_{0}}(t,s)\wedge K\right) u(\dd t,\dd s).$ 
Since one can always upper-bound this quantity in absolute value by $\|\psi\|_{L^\infty} \int_0^T \int_s \rho_{\lambda,\mathbb{P}_0}(t,s) u(\dd t,\dd s)$, this is finite. Letting $K\to \infty$ one can show that
\begin{equation}\label{eq:toto4}
\int \psi(t,s) \rho_{\lambda,\mathbb{P}_{0}}(t,s) u(t,\dd s) \dd t = \int \psi(t,s) \rho_{\lambda,\mathbb{P}_{0}}(t,s) u(\dd t,\dd s).
\end{equation}
Gathering~\eqref{eq:toto1}-\eqref{eq:toto4} with~\eqref{eq:edp:weak:expect-1} gives~\eqref{eq:edp:expect:weak}.

\vspace{-0.2cm}
\subsection{Proof of Corollary \ref{prop:Macroscopic:System}}
For all $i\in \mathbb{N}^{*}$, let us denote $N^{i}_{+}=N^{i}\cap (0,+\infty)$ and $N^{i}_{-}=N^{i}\cap \mathbb{R}_{-}$. Thanks to Proposition~\ref{prop:inversion:theorem}, the processes $N_{+}^{i}$ can be seen as constructed via thinning of independent Poisson processes on $\mathbb{R}_{+}^{2}$.
Let $(\Pi^{i})_{i \in \mathbb{N}}$ be the sequence of point measures associated to independent Poisson processes of intensity $1$ on $\mathbb{R}_{+}^{2}$ given by Proposition~\ref{prop:inversion:theorem}.
Let 
$T_0^{i}$ denote the {closest} point 
{to 0 in}
$N^{i}_{-}$. In particular, $(T_0^{i})_{i \in \mathbb{N}^{*}}$ is a sequence of i.i.d. random variables.

For each $i$, let $U^{i}$ denote the solution of the microscopic equation corresponding to $\Pi^{i}$ and $T_0^{i}$ as defined in Proposition~\ref{prop:Microscopic:System} by~\eqref{eq:def:Upsilon}. Using~\eqref{upt}, it is clear that $\sum_{i=1}^{n} \delta_{S_{t-}^{i}}(\dd s)=\sum_{i=1}^{n}  U^{i}(t,\dd s)$ for all $t>0$. Then, for every $\varphi \in C^{\infty}_{c,b}(\mathbb{R}_+^{2})$,
\begin{equation*}
\int \varphi(t,s) \left( \frac{1}{n}\sum_{i=1}^{n} \delta_{S_{t}^{i}}(\dd s) \right) = \frac{1}{n} \sum_{i=1}^{n} \int \varphi(t,s) U^{i}(t,\dd s).
\end{equation*}
The right-hand side is a sum $n$ i.i.d. random variables with mean $\int \varphi(t,s) u(t,\dd s)$, so~\eqref{eq:convergence:LLN} clearly follows from the law of large numbers.

\vspace{-0.2cm}
\section{Proofs linked with the various examples}
\label{ApExamples}

\vspace{-0.2cm}
\subsection{Renewal process} \label{ApRenewal}

\begin{prop}
With the notations of Section \ref{secPP}, let $N$ be a point process on $\mathbb{R}$, with predictable age process $(S_{t-})_{t> 0}$, 
such that $T_{0}=0$ a.s. 
The following statements are equivalent:
\begin{enumerate}[(i)]
\item $N_{+}=N\cap (0,+\infty)$ is a renewal process with ISI's distribution given by some density $\nu:\mathbb{R}_{+}\rightarrow \mathbb{R}_{+}$.
\item $N$ admits $\lambda(t,\mathcal{F}_{t-}^{N})=f(S_{t-})$ as an intensity on $(0,+\infty)$ and $\left( \lambda(t,\mathcal{F}_{t-}^{N}) \right)_{t> 0}$ satisfies  $\left(\mathcal{A}^{\mathbb{L}^1,a.s.}_{\lambda, loc}\right)$, for some $f:\mathbb{R}_{+}\rightarrow \mathbb{R}_{+}$.
\end{enumerate}
In such a case,  for all $x\geq 0$,
$f$ and $\nu$
{ satisfy}
\begin{align}
& {\bullet}~ \nu(x)=f(x)\exp(-\int_{0}^{x}f(y) dy) {\mbox{ with the convention } \exp(-\infty)=0,} \label{eq:relation:nu:f:renewal}\\
& {\bullet}~ f(x)=\frac{\nu(x)}{\int_{x}^{\infty} \nu(y) dy}{\quad \mbox{ if } \int_{x}^{\infty} \nu(y) dy\not =0, \mbox{ else } f(x)=0. }\label{eq:relation:f:nu:renewal}
\end{align}

\end{prop}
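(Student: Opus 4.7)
The plan is to prove the two implications separately, exploiting Ogata's thinning construction (Section~\ref{sec:Ogata:Thinning}) to avoid re-developing martingale machinery, and then reading off the explicit relations~\eqref{eq:relation:nu:f:renewal}--\eqref{eq:relation:f:nu:renewal} from the computation of the survival function of a single ISI.

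For the implication $(ii)\Rightarrow(i)$, I would fix the intensity $\lambda(t,\mathcal{F}_{t-}^{N})=f(S_{t-})$, assume $T_0=0$ a.s., and construct $N_+$ by Ogata's thinning of an independent Poisson measure $\Pi$ of intensity $1$ on $\R_+^2$; assumption $(\mathcal{A}^{\mathbb{L}^1,a.s.}_{\lambda,loc})$ ensures the construction is well-defined on any $[0,T]$. Conditioning on $\mathcal{F}_{T_k}^N$ and using that between two successive points $T_k$ and $T_{k+1}$ one has $S_{u-}=u-T_k$, the standard exponential-formula computation gives
\begin{equation*}
\mathbb{P}\left(T_{k+1}-T_k>s\,\middle|\,\mathcal{F}_{T_k}^{N}\right)
=\mathbb{P}\left(\Pi\bigl(\{(u,x)\,:\,T_k<u\leq T_k+s,\,x\leq f(u-T_k)\}\bigr)=0\,\middle|\,\mathcal{F}_{T_k}^{N}\right)
=\exp\!\left(-\int_0^s f(y)\,\dd y\right),
\end{equation*}
which is deterministic. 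Hence the successive ISIs are i.i.d.\ with survival function $\bar\nu(s):=e^{-\int_0^s f}$ and, where this is differentiable, density $\nu(s)=f(s)\bar\nu(s)$, proving~\eqref{eq:relation:nu:f:renewal} and showing that $N_+$ is a renewal process. The convention $\exp(-\infty)=0$ accommodates the case where $\int_0^x f=+\infty$, which simply means the ISI has support in $[0,x]$.

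For the reverse implication $(i)\Rightarrow(ii)$, I would define $f$ by~\eqref{eq:relation:f:nu:renewal} (setting $f=0$ beyond the support of $\nu$) and consider the point process $\tilde N$ obtained by Ogata's thinning with predictable intensity $\tilde\lambda(t,\mathcal{F}_{t-}^{\tilde N})=f(\tilde S_{t-})$ and $\tilde T_0=0$. The forward implication applied to $\tilde N$ shows that $\tilde N_+$ is a renewal process whose ISI density is $f(x)\exp(-\int_0^x f)$; a direct computation using $\int_x^\infty\nu=\bar\nu(x)$ and $f=-\bar\nu'/\bar\nu$ checks that this coincides with $\nu$. Thus $\tilde N_+$ and $N_+$ have the same finite-dimensional distributions. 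Since the distribution of a simple point process on $(0,\infty)$ started at $0$ is uniquely determined by its ISI law, $N_+$ and $\tilde N_+$ have the same law; in particular, $f(S_{t-})$ is a version of the stochastic intensity of $N$ on $(0,\infty)$, and $(\mathcal{A}^{\mathbb{L}^1,a.s.}_{\lambda,loc})$ follows from the a.s.\ finiteness of $N_T$ (which is just the number of ISIs fitting in $[0,T]$ and thus a.s.\ finite for a renewal process with density on $\R_+$).

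The formulas~\eqref{eq:relation:nu:f:renewal}--\eqref{eq:relation:f:nu:renewal} are then extracted from the identity $\bar\nu(x)=\exp(-\int_0^x f)$ established above: differentiating gives the first formula, and solving for $f$ on $\{\bar\nu>0\}$ gives the second. The main technical care I foresee is bookkeeping at the boundary of the support of $\nu$ (where $\bar\nu$ vanishes and $f$ may be set to $0$ or $+\infty$); the $\exp(-\infty)=0$ convention together with the assumption $(\mathcal{A}^{\mathbb{L}^1,a.s.}_{\lambda,loc})$ makes the two conventions compatible. A secondary subtlety is the measurability of $f$, which is immediate from~\eqref{eq:relation:f:nu:renewal} since $\nu$ is measurable and $\bar\nu$ is continuous.
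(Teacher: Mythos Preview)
Your proposal is correct and follows essentially the same route as the paper. The only structural difference is in $(ii)\Rightarrow(i)$: the paper invokes Point~(2) of Proposition~\ref{prop:markovian:property} (the general Wold result) applied with $k=0$, whereas you carry out the direct thinning/exponential-formula computation yourself; but your computation is precisely the $k=0$ case of that proposition's proof, so the two arguments are the same at the level of ideas. For $(i)\Rightarrow(ii)$ your argument (define $f$ from $\nu$, build $\tilde N$ with intensity $f(\tilde S_{t-})$, apply the forward direction to identify $\tilde\nu=\nu$, conclude by equality in law) is exactly the paper's argument, including the verification that $\tilde\nu=\nu$ via the identity $\bar\nu(x)=\exp(-\int_0^x f)$. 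One small wording point: in $(ii)\Rightarrow(i)$ the process $N$ is \emph{given}, not constructed, so either invoke the Inversion Theorem (Proposition~\ref{prop:inversion:theorem}) to realize $N$ as a thinning of some $\Pi$, or simply use the standard fact that the conditional survival function of the next jump given $\mathcal{F}_{T_k}^N$ equals $\exp(-\int_0^s f)$ directly from the compensator characterization; either fix is routine.
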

\begin{proof}
{For (ii) $\Rightarrow$ (i).} Since $T_{0}=0$ a.s., {Point (2) of Proposition \ref{prop:markovian:property} given later on for the general Wold case}
implies that the ISI's of $N$ forms a 
Markov
chain of order $0$ i.e. they are i.i.d. with density given by~\eqref{eq:relation:nu:f:renewal}.

{For (i) $\Rightarrow$ (ii).}
Let $x_{0}=\inf \{x\geq 0,  \int_{x}^{\infty} \nu(y) dy=0 \}$. It may be infinite. Let us define $f$ by~\eqref{eq:relation:f:nu:renewal} for every $0 \leq x< x_{0}$ and let $\tilde{N}$ be a point process on $\mathbb{R}$ such that $\tilde{N}_{-}=N_{-}$ and $\tilde{N}$ admits $\lambda(t,\mathcal{F}_{t-}^{\tilde{N}})=f(S^{\tilde{N}}_{t-})$ as an intensity on $(0,+\infty)$ where $(S^{\tilde{N}}_{t-})_{t>0}$ is the predictable age process associated to $\tilde{N}$. Applying {(ii) $\Rightarrow$ (i)}
to $\tilde{N}$ 
{gives}
that the ISI's of $\tilde{N}$ are i.i.d. with density given by
$$
\tilde{\nu}(x)=\frac{\nu(x)}{\int_{x}^{\infty}\nu(y) dy} \exp\left( -\int_{0}^{x} \frac{\nu(y)}{\int_{y}^{\infty}\nu(z) \dd z} dy \right),
$$
for every $0 \leq x< x_{0}$ and $\tilde{\nu}(x)=0$ for $x\geq x_{0}$.
It is clear that $\nu=\tilde{\nu}$ since the function 
$
x\mapsto \frac{1}{ \int_{x}^{\infty}\nu(y) dy} \exp \left( -\int_{0}^{x} \frac{\nu(y)}{\int_{y}^{\infty}\nu(z) \dd z} dy \right) 
$ 
is differentiable with derivative equal to $0$. 
{Since $N$ and $\tilde{N}$ are renewal processes with same density $\nu$ and same first point $T_0=0$, they have the same distribution.}
Since the intensity characterizes a point process, $N$ {also} admits $\lambda(t,\mathcal{F}_{t-}^{N})=f(S^{N}_{t-})$ as an intensity on $(0,+\infty)$. 
Moreover, since $N$ is a renewal process, it is non-explosive in finite time and so $\left( \lambda(t,\mathcal{F}_{t-}^{N}) \right)_{t> 0}$ satisfies $\left(\mathcal{A}^{\mathbb{L}^1,a.s.}_{\lambda, loc}\right)$.
\end{proof}

\vspace{-0.2cm}
\subsection{Generalized Wold processes}
\label{ApWold}
In this Section, we suppose that there exists $k\geq 0$ such that the underlying point process $N$ has intensity
\begin{equation}\label{eqwold}
\lambda\left(t,\mathcal{F}_{t-}^{N} \right)= f(S_{t-},A^1_t,...,A^k_t),
\end{equation}
where $f$ is a function and the $A^{i}$'s are defined by Equation~\eqref{eq:successive:ages}.

\subsubsection{Markovian property and the resulting PDE}

{Let   $N$ be a point process of intensity given by \eqref{eqwold}. If $T_{-k}>-\infty$, its associated age process $(S_t)_t$ can be defined up to $t>T_{-k}$. Then let, for any integer $i\geq -k$, 
\begin{equation}\label{eqa}
\mathbb{A}_{i} = T_{i+1}-T_{i}=S_{T_{i+1}-}
\end{equation} and denote $(\mathcal{F}_{i}^{\mathbb{A}})_{i\geq -k}$ the natural filtration associated to $(\mathbb{A}_{i})_{i\geq - k}$.}

For any $t\geq 0$, and point process $\Pi$ on $\mathbb{R}_{+}^{2}$, let us denote $\Pi_{\geq t}$ (resp. $\Pi_{>t}$) the restriction to $\mathbb{R}_{+}^{2}$ (resp. $(0,+\infty)\times \mathbb{R}_{+}$) of the point process $\Pi$ shifted $t$ time units to the left on the first coordinate. That is, $\Pi_{\geq t}(C\times D)=\Pi((t+C)\times D)$ for all $C\in \mathcal{B}(\mathbb{R}_{+}), D\in \mathcal{B}(\mathbb{R}_{+})$ (resp. $C\in \mathcal{B}((0,+\infty))$).

\begin{prop}\label{prop:markovian:property}

Let consider $k$ a non-negative integer, 
$f$ some non negative function on $\R_+^{k+1}$
and
$N$ a generalized Wold process { of intensity given by \eqref{eqwold}}. 
Suppose that $\mathbb{P}_{0}$ is such that 
$\mathbb{P}_{0}(T_{-k}>-\infty) = 1$ 
and {that}
$\left( \lambda(t,\mathcal{F}_{t-}^{N}) \right)_{t> 0}$ satisfies  $\left(\mathcal{A}^{\mathbb{L}^1,a.s.}_{\lambda, loc}\right)$. Then,
\begin{enumerate}
\item {If $(X_{t})_{t\geq 0}=\left( (S_{t-}, A^1_t,...,A^k_t)\right)_{t\geq 0}$, then} for any 
finite non-negative stopping time $\tau$, 
$(X^{\tau}_{t})_{t{\geq}
 0}=(X_{t+\tau})_{t{\geq}
  0}$ is independent of $\mathcal{F}_{\tau-}^{N}$ given $X_{\tau}$.
\item the process $(\mathbb{A}_{i})_{i\geq 1}$ {given by \eqref{eqa}} forms a Markov chain of order $k$ with transition measure given by
\begin{equation}\label{eq:relation:nu:f:generalized:wold}
\nu(\dd x,y_{1},...,y_{k})= f(x,y_{1},...,y_{k}) \exp\left( -\int_{0}^{x} f(z,y_{1},...,y_{k}) \dd z \right) \dd x.
\end{equation}
If $T_0=0$ a.s., this holds for $(\mathbb{A}_{i})_{i\geq 0}$.
\end{enumerate}
 If $f$ is continuous then {$\mathcal{G}$}, the infinitesimal generator of $(X_{t})_{t\geq 0}$, is given by
\begin{multline}\label{eq:infinitesimal:generator:wold}
{\forall \phi \in C^{1}(\mathbb{R}_{+}^{k+1})}, \quad (\mathcal{G}\phi)(s,a_{1},... ,a_{k})=  \\
\f{\p}{\p s}\phi(s,a_{1},... ,a_{k}) 
+ f(s,a_{1},... ,a_{k})\left( \phi(0,s,a_{1},... ,a_{k-1}) - \phi(s,a_{1},... ,a_{k}) \right).
\end{multline}
\end{prop}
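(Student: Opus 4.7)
\textbf{Proof plan for Proposition~\ref{prop:markovian:property}.}

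The whole strategy relies on Ogata's thinning representation from Section~\ref{sec:Ogata:Thinning}: on $\{T_{-k}>-\infty\}$, the process $(X_t)_{t\geq 0}$ can be read off from $X_0$ and the restriction of the driving Poisson measure $\Pi$ to $\R_+^2$ alone, because the intensity $\lambda(t,\mathcal{F}_{t-}^N)=f(X_t)$ depends on the past only through the state $X_t$, and because the deterministic dynamics of $X_t$ between jumps (transport in $s$ at speed $1$, other coordinates frozen) and at jumps (the shift described in~\eqref{eq:dynamic:successive:ages}) both involve only the current state. For point~(1), let $\tau$ be a finite non-negative stopping time. First I would apply the strong Markov property of the planar Poisson process $\Pi$ to conclude that $\Pi_{\geq \tau}$ is a Poisson process on $\R_+^2$ independent of $\mathcal{F}_\tau^\Pi$. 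Next, since Ogata's thinning builds $N\cap(-\infty,\tau)$ from $\mathcal{F}_0^N$ together with the restriction of $\Pi$ to $[0,\tau)\times\R_+$, one gets $\mathcal{F}_{\tau-}^N\subset \mathcal{F}_0^N\vee \sigma(\Pi\cap([0,\tau)\times\R_+))$, so $\Pi_{\geq \tau}$ is independent of $\mathcal{F}_{\tau-}^N$. Running Ogata's thinning \emph{forward} from time $\tau$, starting from $X_\tau$, one constructs $(X_{t+\tau})_{t\geq 0}$ as a measurable functional $\Psi(X_\tau,\Pi_{\geq \tau})$. Since $X_\tau$ is $\mathcal{F}_{\tau-}^N$-measurable and $\Pi_{\geq \tau}$ is independent of $\mathcal{F}_{\tau-}^N$, the conditional distribution of $(X^\tau_t)_{t\geq 0}$ given $\mathcal{F}_{\tau-}^N$ coincides with its conditional distribution given $X_\tau$, which is the Markov property claimed.

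For point~(2), I would apply the strong Markov property at the stopping times $T_i$ for $i\geq 1$ (or $i\geq 0$ when $T_0=0$ a.s.). Letting $t\searrow 0$ in $X_{T_i+t}$ yields the right-limit $X_{T_i+}=(0,\mathbb{A}_{i-1},\dots,\mathbb{A}_{i-k})$, so that conditional on $\mathcal{F}_{T_i-}^N$ the law of $\mathbb{A}_i=T_{i+1}-T_i$ depends only on $(\mathbb{A}_{i-1},\dots,\mathbb{A}_{i-k})$. Then, again through Ogata's thinning applied from $T_i$ onward, $\mathbb{A}_i$ is distributed as the first point of a simple point process on $\R_+$ with intensity $x\mapsto f(x,\mathbb{A}_{i-1},\dots,\mathbb{A}_{i-k})$; the standard exponential-formula computation, using that under $\left(\mathcal{A}^{\mathbb{L}^1,a.s.}_{\lambda, loc}\right)$ the construction does not explode, yields the survival function $\exp(-\int_0^x f(z,y_1,\dots,y_k)\,\dd z)$ and hence the density~\eqref{eq:relation:nu:f:generalized:wold}.

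For point~(3), I would compute the infinitesimal generator by the classical decomposition on the event of having zero or one jump in a small interval. For $\phi\in C^1(\R_+^{k+1})$ and $x=(s,a_1,\dots,a_k)$, write
\begin{align*}
\mathbb{E}[\phi(X_t)\mid X_0=x]
&=\phi(x+(t,0,\dots,0))\,\mathbb{P}_x(T_1>t)\\
&\quad+\mathbb{E}_x\!\left[\phi(X_t)\mathds{1}_{T_1\leq t}\right],
\end{align*}
where $\mathbb{P}_x(T_1>t)=\exp(-\int_0^t f(s+u,a_1,\dots,a_k)\,\dd u)$ from step~(2). Expanding the first term gives, as $t\to 0^+$, the contribution $\partial_s\phi(x)-f(x)\phi(x)$; the second term, using continuity of $f$ together with the deterministic jump rule which sends the state to $(0,s,a_1,\dots,a_{k-1})$, contributes $f(x)\phi(0,s,a_1,\dots,a_{k-1})+o(1)$. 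Summing these two limits gives exactly the operator~\eqref{eq:infinitesimal:generator:wold}, with a standard dominated-convergence argument using the boundedness of $\phi$ on compacts and the continuity of $f$.

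The main technical obstacle is the measure-theoretic step in~(1): carefully justifying that the map $\Psi$ which reconstructs the future from $(X_\tau,\Pi_{\geq\tau})$ is jointly measurable and that $\Pi_{\geq\tau}$ is genuinely independent of $\mathcal{F}_{\tau-}^N$ (and not merely of the natural filtration of $\Pi$). This requires carefully invoking the non-explosion guaranteed by $\left(\mathcal{A}^{\mathbb{L}^1,a.s.}_{\lambda, loc}\right)$ on the event $\{T_{-k}>-\infty\}$, and checking that the thinning construction recalled in Section~\ref{sec:Ogata:Thinning} and proved rigorously in Appendix~\ref{sec:Thinning} can be iterated starting from an arbitrary initial state, as everything else then follows from routine applications of the strong Markov property at the $T_i$'s and the small-$t$ expansion above.
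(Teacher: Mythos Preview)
Your proposal is correct and follows essentially the same route as the paper: Ogata's thinning to exhibit $(X_{t+\tau})_{t\geq 0}$ as a measurable functional of $(X_\tau,\Pi_{\geq\tau})$, then the strong Markov property at $\tau=T_i$ for the chain, and the zero/one-jump decomposition for the generator. The only noteworthy difference is that the paper makes your functional $\Psi$ explicit by recursively building the successive post-$\tau$ delays $R_0,R_1,\dots$ (allowing $R_0=0$ when $\tau$ is itself a jump time), which is exactly the measurability step you flag as the main technical obstacle.
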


\begin{proof}
First, let us show the first point of the Proposition. Let $\Pi$ be such that $N$ is the process resulting of Ogata's thinning with Poisson measure $\Pi$. The existence of such a measure is assured by Proposition~\ref{prop:inversion:theorem}.
We show that for any finite stopping time $\tau$, the process $(X^{\tau}_{t})_{t\geq 0}$ can be expressed as a function of $X_{\tau}$ and $\Pi_{\geq \tau}$ which is the restriction to $\mathbb{R}_{+}^{2}$ of the Poisson process $\Pi$ shifted $\tau$ time units to the left on the first coordinate.
Let $e_{1}=(1,0,\dots,0)\in \mathbb{R}^{k+1}$. For all $t\geq 0$, let $Y_{t}=X_{\tau}+te_{1}$ and define
$$
R_{0}=\inf \left\{ t\geq 0 ,\, \int_{[0,t]}\int_{x=0}^{f(Y_{w})} \Pi_{\geq \tau}(\dd w,\dd x) =1 \right\}.
$$
Note that $R_{0}$ may be null, in particular when $\tau$ is a jumping time of the underlying point process $N$.
It is easy to check that $R_{0}$ can be expressed as a measurable function of $X_{\tau}$ and $\Pi_{\geq \tau}$. Moreover, it is clear that $X^{\tau}_{t\wedge R_{0}}=Y_{t\wedge R_{0}}$ for all $t\geq 0$. So, $R_{0}$ can be seen as the delay until the first point of the underlying process $N$ after time $\tau$. Suppose that $R_{p}$, {the delay until the $(p+1)$th point,}  is constructed for some $p\geq 0$ and let us show how $R_{p+1}$ can be constructed. For $t\geq R_{p}$, let $Z_{t}=\theta(X^{\tau}_{R_{p}})+te_{1}$, where $\theta:(x_{1},\dots,x_{k+1})\mapsto(0,x_{1},\dots,x_{k})$ is a right shift operator modelling the dynamics described by~\eqref{eq:dynamic:successive:ages}. Let us define
\begin{equation}\label{eq:def:R:p}
R_{p+1}=\inf \left\{ t> R_{p} ,\, \int_{(R_{p},R_{p}+t]}\int_{x=0}^{f(Z_{w})} \Pi_{\geq \tau}(\dd w,\dd x) =1 \right\}.
\end{equation}
Note that for any $p\geq 0$, $R_{p+1}$ cannot be null. It is coherent with the fact that the counting process $(N_{t})_{t>0}$ only admits jumps with height $1$. It is easy to check that $R_{p+1}$ can be expressed as a measurable function of $\theta(X^{\tau}_{R_{p}})$ and $\Pi_{> \tau+R_{p}}$. 
It is  {also} clear that $X^{\tau}_{t\wedge R_{p+1}}=Z_{t\wedge R_{p+1}}$ for all $t\geq R_{p}$. So, $R_{p+1}$ can be seen as the delay until the $(p+2)$th point of the 
process $N$ after time $\tau$. By induction, $X^{\tau}_{R_{p}}$ can be expressed as a function of $X_{\tau}$ and $\Pi_{\geq \tau}$, 
{ and this holds} for $R_{p+1}$ and $X^{\tau}_{R_{p+1}}$ {too}.

To conclude, 
remark that the process $(X^{\tau}_{t})_{t\geq 0}$ is a measurable function of $X_{\tau}$ and all the 
$R_{p}$'s for $p\geq 0$.
Thanks to the independence of the Poisson measure $\Pi$, $\mathcal{F}_{\tau-}^{N}$ is independent of $\Pi_{\geq \tau}$. Then, since $(X^{\tau}_{t})_{t\geq 0}$ is a function of $X_{\tau}$ and $\Pi_{\geq \tau}$, $(X^{\tau}_{t})_{t\geq 0}$ is independent of $\mathcal{F}_{\tau-}^{N}$ given $X_{\tau}$ which concludes the first point.

For { Point (2), fix} 
 $i\geq 1$ 
{and apply Point (1)} 
  with $\tau=T_{i}$.  It appears that in this case, $R_{0}=0$ and $R_{1}=\mathbb{A}_{i}$. Moreover, $R_{1}=\mathbb{A}_{i}$ can be expressed as a function of $\theta(X_{\tau})$ and $\Pi_{> \tau}$. 
However, $\theta(X_{\tau})=(0,\mathbb{A}_{i-1},\dots,\mathbb{A}_{i-k})$ and $\mathcal{F}_{i-1}^{\mathbb{A}}\subset \mathcal{F}_{T_{i}}^{N}$. Since $\tau=T_{i}$, $\Pi_{> \tau}$ is independent of $\mathcal{F}_{T_{i}}^{N}$ and so $\mathbb{A}_{i}$ is independent of $\mathcal{F}_{i-1}^{\mathbb{A}}$ given $(\mathbb{A}_{i-1},\dots,\mathbb{A}_{i-k})$. That is, $(\mathbb{A}_{i})_{i\geq 1}$ forms a Markov chain of order $k$. 

Note that if $T_{0}=0$ a.s. (in particular it is non-negative), then one can use the previous argumentation with $\tau=0$ and conclude that the Markov chain starts one time step 
{earlier,} i.e. $(\mathbb{A}_{i})_{i\geq 0}$ forms a Markov chain of order $k$. 

{For \eqref{eq:relation:nu:f:generalized:wold},}
$R_{1}=\mathbb{A}_{i}$, defined by~\eqref{eq:def:R:p}, has the same distribution as the first point of a Poisson process with intensity $\lambda(t)=f(t,\mathbb{A}_{i-1},\dots ,\mathbb{A}_{i-k})$ thanks to the thinning Theorem. Hence, the transition measure of $(\mathbb{A}_{i})_{i\geq 1}$ is given by~\eqref{eq:relation:nu:f:generalized:wold}.

Now that $(X_{t})_{t\geq 0}$ is Markovian, one can compute its infinitesimal generator. Suppose that $f$ is continuous and let $\phi \in C_{{b}}^{1}(\mathbb{R}_{+}^{k+1})$, The generator of $(X_{t})_{t\geq 0}$ is defined by
$
\mathcal{G}\phi(s,a_{1},\dots,a_{k})=\lim_{h\rightarrow0^{+}}\frac{P_{h}-\text{Id}}{h}\phi,
$ 
where
\begin{eqnarray*}
P_{h}\phi\left(s,a_{1},\dots,a_{k}\right) & = & \mathbb{E}\left[\phi\left(X_{h}\right)\middle| X_{0}=(s,a_{1},\dots,a_{k})\right]\\
 & = & \mathbb{E}\left[\phi\left(X_{h}\right) \mathds{1}_{\{ N([0,h])=0 \}} \middle| X_{0}=(s,a_{1},\dots,a_{k})\right]\\
 &  & +\mathbb{E}\left[\phi\left(X_{h}\right) \mathds{1}_{\{ N([0,h])>0 \}} \middle| X_{0}=(s,a_{1},\dots,a_{k})\right]\\
 &=& E_{0} + E_{>0}.
\end{eqnarray*}
The case with no jump is easy to compute,
\begin{equation}\label{eq:E0}
E_{0}=\phi\left(s+h,a_{1},\dots,a_{k}\right)\left(1- f\left(s,a_{1},\dots,a_{k}\right)h\right) +o(h),
\end{equation}
thanks to the continuity of $f$.
When $h$ is small, the probability to have more than two jumps in $[0,h]$ is a $o(h)$, so the second case can be reduced to the case with exactly one random jump (namely $T$),
\begin{eqnarray}
E_{>0} &= & \mathbb{E}\left[\phi\left(X_{h}\right) \mathds{1}_{\{ N([0,h])=1 \}} \middle| X_{0}=(s,a_{1},\dots,a_{k})\right] + o(h) \nonumber\\
 & = & \mathbb{E}\left[\phi\left(\theta(X_{0}+T) +(h-T)e_{1}\right) \mathds{1}_{\{ N\cap[0,h]=\{T\} \}} \middle| X_{0}=(s,a_{1},\dots,a_{k})\right] + o(h) \nonumber\\
& = & \mathbb{E}\left[ \left( \phi\left(0,s,a_{1},\dots,a_{k-1}\right) +o({1})\right) \mathds{1}_{\{ N\cap[0,h]=\{T\} \}} \middle| X_{0}=(s,a_{1},\dots,a_{k})\right] + o(h) \nonumber\\
& = & \phi\left(0,s,a_{1}\dots,a_{k-1}\right) \left(f\left(s,a_{1},\dots,a_{k}\right)h\right) +o\left(h\right), \label{eq:E>0}
\end{eqnarray}
thanks to the continuity of $\phi$ and $f$. 
Gathering~\eqref{eq:E0} and~\eqref{eq:E>0} with the definition of the generator gives~\eqref{eq:infinitesimal:generator:wold}.
\end{proof}

\subsubsection{Sketch of proof of Proposition~\ref{prop: Expectation System:k}}
\label{sec:generalized:systems}
Let $N$ be the point process construct by Ogata's thinning of the Poisson process $\Pi$ and $U_k$ be as defined in Proposition~\ref{prop: Expectation System:k}. By an easy generalisation of Proposition \ref{prop:Microscopic:System}, one can prove that on the event $\Omega$ of probability 1, 
 where Ogata's thinning is well defined, and where $T_0<0$, $U_{k}$ satisfies $\left(\mathcal{P}_{Fubini}\right)$,  
~\eqref{upt:k} and on $\R_+\times \R_+^{k+1}$, the following system in the weak sense
\begin{gather*}
\!\! \left( \f{\p}{\p t}+\f{\p}{\p s}\right) U_k\left(\dd t,\dd s,\dd {\bf a}\right) + \left(\int_{x=0}^{f(s,a_{1},...,a_{k})}\Pi\left(\dd t,\dd x\right)\right) U_k\left(t,\dd s,\dd {\bf a}\right)=0,\\
U_k\left(\dd t,0,\dd s,\dd a_{1},... ,\dd a_{k-1}\right)=\int_{a_{k}\in \mathbb{R}} \left(\int_{x=0}^{f(s,a_{1},...,a_{k})}\Pi\left(\dd t,\dd x\right)\right) U_k\left(t,\dd s,\dd {\bf a}\right),
\end{gather*}
with $\dd {\bf a}=\dd a_{1}\times...\times \dd a_{k}$ and  initial condition 
$U^{in}=\delta_{(-T_{0},A^{1}_{0},\dots,A^{k}_{0})}$.

Similarly to Proposition \ref{prop:Fubini:nu}, one can also prove that 
 for any test function $\varphi$ in $\mathcal{M}_{c,b}(\mathbb{R}_+^{k+2})$,
$\mathbb{E}\left[ \int \varphi(t,s,{\bf a}) U_k(t,\dd s,\dd {\bf a}) \right]$ and $\mathbb{E}\left[ \int \varphi(t,s,{\bf a}) U_k(\dd t,s,\dd {\bf a}) \right]$ are finite
and one can define $u_{k}(t,\dd s,\dd {\bf a})$ and $u_{k}(\dd t,s,\dd {\bf a})$ by,  for all 
$\varphi$ in $\mathcal{M}_{c,b}(\mathbb{R}_+^{k+2})$, 
\begin{equation*}
\int \varphi(t,s,{\bf a})u_{k}(t,\dd s,\dd {\bf a})=\mathbb{E}\left[ \int \varphi(t,s,{\bf a})U_{k}(t,\dd s,\dd {\bf a}) \right],
\end{equation*}
for all $t\geq 0$, and
\begin{equation*}
\int \varphi(t,s,{\bf a})u_{k}(\dd t,s,\dd {\bf a})=\mathbb{E}\left[ \int \varphi(t,s,{\bf a})U_{k}(\dd t,s,\dd {\bf a}) \right],
\end{equation*}
for all  $s\geq 0$. Moreover, $u_{k}(t,\dd s,\dd {\bf a})$ and $u_{k}(\dd t,s,\dd {\bf a})$ satisfy  $\left(\mathcal{P}_{Fubini}\right)$ and one can define $u_{k}(\dd t,\dd s,\dd {\bf a})=u_{k}(t,\dd s,\dd {\bf a}) \dd t=u_{k}(\dd t,s,\dd {\bf a}) \dd s$ on $\R_+^2$, such that for any test function $\varphi$ in $\mathcal{M}_{c,b}(\mathbb{R}_+^{k+2})$,
\begin{equation*}
\int \varphi(t,s,{\bf a})u_{k}(\dd t,\dd s,\dd {\bf a})=\mathbb{E}\left[ \int \varphi(t,s,{\bf a})U_{k}(\dd t,\dd s,\dd {\bf a}) \right],
\end{equation*}
quantity which is finite.
The end of the proof is completely analogous to the one of Theorem \ref{prop: Expectation System}.

\vspace{-0.2cm}
\subsection{Linear Hawkes  processes}
\label{app:Hawkes}

\vspace{-0.2cm}
\subsubsection{Cluster decomposition \label{secClus}}

\begin{prop}\label{prop:cluster}
Let $g$  be a non negative $L^1_{loc}(\R_+)$ function and $h$ a non negative $L^1(\R_+)$ function  such that $\|h\|_1<1$. Then  the branching point process $N$ is defined as $\cup_{k=0}^\infty N_k$ the set of all the points in all generations constructed as follows{:}
\begin{itemize}
\item {A}ncestral points {are} $N_{anc}$ distributed as a Poisson process of intensity $g$; $N_0:=N_{anc}$ can be seen as the points of generation $0$.
\item Conditionally to $N_{anc}$, each ancestor $a\in N_{anc}$  gives birth, independently of anything else, to children points $N_{1,a}$ according to a Poisson process of intensity $h(.-a)$; $N_1=\cup_{a\in N_{anc}} N_{1,a}$ forms the first generation points.
\item[]\hspace{-1cm} Then the construction is recursive in $k$, the number of generations:
\item Denoting $N_k$ the set of  points in generation $k$, then conditionally to $N_k$, each point $x\in N_k$ gives birth, independently of anything else, to children points $N_{k+1,x}$ according to a Poisson process of intensity $h(.-x)$; $N_{k+1}=\cup_{x\in N_{k}} N_{k+1,x}$ forms the  points of the $(k+1)$th generation.
\end{itemize}
This construction ends almost surely in every finite interval. Moreover the intensity of $N$ exists and is given by
$$\lambda(t,\mathcal{F}_{t-}^{N})= g(t) + \int_{0}^{t-} h(t-x) N(\dd x).$$
\end{prop}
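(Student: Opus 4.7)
The plan splits naturally into two steps: first, almost-sure local finiteness of the branching construction; second, the identification of the intensity.

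For local finiteness, I would fix $T>0$ and let $M_k := \#(N_k \cap [0,T])$ count the generation-$k$ points in $[0,T]$. Since all points lie in $\R_+$ and birth times are nondecreasing along any genealogical line, every descendant in $[0,T]$ has all its ancestors in $[0,T]$. Conditionally on $N_0,\dots,N_k$, each parent $x \in N_k \cap [0,T]$ generates its children in $[0,T]$ through an independent Poisson process of mean $\int_0^T h(s-x)\,\dd s \leq \|h\|_1$, hence $\esp{M_{k+1} \mid N_0,\dots,N_k} \leq \|h\|_1 M_k$. Combined with $\esp{M_0} = \int_0^T g < \infty$ from Campbell's formula for $N_{anc}$, a straightforward induction gives $\esp{M_k} \leq \|h\|_1^k \int_0^T g$, so summing the geometric series,
\begin{equation*}
\esp{N([0,T])} \;\leq\; \sum_{k\geq 0} \esp{M_k} \;\leq\; \frac{\int_0^T g}{1-\|h\|_1} \;<\; \infty,
\end{equation*}
which gives $N([0,T])<\infty$ a.s.

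For the intensity formula, the strategy is to enlarge the natural filtration so that the Poisson structure of every source becomes predictably visible, compute the intensity in the larger filtration, and then project back onto $(\mathcal{F}_t^N)$. Let $\mathcal{G}_t := \sigma\bigl(\bigcup_k (N_k \cap (-\infty,t])\bigr)$ be generated by the generation-labelled points of $N$ up to time $t$; it contains $\mathcal{F}_t^N$, since forgetting labels recovers $N$. Under $\mathcal{G}_{t-}$, all points of $N$ before $t$ together with their generations are known, so the points appearing at time $t$ decompose into (i) new ancestors from $N_{anc}$ arriving at rate $g(t)$, and (ii) direct children of each already-born $x<t$ arriving at rate $h(t-x)$. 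By mutual independence of these Poisson sources and the superposition principle, $N$ admits the $(\mathcal{G}_t)$-intensity
\begin{equation*}
\lambda_t^{\mathcal{G}} \;=\; g(t) + \sum_{x \in N,\, x<t} h(t-x) \;=\; g(t) + \int_0^{t-} h(t-u)\, N(\dd u).
\end{equation*}
This $\lambda_t^{\mathcal{G}}$ is manifestly $\mathcal{F}_{t-}^N$-measurable, so by the standard projection property for predictable intensities (see Br\'emaud~\cite{Bre}), it is also the $(\mathcal{F}_t^N)$-intensity of $N$, which is the desired formula.

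The main obstacle is rigorously justifying the superposition step, since countably many independent children Poisson processes contribute to $\lambda_t^{\mathcal{G}}$. The local-finiteness bound established in the first step ensures that, on any bounded time window, only finitely many parents contribute in expectation and $\lambda^{\mathcal{G}}$ is locally $L^1$, which legitimizes applying superposition generation by generation together with a monotone-limit argument. A secondary subtlety is the proper construction of $(\mathcal{G}_t)$ itself: one defines it inductively so that the children Poisson process attached to a generation-$k$ point born at time $x$ becomes $(\mathcal{G}_t)$-adapted from time $x$ onwards, which is coherent with the recursive branching construction described in the statement.
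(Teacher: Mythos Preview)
Your proof is correct and takes a genuinely different route from the paper on both parts.

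For local finiteness, you establish the stronger bound $\esp{N([0,T])}<\infty$ via the moment recursion $\esp{M_{k+1}}\leq\|h\|_1\,\esp{M_k}$ and geometric summation. The paper instead observes that the descendants of each ancestor form a subcritical Galton--Watson tree (Poisson offspring with mean $\int h<1$), hence go extinct a.s.; combined with a.s.\ finitely many ancestors in $[0,A]$ this yields a.s.\ finiteness but no moment bound. Your argument is more elementary and delivers more.

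For the intensity, the paper does not use filtration enlargement plus superposition and projection. Instead it \emph{re-realizes} the branching process via Ogata's thinning of a single Poisson process $\Pi$ on $\R_+^2$: the ancestors are the projections of the points of $\Pi$ lying below $t\mapsto g(t)$; the children of the first ancestor $T_1$ are the projections of the points in the strip $(g(t),\,g(t)+h(t-T_1)]$; and so on recursively through successive non-overlapping strips, generation by generation. By construction the union of all projected points coincides with the points of $\Pi$ lying below $H(t)=g(t)+\int_0^t h(t-u)\,N(\dd u)$, and the thinning theorem (Theorem~\ref{thm:Thinning:R2}) then identifies $H$ as the $(\mathcal{F}_t^N)$-intensity. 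This approach is more constructive and keeps the paper's thinning methodology uniform throughout; your route is cleaner as a stand-alone argument and avoids rebuilding the process on a specific probability space, at the cost of the filtration bookkeeping you correctly flag as a subtlety.
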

This is the cluster representation of the Hawkes process. When $g\equiv \nu$, this has been proved in \cite{Hawkes_Oakes_1974}. However up to our knowledge this has not been written for a general function $g$.
\begin{proof}
First, let us fix some $A>0$. The process ends up almost surely in $[0,A]$ because there is a.s. a finite number of ancestors in $[0,A]$: if we consider the family of points attached to one particular ancestor, the number of points in each generation form a sub-critical Galton Watson process with reproduction distribution, a Poisson variable with mean $\int h <1$ and whose extinction is consequently almost sure. 

Next, to prove that $N$ has intensity
$$H(t)= g(t) + \int_{0}^{t-} h(t-x) N(\dd x),$$
we exhibit a particular thinning construction, where on one hand, $N$ is indeed a branching process by construction as defined by the proposition and, which, on the other hand, guarantees that Ogata's thinning project the points below $H(t)$. We can always assume that $h(0)=0$, since changing the intensity of Poisson process in the branching structure at one particular point has no impact. Hence $H(t)=g(t) + \int_{0}^{t} h(t-x) N(\dd x).$

The construction is recursive in the same way. Fix some realisation $\Pi$ of a Poisson process on $\R_+^2$.

For $N_{anc}$, project the points below the curve $t\to g(t)$ on $[0,A]$. By construction, $N_{anc}$ is a Poisson process of intensity $g(t)$ on $[0,A]$. Note that for the identification (see Theorem \ref{thm:Thinning:R2}) we just  need to do it on finite intervals and that the ancestors that may be born after time $A$ do not have any descendants in $[0,A]$, so we can discard them, since they do not appear in $H(t)$, for $t\leq A$.

Enumerate the points in $N_{anc}\cap [0,A]$ from $T_1$ to $T_{N_{0,\infty}}.$ 
\begin{itemize}
\item The children of $T_1$, $N_{1,T_1}$, are given by the projection of the points of $\Pi$ whose ordinates are in the strip $t\mapsto (g(t),g(t)+h(t-T_1)]$. As before, by the property of spatial independence of $\Pi$, this is a Poisson process of intensity $h(.-T_1)$ conditionally to $N_{anc}$.
\item Repeat until $T_{N_{0,\infty}}$, where $N_{1,T_{N_{0,\infty}}}$ are given by the projection of the points of $\Pi$ whose ordinates are in the strip $t\mapsto (g(t)+\sum_{i=1}^{N_{0,\infty}-1} h(t-T_i),g(t)+\sum_{i=1}^{N_{0,\infty}} h(t-T_i)]$. As before, by the property of independence of $\Pi$, this is a Poisson process of intensity $h(.-T_{N_{0,\infty}})$ conditionally to $N_{anc}$ and because the consecutive strips do not overlap, this process is completely independent of the previous processes $(N_{1,T_i})$'s that have been constructed.
\end{itemize}
Note that at the end of this first generation, $N_1=\cup_{T\in N_{anc}} N_{1,T}$ consists of the projection of  points of $\Pi$ in the strip $t\mapsto (g(t),g(t)+\sum_{i=1}^{N_{0,\infty}} h(t-T_i)]$. They therefore form a Poisson process of intensity $\sum_{i=1}^{N_{0,\infty}} h(t-T_i)=\int h(t-u) N_{anc}(du),$ conditionally to $N_{anc}.$

For generation $k+1$ replace in the previous construction $N_{anc}$ by $N_k$ and $g(t)$ by $g(t)+\sum_{j=0}^{k-1} \int h(t-u) dN_j(u)$. Once again we end up for each point $x$ in $N_k$ with  a process of children $N_{k+1,x}$ which is a Poisson process of intensity $h(t-x)$ conditionally to $N_k$ and which is totally independent of the other $N_{k+1,y}$'s.
Note also that as before, $N_{k+1}=\cup_{x\in N_k} N_{k+1,x}$ is a Poisson process of intensity $\int h(t-u) N_k(du)$, conditionally to $N_0,...,N_k$.

Hence we are indeed constructing a branching process as defined by the proposition. Because the underlying Galton Watson process ends almost surely, as shown before, it means that there exists a.s. one generation $N_{k^*}$ which will be completely empty and our recursive contruction ends up too.

The main point is to realize that at the end the points in $N=\cup_{k=0}^\infty N_k$ are exactly the projection of the points in $\Pi$ that are below 
$$t\mapsto g(t)+\sum_{k=0}^\infty \int h(t-u) N_k(du)= g(t)+\sum_{k=0}^\infty \int_0^t h(t-u) N_k(du)$$
hence below
$$t\mapsto g(t)+\int_0^t h(t-u) N(du)=H(t).$$
Moreover $H(t)$ is $\mathcal{F}_t^N$ predictable.
Therefore by Theorem \ref{thm:Thinning:R2}, $N$ has intensity $H(t)$, which concludes the proof.
\end{proof}

A cluster process $N_c$, is a branching process, as defined before, which admits  intensity $\lambda(t,\mathcal{F}_{t-}^{N_{c}}) = h(t) + \int_{0}^{t-} h(t-z) N_{c}(\dd z)$. Its distribution  only depends on the function $h$. It corresponds to the family generated by one ancestor at time 0 in Proposition \ref{prop:cluster}. Therefore, by Proposition\ref{prop:cluster}, a Hawkes process with empty past ($N_-=\emptyset$)  of intensity $\lambda(t,\mathcal{F}_{t-}^{N}) = g(t) + \int_{0}^{t-} h(t-z) N(\dd z)$ can always be seen as the union of $N_{anc}$ and of all the $a+N_c^a$ for $a \in N_{anc}$ where the $N_c^a$ are i.i.d. cluster processes. 

For a Hawkes process $N$ with non empty past, $N_-$,  this is more technical. Let $N_{anc}$ be a Poisson process of intensity $g$ on $\R_+$ and $\left(N_c^V\right)_{V\in N_{anc}}$ be  a sequence of i.i.d. cluster processes associated to $h$. Let also 
\begin{equation}\label{eq:def:N>0}
N_{>0}= N_{anc}\cup \left( \bigcup_{V\in N_{anc}} V+N_{c}^{V} \right).
\end{equation}
As we prove below, this  represents the points in $N$ that do not depend on $N_-$. The points that are depending on $N_-$ are constructed as follows independently of $N_{>0}$.  Given $N_{-}$, let  $\left( N_{1}^{T} \right)_{T\in N_{-}}$ denote a sequence of independent Poisson processes with respective intensities $\lambda_{T}(v)=h(v-T) \mathds{1}_{(0,\infty)}(v)$. Then, given $N_{-}$ and $\left( N_{1}^{T} \right)_{T\in N_{-}}$, let $\left( N_{c}^{T,V} \right)_{V \in N_{1}^{T},T \in N_{-}}$ be a sequence of i.i.d. cluster processes associated to $h$. The points depending on the past $N_-$ are given by the following formula as proved in the next Proposition:
\begin{equation}\label{eq:def:Nleq0}
N_{\leq 0}= N_{-} \cup\left( \bigcup_{T\in N_{-}} N_{1}^{T} \cup \left( \bigcup_{V\in N_{1}^{T}} V+N_{c}^{T,V} \right)  \right) .
\end{equation}

\begin{prop}\label{prop:decomp:Hawkes}
Let $N=N_{\leq 0} \cup N_{>0}$, where $N_{>0}$ and $N_{\leq0}$ are given by \eqref{eq:def:N>0} and \eqref{eq:def:Nleq0}. Then $N$ is a linear Hawkes process with past given by $N_-$ and intensity on $(0,\infty)$ given by 
$\lambda(t,\mathcal{F}_{t-}^{N})= g(t) + \int_{-\infty}^{t-} h(t-x) N(\dd x)$,
where $g$ and $h$ are as in Proposition \ref{prop:cluster}.
\end{prop}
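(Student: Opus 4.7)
The plan is to leverage Proposition~\ref{prop:cluster} twice: once for $N_{>0}$ alone and once for $N_{\leq 0}\cap(0,\infty)$ conditionally on $N_{-}$, and then to recover the full Hawkes intensity by superposing the two conditionally independent contributions. First I would verify the trivial fact that $N\cap\R_{-}=N_{-}$ almost surely by construction, so only the intensity on $(0,\infty)$ needs to be identified.

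For $N_{>0}$, Proposition~\ref{prop:cluster} applies directly with ancestral intensity $g$: it gives that $N_{>0}$ is a point process on $\R_+$ whose $\mathcal{F}_{t-}^{N_{>0}}$-intensity equals
\begin{equation*}
\lambda^{>0}(t) \;=\; g(t) + \int_{0}^{t-} h(t-x)\, N_{>0}(\dd x).
\end{equation*}
For $N_{\leq 0}\cap(0,\infty)$, I would observe that conditionally on $N_{-}$ its first-generation ``immigrants'' $\bigcup_{T\in N_{-}} N_{1}^{T}$ form, by Poisson superposition, a Poisson process on $(0,\infty)$ with intensity
\begin{equation*}
\tilde g(t) \;=\; \sum_{T\in N_{-}} h(t-T) \;=\; \int_{-\infty}^{0} h(t-x)\, N_{-}(\dd x),
\end{equation*}
which is $\mathcal{F}_{0}^{N}$-measurable. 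Note that $\tilde g\in L^{1}_{loc}(\R_+)$ almost surely since $h\in L^{1}$ and $N_{-}$ is locally finite. Applying Proposition~\ref{prop:cluster} conditionally on $N_{-}$ with ancestral intensity $\tilde g$ and the same reproduction kernel $h$ then shows that $N_{\leq 0}\cap(0,\infty)$ admits the (random) intensity
\begin{equation*}
\lambda^{\leq 0}(t) \;=\; \tilde g(t) + \int_{0}^{t-} h(t-x)\, (N_{\leq 0}\cap(0,\infty))(\dd x)
\end{equation*}
with respect to the enlarged filtration $\sigma(N_{-})\vee \mathcal{F}_{t-}^{N_{\leq 0}\cap(0,\infty)}$.

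Because $N_{>0}$ and $N_{\leq 0}\cap(0,\infty)$ are built from disjoint and independent Poisson inputs given $N_{-}$, they are conditionally independent given $N_{-}$. The classical superposition principle for point processes (\cite{Bre}, Ch.~II) then ensures that the $\mathcal{F}_{t-}^{N}$-intensity of the union $N\cap(0,\infty) = N_{>0}\cup (N_{\leq 0}\cap(0,\infty))$ is the sum $\lambda^{>0}(t)+\lambda^{\leq 0}(t)$, which regroups as
\begin{equation*}
g(t) + \int_{-\infty}^{0} h(t-x)\, N_{-}(\dd x) + \int_{0}^{t-} h(t-x)\, N(\dd x) \;=\; g(t) + \int_{-\infty}^{t-} h(t-x)\, N(\dd x),
\end{equation*}
as required.

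The main obstacle is the rigorous justification of the intensity-addition step: one must verify that $\lambda^{>0}$ and $\lambda^{\leq 0}$, initially defined as intensities with respect to their own (conditional) natural filtrations, remain predictable intensities with respect to the joint filtration $\mathcal{F}_{t-}^{N}$. Since both candidate intensities are explicit left-continuous functionals of $N\cap(-\infty,t)$, they are automatically $\mathcal{F}_{t-}^{N}$-predictable, and the required martingale property follows from conditional independence given $N_{-}$ together with the standard projection argument for superposed point processes. A subsidiary point is ensuring non-explosion of $N_{\leq 0}\cap(0,\infty)$ on every finite interval: this again relies on the sub-criticality $\|h\|_{L^{1}}<1$ so that the underlying Galton--Watson trees rooted at each point of $N_{1}^{T}$ remain almost surely finite, exactly as in the proof of Proposition~\ref{prop:cluster}.
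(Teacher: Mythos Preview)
Your proof is correct and follows essentially the same strategy as the paper: apply Proposition~\ref{prop:cluster} to $N_{>0}$, apply it again (conditionally on $N_{-}$) to $N_{\leq 0}\cap(0,\infty)$, and then add the two intensities using independence and a filtration-reduction argument. The only cosmetic difference is that the paper applies Proposition~\ref{prop:cluster} separately to each tree $N_{H}^{T}=N_{1}^{T}\cup\bigl(\bigcup_{V\in N_{1}^{T}}V+N_{c}^{T,V}\bigr)$ with ancestral intensity $h(\cdot-T)$ and then sums the resulting intensities over $T\in N_{-}$, whereas you first superpose the $N_{1}^{T}$ into a single Poisson process with intensity $\tilde g$ and apply the cluster proposition once; both orderings lead to the same formula and the same filtration issues, which the paper resolves exactly as you indicate (predictability with respect to the smaller filtration, then the projection argument of \cite{Bre}, p.~27).
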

\begin{proof}
Proposition\ref{prop:cluster} yields that $N_{>0}$ has intensity
\begin{equation}\label{eq:intensity:Ngeq0}
\lambda_{N_{>0}}(t,\mathcal{F}_{t-}^{N_{>0}})=g(t) + \int_{0}^{t-} h(t-x) N_{>0}(\dd x),
\end{equation}
and that, given $N_{-}$, for any $T\in N_{-}$, $N_{H}^{T}=N_{1}^{T} \cup \left( \bigcup_{V\in N_{1}^{T}} V+N_{c}^{T,V} \right)$ has intensity
\begin{equation}\label{eq:intensity:NH}
\lambda_{N_{H}^{T}}(t,\mathcal{F}_{t-}^{N_{H}^{T}})=h(t-T) + \int_{0}^{t-} h(t-x) N_{H}^{T}(\dd x),
\end{equation}
Moreover, all these processes are independent given $N_{-}$. For any $t\geq 0$, one can note that
$
\mathcal{F}_{t}^{N_{\leq 0}} \subset \mathcal{G}_t:= \mathcal{F}^{N_{-}}_{0} \vee \left( \bigvee_{T\in N_{-}} \mathcal{F}_{t}^{N_{H}^{T}} \right)$,
and so $N_{\leq 0}$ has intensity
\begin{equation}\label{eq:intensity:Nleq0}
\lambda_{N_{\leq 0}}(t,\mathcal{G}_{t-}) = \sum_{T\in N_{-}} \lambda_{N_{H}^{T}}(t,\mathcal{F}_{t-}^{N_{H}^{T}}) = \int_{-\infty}^{t-}  h(t-x) N_{\leq 0}(\dd x)
\end{equation}
on $(0,+\infty)$. Since this last expression is $\mathcal{F}_{t}^{N_{\leq 0}}$-predictable, by page $27$ in \cite{Bre}, this is also $\lambda_{N_{\leq 0}}(t,\mathcal{F}_{t-}^{N_{\leq 0}}).$ Moreover, $N_{\leq 0}$ and $N_{>0}$ are independent by construction and, for any $t\geq 0$,
$
\mathcal{F}_{t}^{{N}}\subset \mathcal{F}_{t}^{N_{\leq 0}} \vee \mathcal{F}_{t}^{N_{>0}}$.
Hence, as before, $N$ has intensity on $(0,+\infty)$ given by
$$
\lambda(t,\mathcal{F}_{t-}^{{N}}) = \lambda(t,\mathcal{F}_{t-}^{N_{\leq 0}}) + \lambda(t,\mathcal{F}_{t-}^{N_{>0}}) = g(t) + \int_{-\infty}^{t-} h(t-x) {N}(\dd x).
$$
\end{proof}

\vspace{-0.2cm}
\subsubsection{A general result for linear Hawkes processes} 
The following proposition is a consequence of Theorem~\ref{prop: Expectation System} applied to Hawkes processes with general past $N_-.$ 
\begin{prop}\label{prop:conditional:intensity:hawkes}
Using the notations of Theorem~\ref{prop: Expectation System}, let $N$ be a Hawkes process with past before $0$ given by $N_-$ of distribution $\mathbb{P}_0$ and with intensity on $\R_+$ given by
$$
\lambda(t,\mathcal{F}_{t-}^{N})=\mu + \int_{-\infty}^{t-} h(t-x) N(dx),
$$
where $\mu$ is a positive real number and $h$ is a non-negative function with support in $\R_+$ such that $\int h <1$. 
Suppose that $\mathbb{P}_0$ is such that
\begin{equation}\label{eq:assumption:P0}
\sup_{t\geq 0} \mathbb{E}\left[ \int_{-\infty}^{0} h(t-x) N_{-}(dx) \right] <\infty.
\end{equation}
Then, the mean measure $u$ defined in Proposition~\ref{prop:Fubini:nu} satisfies Theorem~\ref{prop: Expectation System} and moreover its integral $v(t,s):=\int\limits_s^\infty u(t,d\sigma)$ is a solution of the system \eqref{eq:PDEPhi:v}--\eqref{eq:PDEPhi:bound:v} where $v^{in}$ is the survival function of $-T_0$,  and where  $\Phi=\Phi_{\P_0}^{\mu,h}$ is given by 
$
\Phi_{\P_0}^{\mu,h} =\Phi_+^{\mu,h} +\Phi_{-,\P_0}^{\mu,h},
$
with $\Phi_+^{\mu,h} $ given by \eqref{eq:implicit:equation:Phi:+}
and  $\Phi^{\mu,h}_{-,\mathbb{P}_{0}}$ given by, 
\begin{equation}
\forall\, s,t \geq 0, \quad \Phi^{\mu,h}_{-,\mathbb{P}_{0}}(t,s)=\mathbb{E}\left[ \int_{-\infty}^{t-} h(t-z) N_{\leq 0}(dz) \middle|\,N_{\leq 0}\left( [t-s,t) \right)=0 \right].
\end{equation}
Moreover, \eqref{rhoPhi} holds. 
\end{prop}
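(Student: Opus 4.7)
The plan is to combine Theorem~\ref{prop: Expectation System}, applied to the Hawkes process under~\eqref{eq:assumption:P0}, with the branching/cluster decomposition of Proposition~\ref{prop:decomp:Hawkes}, first to identify the rate $\rho_{\lambda,\mathbb{P}_0}$ and then to derive the closed equation for $v(t,s)=\int_s^{\infty} u(t,\dd\sigma)$. Assumption~\eqref{eq:assumption:P0}, together with the fact that $\mathbb{E}[\int_0^{t-}h(t-x)N(\dd x)]$ satisfies a renewal-type equation with kernel of $L^1$-norm strictly less than $1$, ensures $\left(\mathcal{A}^{\mathbb{L}^1,exp}_{\lambda,loc}\right)$, so that Theorem~\ref{prop: Expectation System} yields the weak equation $\partial_t u+\partial_s u+\rho_{\lambda,\mathbb{P}_0}\,u=0$ with $\rho_{\lambda,\mathbb{P}_0}(t,s)=\mathbb{E}[\lambda(t,\mathcal{F}_{t-}^N)\mid S_{t-}=s]$. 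Once~\eqref{rhoPhi} is established, integrating this equation in $\sigma$ over $[s,\infty)$ and using $v(t,\infty)=0$ together with $-\partial_s v=u(t,\cdot)$ (in the weak sense) immediately gives~\eqref{eq:PDEPhi:v}; the boundary condition $v(t,0)=1$ follows from conservativeness $\int u(t,\dd\sigma)=1$, and $v(0,s)$ is the survival function of $-T_0$ by the initial condition on $u^{in}$.

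The core computation is therefore to show that $\mathbb{E}[\lambda(t,\mathcal{F}_{t-}^N)\1_{S_{t-}\geq s}]=\Phi_{\mathbb{P}_0}^{\mu,h}(t,s)\,v(t,s)$, since by the definition of $\rho_{\lambda,\mathbb{P}_0}$ the left-hand side equals $\int_s^\infty \rho_{\lambda,\mathbb{P}_0}(t,x)\,u(t,\dd x)$. I use Proposition~\ref{prop:decomp:Hawkes} to write $N=N_{>0}\sqcup N_{\leq 0}$ with the two pieces independent, which splits the intensity into a past-free term $\mu+\int_0^{t-}h(t-x)N_{>0}(\dd x)$ depending only on $N_{>0}$ and a past-generated term $\int_{-\infty}^{t-}h(t-x)N_{\leq 0}(\dd x)$ depending only on $N_{\leq 0}$. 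Since $\{S_{t-}\geq s\}=\{N([t-s,t))=0\}$ factorises as the intersection of the two independent events $\{N_{>0}([t-s,t))=0\}$ and $\{N_{\leq 0}([t-s,t))=0\}$, both $v(t,s)$ and the expectation above decompose as products, and dividing through gives $\Phi_{\mathbb{P}_0}^{\mu,h}=\Phi_+^{\mu,h}+\Phi_{-,\mathbb{P}_0}^{\mu,h}$, where $\Phi_{-,\mathbb{P}_0}^{\mu,h}$ is exactly the conditional expectation stated in the proposition, and
$$\Phi_+^{\mu,h}(t,s)=\mathbb{E}\Bigl[\mu+\int_0^{t-}h(t-x)N_{>0}(\dd x)\,\Big|\,N_{>0}([t-s,t))=0\Bigr].$$

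Matching this last expression with the explicit form~\eqref{eq:implicit:equation:Phi:+} is the main obstacle. I would rely on the cluster description of $N_{>0}$ in Proposition~\ref{prop:cluster}: $N_{>0}$ is a Poisson process of ancestors on $[0,t]$ of rate $\mu$ superposed with independent descendant clusters rooted at each ancestor. Conditioning on $N_{>0}([t-s,t))=0$ amounts to requiring that neither an ancestor nor any of its descendants falls into the forbidden window $[t-s,t)$. By a first-step analysis performed on each cluster rooted at distance $x$ before $t$, one obtains that the probability that the cluster satisfies the forbidding event is a function $G_s(x)$ satisfying~\eqref{eq:Gs}, and that the conditional expected intensity contribution at time $t$ coming from the descendants of that cluster is $L_s(x)$ satisfying~\eqref{eq:Ls}, the direct contribution of the ancestor himself being the additive term $h(x)$. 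Campbell's formula for the Poisson process of ancestors, restricted to those surviving the forbidden window, then yields the factor $\mu(1+\int_{s\wedge t}^t(h(x)+L_s(x))G_s(x)\dd x)$, the lower limit $s\wedge t$ reflecting that only ancestors born before $t-s$ may appear when $t\geq s$. The delicate technical point is the rigorous derivation of the implicit identities~\eqref{eq:Gs}--\eqref{eq:Ls} from the first-step decomposition of the conditioned cluster; everything else is a careful application of Fubini and of Theorem~\ref{prop: Expectation System}.
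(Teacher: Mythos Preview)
Your proposal is correct and follows essentially the same route as the paper: you use the cluster decomposition of Proposition~\ref{prop:decomp:Hawkes} to split $N$ into the independent pieces $N_{>0}$ and $N_{\leq 0}$, factorise the event $\{S_{t-}\geq s\}$ accordingly, and identify $\Phi_+^{\mu,h}$ via a branching/first-step analysis on the conditioned clusters (which the paper packages as Lemma~\ref{prop:implicit:formulas:L:G}, using Moller's marked-ancestor trick and Campbell's theorem exactly as you describe). The only cosmetic difference is that the paper verifies $\left(\mathcal{A}^{\mathbb{L}^1,exp}_{\lambda,loc}\right)$ via the same cluster machinery (Lemma~\ref{lem:assumptions:Hawkes}) rather than via a renewal equation, but both arguments are standard and equivalent here.
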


\vspace{-0.2cm}
\subsubsection{Proof of the general result of Proposition~\ref{prop:conditional:intensity:hawkes}}
Before proving Proposition~\ref{prop:conditional:intensity:hawkes}, we need some technical preliminaries.

Events of the type $\{S_{t-}\geq s\}$  are equivalent to the fact that the underlying process has no point between $t-s$ and $t$. Therefore, for any point process $N$ and any real numbers $t,s \geq 0$, let
\begin{equation}\label{eq:def:event:E}
\mathcal{E}_{t,s}(N) = \{N\cap [t-s,t) = \emptyset\}.
\end{equation}
Various sets $\mathcal{E}_{t,s}(N)$ are used in the sequel and the following lemma, whose proof is obvious and therefore omitted, is applied several times to those sets.
\begin{lem}\label{lem:conditional:expectation}
Let $Y$ be some random variable and $I(Y)$ some countable set of indices depending on $Y$. Suppose that $\left( X_{i} \right)_{i\in I(Y)}$ is a sequence of random variable{s} which are independent conditionally on $Y$.  Let $A(Y)$ be some event depending on $Y$ and $\forall\, j\in I(Y)$, $B_{j}=B_{j}(Y,X_{j})$ be some event depending on $Y$ and $X_{j}$. Then, for any $i\in I(Y)$, and for all sequence of measurable functions $(f_{i})_{i \in I(Y)}$ such that the following quantities exist,
$$
\mathbb{E}\left[ \sum_{i\in I(Y)} f_{i}(Y,X_{i}) \middle|\, A\#B \right]= \mathbb{E}\left[ \sum_{i\in I(Y)}  \mathbb{E}\left[ f_{i}(Y,X_{i}) \middle|\, Y, B_{i} \right] \middle|\, A\#B \right],
$$
where
$
\mathbb{E}\left[ f_{i}(Y,X_{i}) \middle|\, Y, B_{i} \right] = \frac{\mathbb{E}\left[ f_{i}(Y,X_{i}) \mathds{1}_{B_{i}} \middle|\, Y \right]}{\mathbb{P}\left( B_{i} \middle|\, Y \right)}
$
and $A\#B= A(Y) \cap \left( \bigcap_{j\in I(Y)} B_{j}\right) $.
\end{lem}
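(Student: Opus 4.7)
The plan is to reduce the claim to a computation conditional on $Y$, where the conditional independence of the $X_i$'s can be exploited directly. The scheme is: tower property to pull $Y$ inside the conditioning, absorb $A(Y)$ since it is $\sigma(Y)$-measurable, swap the expectation with the countable sum, and finally drop the extra $B_j$'s ($j\neq i$) inside the inner conditional expectation using conditional independence.

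First, by the tower property applied to the $\sigma$-algebra $\sigma(Y)\vee \sigma(A\#B)$, one has
\begin{equation*}
\mathbb{E}\!\left[\sum_{i\in I(Y)} f_{i}(Y,X_{i})\,\middle|\,A\#B\right]
=\mathbb{E}\!\left[\,\mathbb{E}\!\left[\sum_{i\in I(Y)} f_{i}(Y,X_{i})\,\middle|\,Y,\,A\#B\right]\,\middle|\,A\#B\right].
\end{equation*}
Since $A(Y)$ is $\sigma(Y)$-measurable, the conditioning event $A\#B$ given $Y$ reduces a.s., on the event $\{A(Y)\text{ holds}\}$ (which is the only place where $A\#B$ can occur), to $\bigcap_{j\in I(Y)} B_{j}$. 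Hence the inner conditional expectation equals $\mathbb{E}\bigl[\sum_{i\in I(Y)} f_{i}(Y,X_{i})\,\bigl|\,Y,\bigcap_{j} B_{j}\bigr]$.

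Next, I would use linearity to exchange the sum with the inner conditional expectation (justified by the assumed existence/integrability, which implies that $\sum_{i}|f_{i}(Y,X_{i})|$ has finite conditional expectation). It then suffices to show, for every fixed realization of $Y$ and every index $i\in I(Y)$, that
\begin{equation*}
\mathbb{E}\!\left[f_{i}(Y,X_{i})\,\middle|\,Y,\,\bigcap_{j\in I(Y)} B_{j}\right]
=\mathbb{E}\!\left[f_{i}(Y,X_{i})\,\middle|\,Y,\,B_{i}\right].
\end{equation*}
This is where the conditional independence assumption enters: conditionally on $Y$, the family $(X_{j})_{j\in I(Y)}$ is independent, so the pair $(f_{i}(Y,X_{i}),\mathds{1}_{B_{i}})$, being a function of $(Y,X_{i})$ alone, is independent of the family $(\mathds{1}_{B_{j}})_{j\neq i}$, which are functions of $(Y,X_{j})_{j\neq i}$. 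The claimed identity follows by the standard lemma that extra independent conditioning information can be dropped. The explicit formula $\mathbb{E}[f_{i}(Y,X_{i})|Y,B_{i}]=\mathbb{E}[f_{i}(Y,X_{i})\mathds{1}_{B_{i}}|Y]/\mathbb{P}(B_{i}|Y)$ is then just the definition of the conditional expectation given the event $B_{i}$ in the regular conditional distribution under $Y$, which is well-defined whenever $\mathbb{P}(B_{i}|Y)>0$, a property implicitly ensured by the existence assumption on the quantities.

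The only delicate point is bookkeeping when $I(Y)$ itself is random. The key observation is that once $Y$ is fixed, $I(Y)$ is a deterministic countable set, so all the measurability and exchange-of-sum-and-expectation steps are standard; the outer expectation over $A\#B$ then reassembles them. The main obstacle is thus not the independence argument per se but the careful justification of the absolute convergence and of the intermediate equalities holding on the set $\{\mathbb{P}(B_{i}|Y)>0\text{ for all }i\}$, which has full measure under the conditioning event $A\#B$.
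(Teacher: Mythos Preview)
Your argument is correct: condition on $Y$, pull out the $\sigma(Y)$-measurable indicator $\mathds{1}_{A(Y)}$, use conditional independence of the $X_j$'s given $Y$ to factor $\mathbb{E}\bigl[f_i(Y,X_i)\prod_j \mathds{1}_{B_j}\,\big|\,Y\bigr]=\mathbb{E}\bigl[f_i(Y,X_i)\mathds{1}_{B_i}\,\big|\,Y\bigr]\prod_{j\neq i}\mathbb{P}(B_j\,|\,Y)$, and then recognise the ratio as $\mathbb{E}[f_i(Y,X_i)\,|\,Y,B_i]$; the reverse computation for the right-hand side yields the same expression. The paper itself gives no proof of this lemma, declaring it ``obvious and therefore omitted,'' so there is nothing to compare against---your write-up is precisely the natural verification the authors had in mind.
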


The following lemma is linked to Lemma~\ref{unicityLG}.
\begin{lem}\label{prop:implicit:formulas:L:G}
Let $N$ be a linear Hawkes process with no past before time $0$ (i.e. $N_{-}=\emptyset$) and intensity on $(0,\infty)$ given by
$
\lambda(t,\mathcal{F}_{t-}^{N})= g(t) + \int_{0}^{t-} h(t-x) N(\dd x),
$
where $g$ and $h$ are {as in Proposition \ref{prop:cluster} and let for any $x,s\geq 0$} 
$$
\left\{
\begin{aligned}
&L^{g,h}_{s}(x)=\mathbb{E}\left[ \int_{0}^{x} h(x-z) N(\dd z) \middle|\,
{\mathcal{E}_{x,s}(N)}\right]\\
&G^{g,h}_{s}(x) = 
{\mathbb{P}\left( \mathcal{E}_{x,s}(N)\right),}
\end{aligned}
\right.
$$
Then, for any $x,s\geq 0$,
\begin{equation}\label{eq:L:g:h:s}
L^{g,h}_{s}(x)= \int_{s\wedge x}^{x} \left(h\left(z\right)+ L^{h,h}_{s}(z) \right) G^{h,h}_{s}(z) g(x-z) \, \dd z,
\end{equation}
and 
\begin{equation}\label{eq:G:g:h:s}
\log(G^{g,h}_s(x))=\int_{0}^{(x-s)\vee 0} G^{h,h}_s(x-z) g(z) \dd z -\int_{0}^x g(z) \dd z.
\end{equation}
{In particular, $(L_s^{h,h},G_s^{h,h})$ is in $L^1\times L^\infty$ and is a solution of \eqref{eq:Gs}-\eqref{eq:Ls}.}
\end{lem}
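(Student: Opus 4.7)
The proof rests on the branching/cluster decomposition of Proposition~\ref{prop:cluster}, which represents $N$ as $N_{anc} \cup \bigcup_{a \in N_{anc}}(a+N_c^a)$, where $N_{anc}$ is a Poisson process on $\R_+$ with intensity $g$ and, conditionally on $N_{anc}$, the $(N_c^a)$ are independent cluster processes associated with kernel $h$. The central observation is that the event $\mathcal{E}_{x,s}(N)=\{N \cap [x-s,x)=\emptyset\}$ admits a clean cluster decomposition: no ancestor may lie in $[(x-s)\vee 0, x)$, and every ancestor $a \in [0,(x-s)\vee 0)$ must satisfy $\mathcal{E}_{x-a,s}(N_c^a)$ (its cluster avoids the shifted window). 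This treats the regimes $x \leq s$ and $x > s$ in a unified way, the case $x \leq s$ leaving only the constraint ``no ancestor in $[0,x)$''.

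For $G^{g,h}_s(x)$, I would condition on $N_{anc}$ and use the conditional independence of the $N_c^a$'s, so that $\P(\mathcal{E}_{x-a,s}(N_c^a)\mid N_{anc}) = G^{h,h}_s(x-a)$. Then, using the independence of $N_{anc}$ on the disjoint intervals $[0,(x-s)\vee 0)$ and $[(x-s)\vee 0,x)$, together with the Laplace functional $\mathbb{E}[\prod_{a \in N_{anc} \cap I} F(a)] = \exp\bigl(\int_I(F(z)-1)g(z)\,\dd z\bigr)$ for Poisson processes and $\P(N_{anc} \cap [(x-s)\vee 0,x)=\emptyset)=\exp(-\int_{(x-s)\vee 0}^x g(z)\,\dd z)$, I would obtain exactly formula~\eqref{eq:G:g:h:s}.

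For $L^{g,h}_s(x)$, I first observe that on $\mathcal{E}_{x,s}(N)$ the integral $\int_0^x h(x-z) N(\dd z)$ reduces to $\int_0^{(x-s)\vee 0} h(x-z) N(\dd z)$, which decomposes via the clusters as
\begin{equation*}
\sum_{a \in N_{anc} \cap [0,(x-s)\vee 0)} \Bigl( h(x-a) + \int_0^{(x-s)\vee 0 - a} h(x-a-w)\,N_c^a(\dd w) \Bigr).
\end{equation*}
Lemma~\ref{lem:conditional:expectation}, which exploits the conditional independence of the $N_c^a$'s given $N_{anc}$, allows me to replace the inner integral by its conditional expectation given $\mathcal{E}_{x-a,s}(N_c^a)$, which is by definition $L^{h,h}_s(x-a)$. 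The remaining expression is a Poisson sum of the form $\mathbb{E}[\sum_a \phi(a) \prod_b G^{h,h}_s(x-b)]$; applying the Slivnyak--Mecke identity for $N_{anc}$ together with the same Laplace functional produces a factor $G^{g,h}_s(x)$ at the numerator, which after dividing by $\P(\mathcal{E}_{x,s}(N))=G^{g,h}_s(x)$ yields
\begin{equation*}
L^{g,h}_s(x) = \int_0^{(x-s)\vee 0} [h(x-a)+L^{h,h}_s(x-a)]\,G^{h,h}_s(x-a)\,g(a)\,\dd a,
\end{equation*}
and then \eqref{eq:L:g:h:s} after the change of variables $z=x-a$. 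Specializing $g=h$ gives \eqref{eq:Gs}--\eqref{eq:Ls}; $G^{h,h}_s \in L^\infty$ is immediate as a probability, and a simple bound $G^{h,h}_s \leq 1$ in \eqref{eq:Ls} combined with the fixed-point estimate $\|L^{h,h}_s\|_1 \leq \|h\|_1^2/(1-\|h\|_1)$, uniform in $s$, gives $L^{h,h}_s \in L^1$.

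The main technical difficulty is handling the conditioning on $\mathcal{E}_{x,s}(N)$ rigorously, since this event couples $N_{anc}$ with all the attached clusters; Lemma~\ref{lem:conditional:expectation} is tailored precisely for this setting and cleanly separates the survival factor $G^{h,h}_s(x-a)$ from the expected contribution $h(x-a)+L^{h,h}_s(x-a)$ of each cluster. The second delicate step is the application of Campbell's formula to $N_{anc}$ weighted by $\prod_b G^{h,h}_s(x-b)$: this is what produces, at the numerator, exactly the factor $G^{g,h}_s(x)$ that cancels with the normalisation $\P(\mathcal{E}_{x,s}(N))$ and leaves a closed formula for $L^{g,h}_s$ purely in terms of $(L^{h,h}_s, G^{h,h}_s)$.
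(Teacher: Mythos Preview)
Your proof is correct and follows essentially the same route as the paper: cluster decomposition from Proposition~\ref{prop:cluster}, the event decomposition $\mathcal{E}_{x,s}(N)=\mathcal{E}_{x,s}(N_{anc})\cap\bigcap_V \mathcal{E}_{x-V,s}(N_c^V)$, Lemma~\ref{lem:conditional:expectation} to reduce the inner cluster sums to $L^{h,h}_s(x-V)$, and then a Poisson computation over $N_{anc}$. The only stylistic difference is in that last step: the paper uses a marking argument (\`a la M\o ller) splitting $N_{anc}$ into independent Poisson processes $N_{anc}^0$ and $N_{anc}^1$ so that $\mathcal{E}_{x,s}(N)=\{N_{anc}^1=\emptyset\}$ decouples by independence, whereas you invoke Slivnyak--Mecke plus the Laplace functional directly; these are equivalent ways of carrying out the same Poisson calculation, and both produce the cancellation of the factor $G^{g,h}_s(x)$ between numerator and denominator.
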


\begin{proof}
The statement 
only depends on the distribution of $N$. Hence, thanks to Proposition~\ref{prop:decomp:Hawkes}, it is sufficient to consider $N=N_{anc} \cup \left( \cup_{V\in N_{anc}} V+ N_{c}^{V} \right)$.

Let us show~\eqref{eq:L:g:h:s}. First, let us write 
$
L^{g,h}_{s}(x)=\mathbb{E}\left[ \sum_{X\in N} h(x-X) \middle| \mathcal{E}_{x,s}(N) \right].
$ 
and note that $L^{g,h}_{s}\left(x\right)=0$ if $x\leq s$.
The following decomposition holds
$$
L^{g,h}_{s}(x)=\mathbb{E}\left[ \sum_{V\in N_{anc}}
\left( h(x-V) + \sum_{ W\in N_{c}^{V}} h(x-V-W)  \right) \middle| \mathcal{E}_{x,s}(N) \right].
$$
According to Lemma \ref{lem:conditional:expectation} and the following decomposition,
\begin{equation}\label{eq:decomposition:E:x:s}
\mathcal{E}_{x,s}(N) = \mathcal{E}_{x,s}(N_{anc}) \cap \left( \bigcap_{V\in N_{anc}} \mathcal{E}_{x-V,s}(N_{c}^{V}) \right),
\end{equation}
let us denote $Y=N_{anc}$, $X_{V}=N_{c}^{V}$ and $B_{V}=\mathcal{E}_{x-V,s}(N_{c}^{V})$ for all $V\in N_{anc}$.
Let us fix $V\in N_{anc}$ and compute the conditional expectation of the inner sum with respect to the filtration of $N_{anc}$ which is
\begin{eqnarray}
 \mathbb{E}\left[ \sum_{W\in N_{c}^{V}} h(x-V-W) \middle| Y, B_{V} \right] &=& \mathbb{E}\left[ \sum_{W\in N_{c}} h((x-V)-W) \middle|  \mathcal{E}_{x-V,s}(N_{c})  \right] \nonumber \\
& = & L^{h,h}_{s}(x-V), \label{eq:example:application:lemma}
\end{eqnarray}
since, conditionally on $N_{anc}$, $N_{c}^{V}$ has the same distribution as $N_{c}$ which is a linear Hawkes process with conditional intensity $\lambda(t,\mathcal{F}_{t-}^{N_{c}}) = h(t) + \int_{0}^{t-} h(t-z) N_{c}(\dd z)$.
Using the conditional independence of the cluster processes with respect to $N_{anc}$, one can apply Lemma \ref{lem:conditional:expectation} and deduce that
$$
L^{g,h}_{s}(x) =  \mathbb{E}\left[\sum_{V\in N_{anc}}  \left( h(x-V) + L^{h,h}_{s}(x-V) \right) \middle| \mathcal{E}_{x,s}(N)\right]\\
$$
The following argument is inspired by Moller \cite{Moller_Perfect_Hawkes}. For every $V\in N_{anc}$, we say that $V$ has mark $0$ if $V$ has no descendant or himself in $[x-s,x)$ and mark $1$ otherwise. Let us denote $N_{anc}^{0}$ the set of points with mark $0$ and $N_{anc}^{1}=N_{anc}\setminus N_{anc}^{0}$. For any $V\in N_{anc}$, we have
$
\mathbb{P}\left( V \in N_{anc}^{0} \middle| N_{anc} \right)= G^{h,h}_{s}(x-V) \mathds{1}_{[x-s,x)^{c}}(V),
$
and all the marks are chosen independently given $N_{anc}$.
Hence, $N_{anc}^{0}$ and $N_{anc}^{1}$ are independent Poisson processes and the intensity of $N_{anc}^{0}$ is given by $\lambda(v)=g(v)G^{h,h}_{s}(x-v) \mathds{1}_{[x-s,x)^{c}}(v)$. Moreover, the event { $\left\{ N_{anc}^{1}=\emptyset \right\}$ can be identified to 
$\mathcal{E}_{x,s}(N)$}
and
\begin{eqnarray*}
L^{g,h}_{s}(x)& =& \mathbb{E}\left[ \sum_{V\in N_{anc}^{0}}  \left( h(x-V) + L^{h,h}_{s}(x-V) \right)  \middle| N_{anc}^{1}=\emptyset \right] \nonumber \\
&=& \int_{-\infty}^{x-}\left(h\left(x-w\right)+ L^{h,h}_{s}(x-w) \right) g(w) G^{h,h}_{s}(x-w) \mathds{1}_{[x-s,x)^{c}}(w) \dd w  \\ 
&=&   \int_{0}^{(x-s)\vee 0} \left(h\left(x-w\right)+ L^{h,h}_{s}(x-w) \right) G^{h,h}_{s}(x-w) g(w) \, \dd w, \nonumber
\end{eqnarray*}
where we used the independence between the two Poisson processes. It suffices to substitute $w$ by $z=x-w$ in the integral to get the desired formula. {Since $G_s^{h,h}$ is bounded, it is obvious that $L_s^{h,h}$ is $L^1$.}

Then, let us show~\eqref{eq:G:g:h:s}. First note that if $x<0$, $G^{g,h}_s(x)=1$.
Next, following 
\eqref{eq:decomposition:E:x:s} one has
$G^{g,h}_s(x)=\esp{\mathds{1}_{\mathcal{E}_{x,s}(N_{anc})}  \prod_{X \in N_{anc}} \mathds{1}_{\mathcal{E}_{x-X,s}(N_{c}^{X}) } }.$
This is also
\begin{eqnarray*}
G^{g,h}_s(x)&=&\esp{\mathds{1}_{N_{anc} \cap [x-s,x)=\emptyset} \prod_{V\in N_{anc}\cap [x-s,x)^c} \mathds{1}_{\mathcal{E}_{x-V,s}(N_{c}^{V}) } },\\
&=&\esp{\mathds{1}_{N_{anc} \cap [x-s,x)=\emptyset} \prod_{V\in N_{anc} \cap [x-s,x)^c} G^{h,h}_s(x-V)},
\end{eqnarray*}
by conditioning with respect to $N_{anc}$. Since $ N_{anc} \cap [x-s,x)$ is independent of $ N_{anc} \cap [x-s,x)^c$, this gives
\begin{equation*}
G^{g,h}_s(x)=\exp(-\int_{x-s}^x g(z) \dd z)\esp{\exp \left(\int_{[x-s,x)^c} \log(G^{h,h}_s(x-z)) N_{anc}(\dd z)\right)}.
\end{equation*}
This leads to
$\log(G^{g,h}_s(x))=-\int_{x-s}^x g(z) \dd z + \int_{[x-s,x)^c} (G^{h,h}_s(x-z)-1) g(z) \dd z$, thanks to {Campbell's Theorem \cite{kingman}}.
Then,~\eqref{eq:G:g:h:s} clearly follows from the facts that if $z>x>0$ then $G^{h,h}_s(x-z)=1$ and $g(z)=0$ as soon as $z<0$.
\end{proof}
\paragraph{Proof of Lemma \ref{unicityLG}}
In turn, we use a Banach fixed point argument to prove that for all $s\geq 0$ there exists a unique couple $(L_s,G_s)\in L^1(\R_+)\times L^\infty (\R_+) $ solution to these equations. To do so, let us first study Equation~\eqref{eq:Gs} and define $T_{G,s}: L^\infty(\R_+)\to L^\infty(\R_+)$ by
$
T_{G,s}(f)(x):=\exp\biggl(\int_{0}^{(x-s)\vee 0} f(x-z) h(z) \dd z -\int_{0}^x h(z) \dd z\biggr).$ 
The right-hand side is well-defined since $h\in L^1$ and $f\in L^\infty.$ Moreover we have
$$T_{G,s}(f)(x)\leq e^{\Vert f\Vert_{L^\infty} \left( \int_{0}^{(x-s)\vee 0}  h(z) \dd z -\int_{0}^x h(z) \dd z \right)}
\leq e^{(\Vert f\Vert_{L^\infty} -1)\int_{0}^{(x-s)\vee 0}  h(z) \dd z}.$$
This shows that $T_{G,s}$ maps the ball of radius $1$ of $L^\infty$ into itself, and more precisely into the intersection of the positive cone and the ball. We distinguish two cases: 

\noindent
$-$ If $x<s$, then 
$T_{G,s}(f)(x)=\exp(-\int\limits_0^x h(z) \dd z)$ for any 
$f$,
thus, the unique fixed point is given by 
$G_s:x\mapsto \exp(-\int\limits_0^x h(z) \dd z),$ which does not depend on  $s>x.$ 

\noindent
$-$ And if $x>s$, 
 the functional $T_{G,s}$ is a $k-$contraction in $\{f\in L^\infty(\R_+), \Vert f\Vert_{L^\infty} \leq 1\},$ with $k\leq  \int\limits_{0}^{\infty}  h(z) \dd z<1$, 
by convexity of the exponential. More precisely, using that for all $x,y$, $|e^{x}-e^{y}|\leq e^{\max(x,y)}|x-y|$ we end up with, for $\Vert f\Vert, \Vert g\Vert_{L^\infty} \leq 1$,
\begin{eqnarray*}
\big\lvert T_{G,s}(f)(x)-T_{G,s}(g)(x)\big\rvert &\leq & 
e^{-\int\limits_{0}^x h(z)\dd z} e^{\int\limits_{0}^{x-s} h(z)\dd z} \Vert f-g\Vert_{L^\infty} \int_{0}^{(x-s)}  h(z) \dd z\\
&\leq & \Vert f-g\Vert_{L^\infty} \int_{\mathbb{R}_{+}}  h(z) \dd z.
\end{eqnarray*}

Hence there exists only one fixed point $G_s$ that we can identify with $G_s^{h,h}$ given in Proposition \ref{prop:implicit:formulas:L:G} and $G_s^{h,h}$ being a probability, $G_s$ takes values in $[0,1]$.

Analogously, we define the functional $T_{L,s}: L^1(\R_+)\to L^1(\R_+)$ by
$ 
T_{L,s}(f)(x):=\int_{s\wedge x}^{x} \left(h\left(z\right)+ f(z) \right) G_{s}(z) h(x-z) \, \dd z,$ 
and it is easy to check that $T_{L,s}$ is well-defined as well.
We similarly distinguish the two cases:

\noindent
$-$ If $x<s$, then the unique fixed point is given by $L_s(x)=0$.


\noindent
$-$
And if $x>s$,  
thus $T_{L,s}$ is a $k-$contraction with 
$k\leq \int\limits_0^\infty  h(y) dy<1 $ in $L^1((s,\infty))$ since $\Vert G_s\Vert_{L^\infty} \leq 1:$

$ 
\begin{array}{ll}\Vert T_{L,s}(f) - T_{L,s}(g) \Vert_{L^1}&=\int\limits_s^\infty \big\lvert \int\limits_s^x \bigl( f(z)-g(z) \bigr) G_s(z) h(x-z) \dd z \big\rvert \dd x \\
&\leq \Vert G_s\Vert_{L^\infty} \int\limits_s^\infty \int\limits_v^\infty \big\lvert f(z)-g(z) \big\rvert   h(x-z) \dd x\dd z
\\
&= \Vert G_s\Vert_{L^\infty}\Vert f-g\Vert_{L^1((s,\infty))} \int\limits_0^\infty  h(y) dy.
\end{array}
$ 

\noindent In the same way, there exists only one fixed point $L_s=L_s^{h,h}$ given by Proposition \ref{prop:implicit:formulas:L:G}. In particular $L_s(x\leq s) \equiv 0.$

Finally, as a consequence of Equation~\eqref{eq:Ls} we find that if $L_s$ is the 
unique fixed point of $T_{L,s}$, then
$ 
\Vert L_s \Vert_{L^1(\R_+)}\le \frac{(\int_0^\infty h(y) \ dy)^2}{
1-\int_0^\infty h(y) \ dy}
$ 
and therefore $L_s$ is uniformly bounded in $L^{1}$ with respect to $s$.

\begin{lem}\label{lem:assumptions:Hawkes}
Let $N$ be a linear Hawkes process with past before time $0$ given by $N_{-}$ and intensity on $(0,\infty)$ given by
$
\lambda(t,\mathcal{F}_{t-}^{N})=\mu + \int_{-\infty}^{t-} h(t-x) N(\dd x),
$
where $\mu$ is a positive real number and $h$ is a non-negative function with support in $\mathbb{R}_{+}$, such that $|| h ||_{L^1} <1$. If the distribution of $N_{-}$ satisfies \eqref{eq:assumption:P0} 
then 
$(\mathcal{A}_{\lambda,loc}^{\mathbb{L}^1,exp})$ is satisfied.
\end{lem}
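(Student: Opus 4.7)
The goal is to show $\esp{\int_0^T \lambda(t,\mathcal{F}_{t-}^N) \dd t}<\infty$ for all $T \geq 0$. The plan is to bound $\esp{N_T}$ directly via the cluster decomposition of Proposition~\ref{prop:decomp:Hawkes}, then deduce the result using the compensator identity $\esp{N_T} = \esp{\int_0^T \lambda(t,\mathcal{F}_{t-}^N) \dd t}$, which holds for a non-explosive counting process with predictable intensity.

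First I would decompose $N = N_{\leq 0} \cup N_{>0}$ as in Proposition~\ref{prop:decomp:Hawkes}. For the forward part $N_{>0}$, the ancestors form a Poisson process of rate $\mu$ on $\R_+$, and each ancestor independently initiates a Galton--Watson tree whose offspring distribution is Poisson with mean $\norme{h}_{L^1}<1$ (subcritical regime). Thus each ancestor generates, in expectation, $\sum_{k\geq 0}\norme{h}_{L^1}^k = (1-\norme{h}_{L^1})^{-1}$ points (itself plus descendants). Since $h$ is supported in $\R_+$, only ancestors lying in $[0,T]$ contribute to $[0,T]$, so
\begin{equation*}
\esp{N_{>0}([0,T])} \leq \frac{\mu T}{1-\norme{h}_{L^1}}.
\end{equation*}

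For the past-driven part $N_{\leq 0}$, conditionally on $N_-$ each past point $T_i$ emits first-generation children on $(0,\infty)$ according to a Poisson process of intensity $h(\cdot-T_i)\1_{(0,\infty)}$, and each such child in turn initiates an independent cluster of expected total size $(1-\norme{h}_{L^1})^{-1}$. Setting $M:=\sup_{t\geq 0}\esp{\int_{-\infty}^0 h(t-x) N_-(\dd x)}$, which is finite by assumption~\eqref{eq:assumption:P0}, Fubini's theorem yields
\begin{equation*}
\esp{N_{\leq 0}([0,T])} \leq \frac{1}{1-\norme{h}_{L^1}}\int_0^T \esp{\int_{-\infty}^0 h(v-x) N_-(\dd x)} \dd v \leq \frac{MT}{1-\norme{h}_{L^1}}.
\end{equation*}
Summing the two contributions gives $\esp{N_T}\leq (\mu+M)T/(1-\norme{h}_{L^1})<\infty$.

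To conclude, since $\bigl(N_t-\int_0^t\lambda(s,\mathcal{F}_{s-}^N)\dd s\bigr)_{t\geq 0}$ is a local martingale with a non-negative, non-decreasing compensator, localizing along a reducing sequence of stopping times and applying monotone convergence gives $\esp{\int_0^T \lambda(t,\mathcal{F}_{t-}^N)\dd t}=\esp{N_T}<\infty$, which is $\left(\mathcal{A}^{\mathbb{L}^1,exp}_{\lambda,loc}\right)$. The main technical obstacle lies in the subcritical cluster-size computation: one needs to justify rigorously, by induction on the generation number in the construction of Proposition~\ref{prop:cluster} combined with monotone convergence, that the expected progeny of a single ancestor equals $\sum_{k\geq 0}\norme{h}_{L^1}^k$. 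This is classical for Poisson--Galton--Watson branching but uses crucially the subcriticality assumption $\norme{h}_{L^1}<1$ to sum the geometric series.
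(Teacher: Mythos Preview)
Your argument is correct and takes a genuinely different route from the paper. The paper bounds the mean intensity $\overline{\lambda}(t)=\esp{\lambda(t,\mathcal{F}_{t-}^{N})}$ \emph{pointwise} in $t$: it invokes Lemma~\ref{prop:implicit:formulas:L:G} with $g\equiv\mu$ and $s=0$ to write $\esp{\mu+\int_0^{t-}h(t-x)N_{>0}(\dd x)}=\mu\bigl(1+\int_0^t(h(x)+L_0(x))\dd x\bigr)$, and Lemma~\ref{lem:computation:g} for the $N_{\leq 0}$ part, obtaining the uniform bound $\overline{\lambda}(t)\leq (\mu+M)(1+\|h\|_{L^1}+\|L_0\|_{L^1})$. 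This reuses the machinery $(L_s,G_s)$ of Lemma~\ref{unicityLG} already set up for the main results.

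You instead bound $\esp{N_T}$ directly via the elementary subcritical Galton--Watson computation on total progeny, and then recover $\esp{\int_0^T\lambda\,\dd t}$ from the compensator identity through localization and monotone convergence. Your approach is more self-contained for this lemma (it avoids the functions $L_s,G_s$ entirely) and yields a linear-in-$T$ bound; the paper's approach gives the stronger conclusion that $\sup_{t\geq 0}\esp{\lambda(t,\mathcal{F}_{t-}^{N})}<\infty$, which however is not needed for $\left(\mathcal{A}^{\mathbb{L}^1,exp}_{\lambda,loc}\right)$. Both are valid; yours is arguably the cleaner route to the lemma as stated, while the paper's route naturally exploits tools it has already developed.
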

\begin{proof}
For 
{all} $t> 0$, let
$
\overline{\lambda}(t) = \mathbb{E}\left[ \lambda(t,\mathcal{F}_{t-}^{N}) \right].
$
{By} Proposition~\ref{prop:decomp:Hawkes}, 
$ 
\overline{\lambda}(t) =  \mathbb{E}\left[ {\mu + \int_0^{t-} h(t-x) N_{>0}(\dd x)} \right] + \mathbb{E}\left[ {\int_{-\infty}^{t-} h(t-x) N_{\leq 0} (\dd x)} \right]
$ 
which is possibly infinite.

{Let us apply Proposition\ref{prop:implicit:formulas:L:G} with $g\equiv\mu$ and $s=0$, the choice $s=0$ implying that $\mathcal{E}_{t,0}(N_{>0})$ is of probability 1. Therefore
$$\mathbb{E}\left[ \mu + \int_0^{t-} h(t-x) N_{>0}(\dd x) \right]= \mu \left( 1+ \int_{0}^{t} (h(x)+L_{0}(x))  \dd x \right),$$  where ${(L_0,G_0=1)}$ is the solution of Lemma \ref{unicityLG} for $s=0$, {by identification of Proposition \ref{prop:implicit:formulas:L:G}}. Hence 
$\mathbb{E}\left[ \mu + \int_0^{t-} h(t-x) N_{>0}(\dd x) \right]\leq \mu (1 + || h ||_{L^1} + || L_{0} ||_{L^1} )$.}
%
%
On the other hand, 
thanks to Lemma~\ref{lem:computation:g}, we have
$$
\mathbb{E}\left[\int_{-\infty}^{t-} h(t-x) N_{\leq 0} (\dd x)\right] = \mathbb{E}\left[ \sum_{T\in N_{-}} \left(  h(t-T) + \int_{0}^{t} \left[ h(t-x) + L_{0}(t-x) \right] h(x-T) \dd x \right) \right].
$$
Since all the quantities are non negative, one can exchange all the integrals and deduce that
$$
\mathbb{E}\left[ \int_{-\infty}^{t-} h(t-x) N_{\leq 0} (\dd x) \right] 
\leq M(1 + || h ||_{L^1} + || L_{0} ||_{L^1}),
$$
with $M = \sup_{t\geq 0} \mathbb{E}\left[ \int_{-\infty}^{0} h(t-x) N_{-}(\dd x) \right]$ which is finite by assumption. Hence,
$
\overline{\lambda}(t) \leq (\mu + M) (1 + || h ||_{L^1} + || L_{0} ||_{L^1}),
$ 
{ and therefore $(\mathcal{A}_{\lambda,loc}^{\mathbb{L}^1,exp})$ is satisfied.}

\end{proof}
\paragraph{Proof of Proposition \ref{prop:conditional:intensity:hawkes}}
First, {by Proposition \ref{prop:decomp:Hawkes}
\begin{multline*}
\mathbb{E}\left[ \lambda(t,\mathcal{F}_{t-}^{N}) \middle|\,S_{t-}\geq s \right] = \\
\mu + \mathbb{E}\left[\int_0^{t-} h(t-z) N_{>0}(\dd z)\middle| \mathcal{E}_{t,s}(N)\right] + \mathbb{E}\left[\int_{-\infty}^{t-} h(t-z) N_{\leq 0}(\dd z)\middle| \mathcal{E}_{t,s}(N)\right] \\
= \mu + \mathbb{E}\left[\int_0^{t-} h(t-z) N_{>0}(\dd z)\middle| \mathcal{E}_{t,s}(N_{>0})\right] + \mathbb{E}\left[\int_{-\infty}^{t-} h(t-z) N_{\leq 0}(\dd z)\middle| \mathcal{E}_{t,s}(N_{\leq 0})\right]
\end{multline*}
By Lemma~\ref{prop:implicit:formulas:L:G}, we obtain
$\mathbb{E}\left[ \lambda(t,\mathcal{F}_{t-}^{N}) \middle|\,S_{t-}\geq s \right] = \mu+ L_s^{\mu,h}(t)+ \Phi_{-,\P_0}^h(t,s).$
Identifying by Lemma \ref{unicityLG}, $L_s=L_s^{h,h}$ and $G_s=G_s^{h,h}$, we obtain
$$\mathbb{E}\left[ \lambda(t,\mathcal{F}_{t-}^{N}) \middle|\,S_{t-}\geq s \right] = \Phi_+^{\mu,h}(t,s)+ \Phi_{-,\P_0}^h(t,s).$$
Hence $\Phi^{\mu,h}_{\mathbb{P}_{0}}(t,s) = \mathbb{E}\left[ \lambda(t,\mathcal{F}_{t-}^{N}) \middle|\,S_{t-}\geq s \right]$.}
 
 Lemma~\ref{lem:assumptions:Hawkes} ensures that the assumptions of Theorem~\ref{prop: Expectation System} are fulfilled.
Let {$u$ and $\rho^{\mu,h}_{\P_0}= \rho_{\lambda, \P_0}$ be defined accordingly as in Theorem~\ref{prop: Expectation System}. With respect to the PDE system, there are} two possibilities to express $\mathbb{E}\left[ \lambda(t,\mathcal{F}_{t-}^{N}) \mathds{1}_{\left\{ S_{t-}\geq s \right\}} \right]$. The first one involves $\rho_{\lambda,\mathbb{P}_{0}}$ and is
$ 
\mathbb{E}\left[\rho^{\mu,h}_{\mathbb{P}_{0}}(t,S_{t-}) \mathds{1}_{S_{t-}\geq s}\right],
$ 
whereas the second one involves $\Phi^{\mu,h}_{\mathbb{P}_{0}}$ and is
$
\Phi^{\mu,h}_{\mathbb{P}_{0}}(t,s) \mathbb{P}\left( S_{t-}\geq s \right).
$

{This leads to}
$ 
\int_s^{+\infty} \rho^{\mu,h}_{\mathbb{P}_{0}}(t,x)u(t,\dd x)=  \Phi^{\mu,h}_{\mathbb{P}_{0}}(t,s) \int_s^{+\infty}u(t,\dd x)
$, 
since $u(t,\dd s)$ is the distribution of $S_{t-}$.
Let us denote $v(t,s)=\int_s^{+\infty} u(t,\dd x)$: this relation, together with Equation~\eqref{eq:edp:expect} for $u,$ immediately gives us that $v$ satisfies Equation~\eqref{eq:PDEPhi:v} {with $\Phi=\Phi^{\mu,h}_{\mathbb{P}_0}$}. Moreover, $\int_0^{+\infty} u(t,\dd x)=1,$ which gives us the boundary condition {in} \eqref{eq:PDEPhi:bound:v}.

%

\vspace{-0.2cm}
\subsubsection{Study of {the general case for} $\Phi_{-,\P_0}^h$ in Proposition \ref{prop:conditional:intensity:hawkes}}

\begin{lem}\label{lem:computation:g}
Let consider $h$  a non-negative function with support in 
$\R_+$ such that $\int h <1$,  
$N_{-}$ a point process on $\mathbb{R}_{-}$
 with distribution $\mathbb{P}_0$ and 
$N_{\leq 0}$ defined by~\eqref{eq:def:Nleq0}.
If 
$
 \Phi^{h}_{-,\mathbb{P}_{0}}(t,s):=\mathbb{E}\left[ \int_{-\infty}^{t-} h(t-z) N_{\leq 0}(\dd z) \middle|\,
 {\mathcal{E}_{t,s}(N_{\leq 0})}\right],
$ 
for all $s,t \geq 0$, 
then,
\begin{equation}\label{eq:Psi:-:P0}
\Phi^{h}_{-,\mathbb{P}_{0}}(t,s) = \mathbb{E}\left[ \sum_{T\in N_{-}} \left(h(t-T)+ K_{s}(t,T)  \right) \middle| \mathcal{E}_{t,s}(N_{\leq 0}) \right],
\end{equation}
where 
$K_{s}(t,u)$ is given by \eqref{Kqdef}.
\end{lem}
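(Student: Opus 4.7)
The plan is to decompose $N_{\leq 0}$ and its associated integrand into pieces that we can handle using the previously established Lemmas~\ref{lem:conditional:expectation} and~\ref{prop:implicit:formulas:L:G}. Recall from Proposition~\ref{prop:decomp:Hawkes} that $N_{\leq 0}=N_{-}\cup\bigcup_{T\in N_{-}} N_{H}^{T}$, where $N_{H}^{T}=N_{1}^{T} \cup \bigl( \bigcup_{V\in N_{1}^{T}} V+N_{c}^{T,V} \bigr)$ and, conditionally on $N_{-}$, the processes $(N_{H}^{T})_{T\in N_{-}}$ are independent, $N_{H}^{T}$ being a linear Hawkes process on $(0,\infty)$ with (deterministic, $T$-dependent) external forcing $g_{T}(u):=h(u-T)$ and self-exciting kernel $h$.

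First, I would split the integral according to this decomposition: since every point of $N_{-}$ lies in $\R_{-}$ and every point of $N_{H}^{T}$ lies in $(0,\infty)$,
\begin{equation*}
\int_{-\infty}^{t-} h(t-z) N_{\leq 0}(\dd z)= \sum_{T\in N_{-}} h(t-T)+\sum_{T\in N_{-}} \int_{0}^{t-} h(t-z)N_{H}^{T}(\dd z).
\end{equation*}
Similarly, because each $N_{H}^{T}$ is supported in $(0,\infty)$ and $N_{-}$ in $\R_{-}$, the conditioning event decomposes as
\begin{equation*}
\mathcal{E}_{t,s}(N_{\leq 0}) = \mathcal{E}_{t,s}(N_{-}) \cap \bigcap_{T\in N_{-}}\mathcal{E}_{t,s}(N_{H}^{T}).
\end{equation*}

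Next, I would apply Lemma~\ref{lem:conditional:expectation} with $Y=N_{-}$, $I(Y)=N_{-}$, $X_{T}=N_{H}^{T}$, $A(Y)=\mathcal{E}_{t,s}(N_{-})$ and $B_{T}=\mathcal{E}_{t,s}(N_{H}^{T})$. The conditional independence of the $N_{H}^{T}$'s given $N_{-}$ is exactly the hypothesis of the lemma. The first sum $\sum_{T\in N_{-}}h(t-T)$ is $Y$-measurable so passes through the conditioning unchanged; for the second sum the lemma yields
\begin{equation*}
\mathbb{E}\!\left[\sum_{T\in N_{-}} \!\!\!\int_{0}^{t-} \!\!\!h(t-z)N_{H}^{T}(\dd z) \middle|\, \mathcal{E}_{t,s}(N_{\leq 0})\right]\!=\mathbb{E}\!\left[\sum_{T\in N_{-}} \mathbb{E}\!\left[ \!\int_{0}^{t-} \!\!\! h(t-z) N_{H}^{T}(\dd z) \middle|\, N_{-},\mathcal{E}_{t,s}(N_{H}^{T})\right]\! \middle|\, \mathcal{E}_{t,s}(N_{\leq 0})\!\right]\!.
\end{equation*}

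The key computational step is then identifying the inner conditional expectation. Fixing $T\in N_{-}$ and working conditionally on $N_{-}$, $N_{H}^{T}$ is distributed as a Hawkes process with no past, external forcing $g_{T}(u)=h(u-T)$ and self-exciting kernel $h$. Lemma~\ref{prop:implicit:formulas:L:G} applied with this choice of $g$ therefore gives
\begin{equation*}
\mathbb{E}\!\left[ \int_{0}^{t-} \!\!h(t-z) N_{H}^{T}(\dd z) \middle|\, N_{-},\mathcal{E}_{t,s}(N_{H}^{T})\right]=L^{g_{T},h}_{s}(t)=\int_{s\wedge t}^{t}\!\!\bigl(h(z)+L_{s}(z)\bigr) G_{s}(z)\, h(t-z-T)\,\dd z,
\end{equation*}
and the substitution $x=t-z$ turns this into the expression \eqref{Kqdef} of $K_{s}(t,T)$ exactly. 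Combining with the ancestor contribution $\sum_{T}h(t-T)$ yields \eqref{eq:Psi:-:P0}.

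The only possible technical obstacle is verifying the hypotheses of Lemma~\ref{lem:conditional:expectation} (measurability and integrability of the sums indexed by $N_{-}$), which reduces to the same Fubini-type argument already used in Lemma~\ref{lem:assumptions:Hawkes} under the hypothesis $\sup_{t\geq 0}\mathbb{E}\bigl[\int_{-\infty}^{0}h(t-z)N_{-}(\dd z)\bigr]<\infty$; once this is in hand, all sums and conditional expectations are well defined, and the change of variable matching $L^{g_{T},h}_{s}(t)$ with $K_{s}(t,T)$ is routine.
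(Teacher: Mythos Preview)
Your proof is correct and in fact streamlines the paper's argument. The paper carries out a \emph{two-level} conditioning: first it fixes $T\in N_-$ and $V\in N_1^T$, conditions on $(N_-,N_1^T)$ to reduce $\sum_{W\in N_c^{T,V}}h(t-V-W)$ to $L_s^{h,h}(t-V)$, and then it computes the remaining sum over $V\in N_1^T$ by a marked-Poisson argument (points of $N_1^T$ are marked $0$ or $1$ according to whether they have a descendant in $[t-s,t)$), obtaining $K_s(t,T)$ as the mean of a thinned Poisson integral. Your route is shorter because you observe that the whole branch $N_H^T$ is itself a Hawkes process with empty past and external forcing $g_T(u)=h(u-T)$, so Lemma~\ref{prop:implicit:formulas:L:G} applied with $g=g_T$ yields $L_s^{g_T,h}(t)$ in one step; the change of variable $x=t-z$ then identifies this with $K_s(t,T)$ exactly. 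In effect you are reusing the marked-Poisson computation already encapsulated in the proof of Lemma~\ref{prop:implicit:formulas:L:G}, whereas the paper redoes it. The only price is that you must invoke the general-$g$ version of Lemma~\ref{prop:implicit:formulas:L:G} (which the paper does state), and check that $g_T\in L^1_{loc}(\mathbb{R}_+)$, which is immediate since $h\in L^1$ and $T<0$.
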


\begin{proof}
Following the decomposition given in Proposition~\ref{prop:decomp:Hawkes}, one has
\begin{multline*}
\Phi^{h}_{-,\mathbb{P}_{0}}(t,s) = \mathbb{E}\left[ \sum_{T\in N_{-}} 
\Bigg( h(t-T)   \right.\\
 \left. + \sum_{V\in N_{1}^{T}} \Bigg( h(t-V) + \sum_{W\in N_{c}^{T,V}} h(t-V-W) \Bigg)\Bigg)  \middle| \mathcal{E}_{t,s}(N_{\leq 0}) \right],
\end{multline*}
where
$
\mathcal{E}_{t,s}(N_{\leq 0}) = \mathcal{E}_{t,s}(N_{-}) \bigcap_{ T'\in N_{-}} \left( \mathcal{E}_{t,s}(N_{1}^{T}) \bigcap_{ V'\in N_{1}^{T}} \mathcal{E}_{t-V',s}( N_{c}^{V'} )\right).
$
Let us fix $T\in N_{-}$, $V\in N_{1}^{T}$ and compute the conditional expectation of the inner sum with respect to $N_{-}$ and $N_{1}^{T}$. In the same way as for~\eqref{eq:example:application:lemma} we end up with
\begin{equation*}
 \mathbb{E}\left[ \sum_{W\in N_{c}^{T,V}} h(t-V-W) \middle| N_{-}, N_{1}^{T}, \mathcal{E}_{t-V,s}(N_{c}^{T,V}) \right] = L^{h,h}_{s}(t-V),
\end{equation*}
since, conditionally on $N_{-}$ and $N_{1}^{T}$,  $N_{c}^{T,V}$ has the same distribution as $N_{c}$.
Using the conditional independence of the cluster processes $(N_{c}^{T,V})_{V\in N_{1}^{T}}$ with respect to $\left( N_{-}, (N_{1}^{T})_{T\in N_{-}} \right)$, one can apply Lemma \ref{lem:conditional:expectation} with $Y=\left( N_{-}, (N_{1}^{T})_{T\in N_{-}} \right)$ and $X_{(T,V)}=N_{c}^{T,V}$ and deduce that
$$
\Phi^{h}_{-,\mathbb{P}_{0}}(t,s)  = \mathbb{E}\left[\sum_{T\in N_{-}}  \left( h(t-T) + \sum_{V\in N_{1}^{T}} \left( h(t-V) + L^{h,h}_{s}(t-V) \right) \right) \middle| \mathcal{E}_{t,s}(N_{\leq 0})  \right].
$$
Let us fix $T\in N_{-}$ and compute the conditional expectation of the inner sum with respect to $N_{-}$ which is
\begin{equation}
\Gamma:= \mathbb{E}\left[ \sum_{V\in N_{1}^{T}} \left( h(t-V) + L^{h,h}_{s}(t-V) \right) \middle| N_{-}, A_{t,s}^{T} \right], \label{eq:expectation:K}
\end{equation}
where $A_{t,s}^{T}=\mathcal{E}_{t,s}(N_{1}^{T})\cap \left( \bigcap_{V' \in N_{1}^{T}}\mathcal{E}_{t-V',s}(N_{c}^{T,V'}) \right)$.
For every $V\in N_{1}^{T}$, we say that $V$ has mark $0$ if $V$ has no descendant or himself in $[t-s,t)$ and mark $1$ otherwise. Let us denote $N_{1}^{T,0}$ the set of points with mark $0$ and $N_{1}^{T,1}=N_{1}^{T}\setminus N_{1}^{T,0}$. 

For any $V\in N_{1}^{T}$, 
$
\mathbb{P}\left( V \in N_{1}^{T,0} \middle| N_{1}^{T} \right)= G^{h,h}_{s}(t-V) \mathds{1}_{[t-s,t)^{c}}(V)
$
and all the marks are chosen independently given $N_{1}^{T}$.
Hence, $N_{1}^{T,0}$ and $N_{1}^{T,1}$ are independent Poisson processes and the intensity of $N_{1}^{T,0}$ is given by $\lambda(v)=h(v-T)\mathds{1}_{[0,\infty)}(v)G^{h,h}_{s}(t-v) \mathds{1}_{[t-s,t)^{c}}(v)$. Moreover, $A_{t,s}^{T}$ {is the event} $\left\{ N_{1}^{T,1}=\emptyset \right\}$, 
 so
\begin{eqnarray*}
\Gamma & = &  \mathbb{E}\left[ \sum_{V\in N_{1}^{T,0}} \left( h(t-V) +L^{h,h}_{s}(t-V) \right) \middle| N_{-}, \left\{ N_{1}^{T,1}=\emptyset \right\} \right] \\
& = & \int_{-\infty}^{t-} \left[ h(t-v) +L^{h,h}_{s}(t-v) \right] h(v-T)\mathds{1}_{[0,\infty)}(v)G^{h,h}_{s}(t-v) \mathds{1}_{[t-s,t)^{c}}(v) dv\\
& = & K_{s}(t,T).
\end{eqnarray*}
Using the independence of the cluster processes, 
one can apply Lemma \ref{lem:conditional:expectation} with $Y=N_{-}$ and $X_{T}=\left( N_{1}^{T}, (N_{c}^{T,V})_{V\in N_{1}^{T}} \right)$ and~\eqref{eq:Psi:-:P0} clearly follows.
\end{proof}

\begin{lem}
\label{prop:conditional:intensity:hawkes:simple:examples}
Under the assumptions and notations of Proposition \ref{prop:conditional:intensity:hawkes}  and Lemma \ref{unicityLG}, 
the function $\Phi^{h}_{-,\mathbb{P}_{0}}$ of Proposition \ref{prop:conditional:intensity:hawkes} can be identified with \eqref{eq:Phi:-:1pt} under $(\mathcal{A}^1_{N_-})$ and with \eqref{eq:Phi:-:Poisson} under $(\mathcal{A}^2_{N_-})$ 
and \eqref{eq:assumption:P0} is satisfied in those two cases.
\end{lem}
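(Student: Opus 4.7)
The plan is to derive both formulas by first computing the conditional probability $\mathbb{P}(\mathcal{E}_{t,s}(N_{\leq 0}) \mid N_-)$ explicitly in terms of $q(t,s,\cdot)$, and then applying Bayes' formula $\mathbb{E}[X \mid A] = \mathbb{E}[X \1_A]/\mathbb{P}(A)$ to the representation of $\Phi^h_{-,\mathbb{P}_0}$ given by Lemma~\ref{lem:computation:g}.

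The key intermediate claim is that
\[
\mathbb{P}\bigl(\mathcal{E}_{t,s}(N_{\leq 0}) \bigm| N_-\bigr) = \1_{N_- \cap [t-s, 0]=\emptyset} \prod_{T\in N_-} q(t,s,T).
\]
To prove it I adapt the Moller-style marking argument of Lemma~\ref{prop:implicit:formulas:L:G}. Conditionally on $N_-$, the level-$1$ offspring $N_1^T$ of each $T\in N_-$ is a Poisson process on $(0,\infty)$ with intensity $v\mapsto h(v-T)$, and the clusters $(N_c^{T,V})_{V\in N_1^T}$ are i.i.d. copies of a cluster process, independent across $T$. Mark each $V\in N_1^T$ with mark $0$ if $V\notin[t-s,t)$ and $V+N_c^{T,V}$ avoids $[t-s,t)$, and with mark $1$ otherwise. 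Conditionally on $N_1^T$ the marks are independent, so $N_1^{T,1}$ is a Poisson process whose intensity at $v$ equals $h(v-T)\bigl[\1_{(t-s)\vee 0\leq v< t} + \1_{0<v<(t-s)\vee 0}(1-G_s(t-v))\bigr]$ (using the identification $G_s = G^{h,h}_s$ provided by Lemma~\ref{unicityLG} combined with Lemma~\ref{prop:implicit:formulas:L:G}, together with the convention $G_s(x)=1$ for $x\leq 0$). A direct comparison with~\eqref{Kqdef} then gives $\mathbb{P}(N_1^{T,1}=\emptyset\mid N_-)=q(t,s,T)$. Conditional independence of the clusters across distinct $T$ yields the product, and the outer indicator accounts for the requirement $T\notin[t-s,t)$, which only binds when $t-s\leq 0$ since $T\leq 0$.

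Under $(\mathcal{A}^1_{N_-})$, the sum defining $\Phi^h_{-,\mathbb{P}_0}$ in Lemma~\ref{lem:computation:g} reduces to a single term, and Bayes' formula immediately gives~\eqref{eq:Phi:-:1pt} after expanding the expectations against $f_0$. Under $(\mathcal{A}^2_{N_-})$, I perform a second marking argument, now directly on $N_-$: assign mark $1$ to $T$ if $T\in[t-s,0]$, and otherwise assign mark $0$ with probability $q(t,s,T)$ and mark $1$ with probability $1-q(t,s,T)$. The Poisson marking theorem gives that $N_-^0$ and $N_-^1$ are independent Poisson processes with $N_-^0$ of intensity $z\mapsto \alpha\, q(t,s,z) \1_{z<0\wedge(t-s)}$, while $\mathcal{E}_{t,s}(N_{\leq 0}) = \{N_-^1=\emptyset\}$. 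Conditioning on this event leaves $N_-$ distributed as $N_-^0$, and Campbell's theorem applied to $T\mapsto h(t-T)+K_s(t,T)$ produces~\eqref{eq:Phi:-:Poisson}.

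Finally, \eqref{eq:assumption:P0} follows from elementary computations: under $(\mathcal{A}^1_{N_-})$, $\mathbb{E}[\int_{-\infty}^0 h(t-x) N_-(\dd x)] = \int h(t-t_0) f_0(t_0)\dd t_0 \leq \|f_0\|_\infty \|h\|_{L^1}$; under $(\mathcal{A}^2_{N_-})$, Campbell's theorem yields $\alpha \int_{-\infty}^0 h(t-x)\dd x \leq \alpha\|h\|_{L^1}$; both bounds are uniform in $t\geq 0$. The main bookkeeping obstacle is handling the two regimes $t-s\leq 0$ and $t-s>0$ in the marking argument on $N_1^T$, as the relevant integration bounds change; unifying them via the convention $(t-s)\vee 0$ and the extension $G_s(x)=1$ for $x\leq 0$ is precisely what allows $q(t,s,\cdot)$ to absorb both cases into a single formula.
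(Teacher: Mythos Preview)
Your proposal is correct and follows essentially the same approach as the paper. The only organizational difference is that you first establish the general identity $\mathbb{P}(\mathcal{E}_{t,s}(N_{\leq 0})\mid N_-)=\1_{N_-\cap[t-s,0]=\emptyset}\prod_{T\in N_-}q(t,s,T)$ via a marking argument on $N_1^T$ and then specialize, whereas the paper derives $q(t,s,T_0)$ directly in the one-point case $(\mathcal{A}^1_{N_-})$ by splitting $N_1^{T_0}$ into its restrictions to $[t-s,t)$ and its complement, and only afterwards reuses this interpretation of $q$ for the marking on $N_-$ in case $(\mathcal{A}^2_{N_-})$; the computations and the verification of~\eqref{eq:assumption:P0} are otherwise identical.
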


\paragraph{Proof.}
Using 
Lemma~\ref{lem:computation:g}, we have
$ 
\Phi^{h}_{-,\mathbb{P}_{0}}(t,s) = \mathbb{E}\left[ \sum_{T\in N_{-}} \left(h(t-T)+ K_{s}(t,T)  \right) \middle| \mathcal{E}_{t,s}(N_{\leq 0}) \right].
$ 
\noindent
{{\it Under $\left(\mathcal{A}^1_{N_-}\right)$}}. On the one hand, for every $t\geq 0$,
\begin{eqnarray*}
\mathbb{E}\left[ \int_{-\infty}^{0} h(t-x) N_{-}(\dd x) \right] &=& \mathbb{E}\left[ h(t-T_{0}) \right] \\
& = &\int_{-\infty}^{0} h(t-t_{0}) f_{0}(t_{0}) \dd t_{0} \leq || f_{0} ||_{L^\infty} \int_{0}^{\infty} h(y) dy,
\end{eqnarray*}
hence $\mathbb{P}_{0}$ satisfies~\eqref{eq:assumption:P0}. On the other hand, since $N_{-}$ is reduced to one point $T_{0}$,
$ 
\Phi^{h}_{-,\mathbb{P}_{0}}(t,s) = \frac{1}{\mathbb{P}\left( \mathcal{E}_{t,s}(N_{\leq 0}) \right)} \mathbb{E}\left[  \left(h(t-T_{0})+ K_{s}(t,T_{0})  \right) \mathds{1}_{\mathcal{E}_{t,s}(N_{\leq 0})}  \right],
$ 
using the definition of the conditional expectation. 
First, we {compute $\P(\mathcal{E}_{t,s}(N_{\leq 0}|T_0)$. To do so, we }
use the decomposition $\mathcal{E}_{t,s}(N_{\leq 0})=\{T_{0}<t-s\} \cap \mathcal{E}_{t,s}(N_{1}^{T_{0}}) \cap \left( \bigcap_{V\in N_{1}^{T_{0}}} \mathcal{E}_{t-V,s}(N_{c}^{T_{0},V})\right)$ and the fact that, conditionally on $N_{1}^{T_{0}}$, for all $V\in N_{1}^{T_{0}}$, $N_{c}^{T_{0},V}$ has the same distribution as $N_{c}$ to deduce that
\begin{equation*}
\mathbb{E}\left[ \mathds{1}_{\mathcal{E}_{t,s}(N_{\leq 0})} \middle|\, T_{0} \right] = \mathds{1}_{T_{0}<t-s} \mathbb{E}\left[ \mathds{1}_{\mathcal{E}_{t,s}(N_{1}^{T_{0}})} \middle| T_{0} \right]  \mathbb{E}\left[ \prod_{V\in N_{1}^{T_{0}}\cap[t-s,t)^{c}} G_{s}(t-V) \middle| T_{0} \right],
\end{equation*}
because the event $\mathcal{E}_{t,s}(N_{1}^{T_{0}})$ involves $N_{1}^{T_{0}}\cap[t-s,t)$  whereas the product involves $N_{1}^{T_{0}}\cap[t-s,t)^{c}$, {both of those processes being} two independent Poisson processes. Their respective intensities are $\lambda(x)=h(x-T_{0})  \mathds{1}_{[(t-s)\vee 0,t)}(x)$ and $\lambda(x)=h(x-T_{0})  \mathds{1}_{[0,(t-s)\vee 0)}(x)$, so we end up with 
$$
\begin{cases}
\mathbb{E}\left[ \mathds{1}_{\mathcal{E}_{t,s}(N_{1}^{T_{0}})} \middle| T_{0} \right] = \exp\left(-\int_{t-s}^{t}h(x-T_{0}) \mathds{1}_{[0,\infty)}(x) \dd x \right)\\
\mathbb{E}\left[ \prod\limits_{V\in N_{1}^{T_{0}}\cap[t-s,t)^{c}} G_{s}(t-V) \middle| T_{0} \right]= \exp\left(- \int_{0}^{(t-s)\vee 0}\left[ 1-G_{s}(t-x) \right] h(x-T_{0}) \dd x \right).
\end{cases}
$$
The product of these two last quantities is exactly $q(t,s,T_{0})$ {given by \eqref{Kqdef}}. 
Note that $q(t,s,T_{0})$ is exactly the probability {that} $T_{0}$ has no descendant in $[t-s,t)$ given $T_{0}$.
Hence,
$
\mathbb{P}\left( \mathcal{E}_{t,s}(N_{\leq 0}) \right) = \int_{-\infty}^{0\wedge(t-s)} q(t,s,t_{0}) f_{0}(t_{0}) \dd t_{0}
$
and~\eqref{eq:Phi:-:1pt} clearly follows.

\noindent
{{\it Under $\left(\mathcal{A}^2_{N_-}\right)$}}. On the one hand, for any $t\geq 0$,
$$
\mathbb{E}\left[ \int_{-\infty}^{0} h(t-x) N_{-}(\dd x) \right] = \mathbb{E}\left[ \int_{-\infty}^{0} h(t-x) \alpha \dd x \right] \leq \alpha \int_{0}^{\infty} h(y) dy,
$$
hence $\mathbb{P}_{0}$ satisfies~\eqref{eq:assumption:P0}. On the other hand, since we are dealing with a Poisson process, we can use 
{the same argumentation of marked Poisson processes as in the proof of Lemma~\ref{prop:implicit:formulas:L:G}.}
For every $T\in N_{-}$, we say that $T$ has mark $0$ if $T$ has no descendant or himself in $[t-s,t)$ and mark $1$ otherwise. Let us denote $N_{-}^{0}$ the set of points with mark $0$ and $N_{-}^{1}=N_{-}\setminus N_{-}^{0}$. For any $T\in N_{-}$, we have
$$
\mathbb{P}\left( T \in N_{-}^{0} \middle| N_{-} \right)= q(t,s,T) \mathds{1}_{[t-s,t)^{c}}(T),
$$
and all the marks are chosen independently given $N_{-}$.
Hence, $N_{-}^{0}$ and $N_{-}^{1}$ are independent Poisson processes and the intensity of $N_{-}^{0}$ is given by 
$$\lambda(z)= \alpha\mathds{1}_{z\leq 0} \, q(t,s,z)  \mathds{1}_{[t-s,t)^{c}}(z)$$ 
Moreover, $\mathcal{E}_{t,s}(N_{\leq 0}) = \left\{ N_{-}^{1}=\emptyset \right\}$. Hence, 
\begin{equation*}
\Phi^{h}_{-,\mathbb{P}_{0}}(t,s) = \mathbb{E}\left[ \sum_{T\in N_{-}^{0}} \left(h(t-T)+ K_{s}(t,T) \right) \middle| N_{-}^{1}=\emptyset \right]
\end{equation*}
which gives~\eqref{eq:Phi:-:Poisson} thanks to the independence of $N_{-}^{0}$ and $N_{-}^{1}$.

\vspace{-0.2cm}
\subsubsection{Proof of Propositions~\ref{prop:conditional:intensity:hawkes1} and~\ref{prop:limit:edp:support:infinity}}

Since we already proved Proposition~\ref{prop:conditional:intensity:hawkes} and Lemma~\ref{prop:conditional:intensity:hawkes:simple:examples}, to obtain Proposition~\ref{prop:conditional:intensity:hawkes1} it only remains to prove that $\Phi_{\P_0}^{\mu,h} \in L^\infty(\R_+^2)$, to ensure uniqueness of the solution by Remark \ref{unicityv}. To do so, it is easy to see that the assumption $h\in L^\infty(\R_+)$ combined with Lemma \ref{unicityLG} giving that $G_s\in[0,1]$ and $L_s\in L^1(\R_+)$ ensures that $\Phi_+^{\mu,h}, q$ and $K_s$  are in $L^\infty(\R_+)$. In  turn, this implies that $\Phi_{-,\mathbb{P}_0}^h$ in both  \eqref{eq:Phi:-:1pt} and  \eqref{eq:Phi:-:Poisson} is in $L^\infty(\R_+)$, which concludes the proof of Proposition \ref{prop:conditional:intensity:hawkes1}. 

\paragraph{Proof of Proposition ~\ref{prop:limit:edp:support:infinity}}
The method of characteristics leads us to rewrite the solution $v$
of \eqref{eq:PDEPhi:v}--\eqref{eq:PDEPhi:bound:v}
by defining $f^{in}\equiv v^{in}$ on $\R_+$, $f^{in}\equiv 1$ on $\R_-$ such that
\begin{equation}\label{eq:explicit:v}
v(t,s)=
\begin{cases}
f^{in}(s-t)e^{-\int_{(t-s)\vee 0}^t \Phi(y,s-t+y)\, dy}, {\mbox{ when } s\geq t} \\
f^{in}(s-t)e^{-\int_{(s-t)\vee 0}^s \Phi(y+t-s,y)\, dy}, {\mbox{ when } t\geq s}.
\end{cases}
\end{equation}
{Let $\mathbb{P}_0^M$ be the distribution of the past given by  $\left(\mathcal{A}^1_{N_-}\right)$ and $T_0\sim \mathcal{U}([-M-1,-M])$. By Proposition~\ref{prop:conditional:intensity:hawkes1},}
let 
$v_M$ be the solution of System~\eqref{eq:PDEPhi:v}--\eqref{eq:PDEPhi:bound:v} with $\Phi=\Phi_{\P_0^M}^{\mu,h}$ and $v^{in}=v_M^{in},$ (i.e. the survival function of a uniform variable on $[-M-1,-M]$). Let 
also $v_M^\infty$ be the solution {of System \eqref{eq:PDEPhi:v}--\eqref{eq:PDEPhi:bound:v}} with $\Phi=\Phi_{\P_0^M}^{\mu,h}$ and $v^{in}\equiv 1,$ and $v_\infty$ the solution {of \eqref{eq:edp:mass:creation}-\eqref{eq:edp:mass:creation:bound}. Then
$$\Vert v_M - v^\infty \Vert_{L^\infty((0,T)\times (0,S))}\leq \Vert v_M - v_M^\infty \Vert_{L^\infty((0,T)\times (0,S))}+\Vert v_M^\infty - v^\infty \Vert_{L^\infty((0,T)\times (0,S))}.$$
By definition of $v_M^{in},$ it is clear that $v_M^{in}(s)=1$ for $s\leq M,$ so that
 Formula~\eqref{eq:explicit:v} implies that $ v_M{(t,s)}= v_M^\infty {(t,s)}$ as soon as $s-t\leq M$ 
and so $\Vert v_M - v_M^\infty \Vert_{L^\infty((0,T)\times (0,S))}=0$ as soon as $M\geq S$.

To evaluate the distance $\Vert v_M^\infty - v^\infty \Vert_{L^\infty((0,T)\times (0,S))}$, it remains to prove that $e^{-\int_0^t {\Phi_{-,\P_0^M}^{h}}(y,s-t+y)\, dy} \to 1$ uniformly on  $(0,T)\times (0,S)$ for any $T>0,\,S>0.$ For this, it suffices to prove that ${\Phi_{-,\P_0^M}^{h}}(t,s) \to 0$ 
uniformly on  $(0,T)\times (0,S)$.
Since $q$  given by \eqref{Kqdef} takes  values 
in $[\exp(-2 || h ||_{L^1}),1]$, \eqref{eq:Phi:-:1pt} implies
$$
{\Phi_{-,\P_0^M}^{h}}(t,s) \leq 
\frac{\int_{-\infty}^{0\wedge(t-s)} \left( h(t-t_{0}) + K_{s}(t,t_{0}) \right) \mathds{1}_{[-M-1,-M]}(t_{0})  \dd t_{0}}
{\int_{-\infty}^{0\wedge(t-s)} \exp(-2 ||h||_{L^1}) \mathds{1}_{[-M-1,-M]}(t_{0}) \dd t_{0}}.
$$
Since $||G_{s}||_{L^{\infty}}\leq 1$, $L_{s}$ and $h$ are non-negative, it is clear that
$$
K_{s}(t,t_{0})\leq  \int_{0}^{+\infty} \left[ h(t-x) +L_{s}(t-x) \right] h(x-t_{0}) \dd x,
$$
and so
\begin{eqnarray*}
\int_{-M-1}^{-M} K_{s}(t,t_{0}) \dd t_{0} & \leq & \int_{0}^{+\infty} \left[ h(t-x) +L_{s}(t-x) \right] \left( \int_{-M-1}^{-M} h(x-t_{0}) \dd t_{0} \right) \dd x\\
& \leq & \int_{M}^{\infty} h(y)dy \int_{0}^{+\infty} \left[ h(t-x) +L_{s}(t-x) \right] \dd x\\
& \leq & \int_{M}^{\infty} h(y)dy \left[ || h ||_{L^{1}} + || L_{s} ||_{L^1} \right].
\end{eqnarray*}
Hence, for $M$ large enough
$ 
{\Phi_{-,\P_0^M}^{h}}(t,s) \leq \frac{ \int_{M}^{\infty}h(y)dy\left[ || h ||_{L^1} + || L_{s} ||_{L^1} \right] }{\exp(-2 ||h||_{L^1})} \rightarrow 0,
$ 
uniformly in $(t,s)$ since $L_s$ is uniformly bounded in $L^{1}$, which concludes the proof. 

\vspace{-0.2cm}
\subsection{Thinning \label{sec:Thinning}}
The demonstration of Ogata's thinning algorithm uses a generalization of point processes, namely the marked point processes. However, only the basic properties of simple and marked point processes are needed (see \cite{Bre} for a good overview of point processes theory).
Here $(\mathcal{F}_t)_{t>0}$ denotes a general filtration such that $\mathcal{F}_t^N\subset \mathcal{F}_t$ for all $t>0$, and not necessarily the natural one, i.e. $(\mathcal{F}_t^N)_{t>0}$.}
\begin{thm}
\label{thm:Thinning:R2}Let $\Pi$ be a $\left(\mathcal{F}_{t}\right)$-Poisson
process with intensity $1$ on $\mathbb{R}_{+}^{2}$. Let $\lambda(t,\mathcal{F}_{t-})$
be a non-negative $\left(\mathcal{F}_{t}\right)$-predictable process
which is $L_{loc}^{1}$ a.s. and define the point process
$N$ by
$ 
N\left(C\right)=\int_{C\times\mathbb{R}_{+}}\mathbf{1}_{\left[0,\lambda(t,\mathcal{F}_{t-})\right]}\left(z\right)\,\Pi\left(\dd t\times \dd z\right),
$ 
for all $C\in\mathcal{B}\left(\mathbb{R}_{+}\right)$. Then $N$ admits
$\lambda(t,\mathcal{F}_{t-})$ as a $\left(\mathcal{F}_{t}\right)$-predictable intensity.
Moreover, if $\lambda$ is in fact $\left( \mathcal{F}_{t}^{N} \right)$-predictable, i.e. $\lambda(t,\mathcal{F}_{t-}) = \lambda(t,\mathcal{F}_{t-}^{N})$, then $N$ admits
$\lambda(t,\mathcal{F}_{t-}^{N})$ as a $\left(\mathcal{F}_{t}^{N}\right)$-predictable intensity.
\end{thm}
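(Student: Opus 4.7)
The plan is to prove that $N$ admits $\lambda(t,\mathcal{F}_{t-})$ as an $(\mathcal{F}_t)$-intensity by the classical martingale characterization: it suffices to show that for every non-negative $(\mathcal{F}_t)$-predictable process $H$,
\begin{equation*}
\mathbb{E}\!\left[\int_{\mathbb{R}_+} H_t\, N(\dd t)\right] = \mathbb{E}\!\left[\int_{\mathbb{R}_+} H_t\, \lambda(t,\mathcal{F}_{t-})\, \dd t\right].
\end{equation*}
The key tool is the Poisson random measure formula: since $\Pi$ is an $(\mathcal{F}_t)$-Poisson process on $\mathbb{R}_+^2$ with intensity 1, for any non-negative process $K(t,z)$ that is $(\mathcal{F}_t)\otimes\mathcal{B}(\mathbb{R}_+)$-predictable,
\begin{equation*}
\mathbb{E}\!\left[\int_{\mathbb{R}_+\times\mathbb{R}_+} K(t,z)\,\Pi(\dd t,\dd z)\right] = \mathbb{E}\!\left[\int_{\mathbb{R}_+\times\mathbb{R}_+} K(t,z)\,\dd t\,\dd z\right].
\end{equation*}
This is standard (e.g.~Br\'emaud~\cite{Bre}) and can be proved by a monotone class argument starting from the defining property $\mathbb{E}[\Pi((s,t]\times A)\mid \mathcal{F}_s]=(t-s)|A|$ for $A\in\mathcal{B}(\mathbb{R}_+)$ bounded.

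First, I would address well-definedness and local finiteness. Applying the Poisson formula to $K(t,z) = \mathbf{1}_{[0,T]}(t)\mathbf{1}_{[0,\lambda(t,\mathcal{F}_{t-})]}(z)$, which is $(\mathcal{F}_t)$-predictable because $\lambda(t,\mathcal{F}_{t-})$ is, gives
\begin{equation*}
\mathbb{E}[N([0,T])] = \mathbb{E}\!\left[\int_0^T \lambda(t,\mathcal{F}_{t-})\,\dd t\right].
\end{equation*}
Since we only assume $\left(\mathcal{A}^{\mathbb{L}^1,a.s.}_{\lambda,loc}\right)$ and not its expectation version, I would introduce the stopping times $\tau_n = \inf\{t\geq 0 : \int_0^t \lambda(u,\mathcal{F}_{u-})\,\dd u \geq n\}$, which are $(\mathcal{F}_t)$-stopping times since $\int_0^\cdot \lambda$ is $(\mathcal{F}_t)$-adapted and continuous, and which satisfy $\tau_n\to\infty$ a.s. by the local $L^1$ assumption. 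Replacing $\lambda$ by the bounded predictable process $\lambda\mathbf{1}_{t\leq\tau_n}$ yields $N([0,T])<\infty$ a.s. on $\{\tau_n\geq T\}$ for each $n$, hence $N$ is locally finite a.s.

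Next, for the main identity, apply the Poisson formula to $K(t,z) = H_t \mathbf{1}_{[0,\lambda(t,\mathcal{F}_{t-})]}(z)$ (predictable as a product of predictable factors), first with $H$ replaced by $H_t\mathbf{1}_{t\leq\tau_n}$ to ensure integrability. By the definition of $N$,
\begin{equation*}
\int_{\mathbb{R}_+\times\mathbb{R}_+} H_t \mathbf{1}_{[0,\lambda(t,\mathcal{F}_{t-})]}(z)\,\Pi(\dd t,\dd z) = \int_{\mathbb{R}_+} H_t\, N(\dd t),
\end{equation*}
while the Lebesgue integral of the same integrand equals $\int H_t\lambda(t,\mathcal{F}_{t-})\,\dd t$. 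Passing to the limit $n\to\infty$ by monotone convergence gives the desired identity, which characterizes $\lambda(t,\mathcal{F}_{t-})$ as the $(\mathcal{F}_t)$-predictable intensity of $N$ (see~\cite{Bre}).

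For the final assertion, if $\lambda(t,\mathcal{F}_{t-}) = \lambda(t,\mathcal{F}_{t-}^N)$ is in fact $(\mathcal{F}_t^N)$-predictable, then since $\mathcal{F}_t^N \subset \mathcal{F}_t$, the identity above restricted to $(\mathcal{F}_t^N)$-predictable $H$ directly shows that $\lambda(t,\mathcal{F}_{t-}^N)$ is also the $(\mathcal{F}_t^N)$-predictable intensity of $N$. The main obstacle I anticipate is handling the measurability and predictability of $(t,z,\omega)\mapsto \mathbf{1}_{[0,\lambda(t,\mathcal{F}_{t-})(\omega)]}(z)$ on the product $(\mathbb{R}_+\times\mathbb{R}_+)\times\Omega$ with the correct $\sigma$-algebra, and the rigorous justification of the marked Poisson random measure formula for unbounded predictable integrands; this is where the Appendix should presumably give a careful monotone class / localization argument, and where a complete proof in the literature seems to be missing.
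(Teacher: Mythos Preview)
Your approach is correct and reaches the same conclusion, but the localization strategy differs from the paper's. You truncate in time via the stopping times $\tau_n=\inf\{t:\int_0^t\lambda\geq n\}$ and invoke a Poisson random measure compensator identity on $\mathbb{R}_+^2$ as a black box. The paper instead truncates in the mark direction: since $\Pi$ on $\mathbb{R}_+^2$ cannot be viewed as a marked point process on $\mathbb{R}_+$ (there is no first point in time), it restricts to $\Pi^{(k)}=\Pi|_{\mathbb{R}_+\times[0,k]}$, which \emph{is} a genuine marked point process with bounded mark space $E_k=[0,k]$ and intensity kernel $1\cdot\dd z$. It then carefully checks that $\Gamma_k=\{(u,\omega,z):\lambda(u,\mathcal{F}_{u-})(\omega)\geq z\}$ lies in the marked predictable $\sigma$-algebra $\mathcal{P}(\mathcal{F}_t)\otimes\mathcal{B}([0,k])$ (exactly the measurability point you flag as the main obstacle), applies the Integration Theorem for marked point processes (Br\'emaud, Ch.~VIII, Cor.~4) to obtain that $N_t^{(k)}-\int_0^t\min(\lambda,k)\,\dd u$ is an $(\mathcal{F}_t)$-local martingale, and finally lets $k\to\infty$ by monotone convergence. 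Your route is arguably more direct but leans on a result (the predictable compensator identity for Poisson measures on $\mathbb{R}_+^2$) that itself needs the kind of truncation the paper performs; the paper's route has the advantage of plugging straight into the marked point process machinery of~\cite{Bre}, which is precisely why the authors chose it. For the filtration reduction at the end, both arguments coincide.
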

\begin{proof}
The goal is to apply the martingale characterization of the intensity (Chapter II, Theorem 9 in \cite{Bre}). We cannot consider $\Pi$ as a point process on $\mathbb{R}_{+}$ marked in $\mathbb{R}_{+}$ (in particular, the point with the smallest abscissa cannot be defined).
However, for every $k\in\mathbb{N}$, we can define $\Pi^{\left(k\right)}$, the restriction of $\Pi$ to the points with ordinate smaller than $k$, by
$ 
\Pi^{\left(k\right)}\left(C\right)=\int_{C}\,\Pi\left(\dd t\times \dd z\right)
$ 
for all $C\in\mathcal{B}\left(\mathbb{R}_{+}\times\left[0,k\right]\right)$.
Then $\Pi^{\left(k\right)}$ can be seen as a point process on $\mathbb{R}_{+}$
marked in $E_{k}:=\left[0,k\right]$ with intensity kernel $1.\dd z$ with respect
to $\left(\mathcal{F}_{t}\right)$.
In the same way, we define $N^{\left(k\right)}$ by
$$
N^{\left(k\right)}\left(C\right)=\int_{C\times\mathbb{R}_{+}}\mathbf{1}_{z\in\left[0,\lambda(t,\mathcal{F}_{t-})\right]}\,\Pi^{\left(k\right)}\left(\dd t\times \dd z\right)
\quad \mbox{for all} \quad C\in\mathcal{B}\left(\mathbb{R}_{+}\right).
$$
Let $\mathcal{P}(\mathcal{F}_t)$ be the predictable $\sigma$-algebra (see page 8 of \cite{Bre}).

Let us denote $\mathcal{E}_{k}=\mathcal{B}\left(\left[0,k\right]\right)$
and $\tilde{\mathcal{P}}_{k}\left(\mathcal{F}_{t}\right)=\mathcal{P}\left(\mathcal{F}_{t}\right)\otimes\mathcal{E}_{k}$
the associated marked predictable $\sigma$-algebra. 

For any fixed $z$ in $E$, 
$\left\{ \left(u,\omega\right)\in\mathbb{R}_{+}\times\Omega\ \mbox{ such that }\ \lambda(u,\mathcal{F}_{u-})\left(\omega\right)\geq z\right\} \,\in\mathcal{P}\left(\mathcal{F}_{t}\right)$
since $\lambda$
 is predictable. 
If $\Gamma_{k}=\left\{ \left(u,\omega,z\right)\in\mathbb{R}_{+}\times\Omega\times E_{k},\ \lambda(u,\mathcal{F}_{u-})\left(\omega\right)\geq z\right\} $,
then
$$
\Gamma_{k}=\underset{n\in\mathbb{N}^{*}}{\cap}\underset{q\in\mathbb{Q}_{+}}{\cup}\left\{ \left(u,\omega\right)\in\mathbb{R}_{+}\times\Omega,\  \lambda(u,\mathcal{F}_{u-})\left(\omega\right)\geq q\right\} \times\left(\left[0,q+\frac{1}{n}\right]\cap E_{k}\right).
$$
So, $\Gamma_{k}\in\tilde{\mathcal{P}}_{k}\left(\mathcal{F}_{t}\right)$ and $\mathbf{1}_{z\in\left[0,\lambda(u,\mathcal{F}_{u-})\right]\cap E_{k}}$ is $\tilde{\mathcal{P}_{k}}\left(\mathcal{F}_{t}\right)$-measurable.
Hence, one can apply the Integration Theorem (Chapter VIII, Corollary 4 in \cite{Bre}).
So,
$$
(X_{t})_{t\geq 0}:=\left(\int_{0}^{t}\int_{E_{k}}\mathbf{1}_{z\in\left[0,\lambda(u,\mathcal{F}_{u-})\right]}\,\bar{M}^{\left(k\right)}\left(du\times \dd z\right)\right)_{t\geq0}\mbox{ is a \ensuremath{\left(\mathcal{F}_{t}\right)}-local martingale}
$$
where $\bar{M}^{\left(k\right)}\left(du\times \dd z\right)=\Pi^{\left(k\right)}\left(du\times \dd z\right)-\dd z\dd u$.
In fact,
\begin{equation*}
X_{t}=N_{t}^{\left(k\right)}-\int_{0}^{t}\min\left(\lambda(u,\mathcal{F}_{u-}),k\right)\dd u.
\end{equation*}

Yet, $N_{t}^{\left(k\right)}$ (respectively $\int_{0}^{t}\min\left(\lambda(u,\mathcal{F}_{u-}),k\right)\dd u$)
is non-decreasingly converging towards $N_{t}$ (resp. $\int_{0}^{t}\lambda(u,\mathcal{F}_{u-})\dd u$). Both of the limits are finite a.s. thanks to the local integrability of the intensity (see page 27 of \cite{Bre}).
Thanks to monotone convergence we deduce that $\left(N_{t}-\int_{0}^{t}\lambda(u,\mathcal{F}_{u-})\dd u\right)_{t\geq0}$
is a $\left(\mathcal{F}_{t}\right)$-local martingale. Then, thanks
to the martingale characterization of the intensity, $N_{t}$ admits $\lambda(t,\mathcal{F}_{t-})$ as an $\left(\mathcal{F}_{t}\right)$-intensity.
The last point of the Theorem is a reduction of the filtration. Since $\lambda(t,\mathcal{F}_{t-}) = \lambda(t,\mathcal{F}_{t-}^{N})$, it is a fortiori $\left( \mathcal{F}_{t}^{N} \right)$-progressive and the desired result 
{follows} (see page $27$ in \cite{Bre}).
\end{proof}

This final result can be found in \cite{Bremaud_Massoulie_1996}.

\begin{prop}[Inversion Theorem] \label{prop:inversion:theorem}

Let $N=\left\{ T_{n}\right\} _{n>0}$ be a non explosive point process on $\mathbb{R}_{+}$ with $\left(\mathcal{F}_{t}^{N}\right)$-predictable intensity $\lambda_{t}=\lambda(t,\mathcal{F}_{t-}^{N})$. Let $\left\{ U_{n}\right\} _{n>0}$ be a sequence of i.i.d. random variables with uniform distribution on $\left[0,1\right]$. Moreover, suppose that they are independent of $\mathcal{F}^N_{\infty}$. Denote $\mathcal{G}_{t}=\sigma\left(U_{n}, T_{n}\leq t\right)$.
Let $\hat{N}$ be an homogeneous Poisson process with intensity $1$ on $\mathbb{R}_{+}^{2}$ independent of $\mathcal{F}_{\infty}\vee\mathcal{G}_{\infty}$.
Define a point process $\bar{N}$ on $\mathbb{R}_{+}^{2}$ by
\[
\bar{N}\left((a,b]\times A\right)=\sum_{n>0}\mathds{1}_{(a,b]}\left(T_{n}\right)\mathds{1}_{A}\left(U_n\lambda(T_{n},\mathcal{F}_{T_n-}^N) \right)+\int_{(a,b]}\int_{A-[0,\lambda(t,\mathcal{F}_{t-}^N)]}\hat{N}\left(\dd t\times \dd z\right)
\]
for every $0\leq a<b$ and $A\subset\mathbb{R}_{+}$.

Then, $\bar{N}$ is an homogeneous Poisson process on $\mathbb{R}_{+}^{2}$ with intensity $1$ with respect to the filtration $\left(\mathcal{H}_{t}\right)_{t\geq 0}=\left(\mathcal{F}_{t}\vee\mathcal{G}_{t}\vee\mathcal{F}_{t}^{\hat{N}}\right)_{t\geq 0}$.
\end{prop}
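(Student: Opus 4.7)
My plan is to view $\bar{N}$ as a marked point process on $\mathbb{R}_+$ with marks in $\mathbb{R}_+$ (the ordinate axis), and to establish that it has $(\mathcal{H}_t)$-predictable intensity kernel equal to the Lebesgue measure $dz$ (so compensator $\mathds{1}_A(z)\,dz\,dt$ for any $A\in\mathcal{B}(\mathbb{R}_+)$). Since this compensator is deterministic and $\sigma$-finite, Watanabe's characterization theorem for marked point processes (the marked analogue of Chapter II, Theorem 9 of Br\'emaud~\cite{Bre}) then yields that $\bar{N}$ is a homogeneous Poisson process of intensity $1$ on $\mathbb{R}_+^2$ with respect to $(\mathcal{H}_t)$.

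The first step is the natural decomposition $\bar{N} = \bar{N}^{(1)} + \bar{N}^{(2)}$, where
\[
\bar{N}^{(1)}=\sum_{n>0}\delta_{(T_n, U_n\lambda(T_n,\mathcal{F}_{T_n-}^N))},\qquad \bar{N}^{(2)}(\dd t,\dd z)=\mathds{1}_{z>\lambda(t,\mathcal{F}_{t-}^N)}\,\hat{N}(\dd t,\dd z).
\]
For $\bar{N}^{(1)}$, I would argue that since the $U_n$'s are independent of $\mathcal{F}_\infty^N$, the process $N=\{T_n\}$ still admits $\lambda(t,\mathcal{F}_{t-}^N)$ as its $(\mathcal{H}_t)$-predictable intensity; and conditionally on $\mathcal{F}_{T_n-}^N$ (which determines $\lambda(T_n,\mathcal{F}_{T_n-}^N)$), the mark $U_n\lambda(T_n,\mathcal{F}_{T_n-}^N)$ is uniformly distributed on $[0,\lambda(T_n,\mathcal{F}_{T_n-}^N)]$. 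Hence the marked compensator of $\bar{N}^{(1)}$ is
\[
\lambda(t,\mathcal{F}_{t-}^N)\cdot \frac{1}{\lambda(t,\mathcal{F}_{t-}^N)}\mathds{1}_{[0,\lambda(t,\mathcal{F}_{t-}^N)]}(z)\,dz\,dt \;=\; \mathds{1}_{z\leq \lambda(t,\mathcal{F}_{t-}^N)}\,dz\,dt.
\]
For $\bar{N}^{(2)}$, since $\hat{N}$ is independent of $\mathcal{F}_\infty\vee\mathcal{G}_\infty$, a standard argument (Theorem~T2, Chapter II of~\cite{Bre}) shows its $(\mathcal{H}_t)$-compensator is still $dz\,dt$; restricting to the $(\mathcal{H}_t)$-predictable set $\{z>\lambda(t,\mathcal{F}_{t-}^N)\}$ (which is predictable because $\lambda$ is), one obtains the compensator $\mathds{1}_{z>\lambda(t,\mathcal{F}_{t-}^N)}\,dz\,dt$ via the Integration Theorem of~\cite{Bre}. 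Summing the two compensators gives the deterministic compensator $dz\,dt$ for $\bar{N}$.

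The main subtlety I anticipate lies in properly justifying these two compensator computations within the enlarged filtration $(\mathcal{H}_t)$. For $\bar{N}^{(1)}$, one has to check that adjoining the marks $U_n$ does not change $N$'s intensity, which is handled by the $\mathcal{F}_\infty^N$-independence of the $U_n$'s and the fact that each $U_n$ is itself $\mathcal{G}_{T_n}$-measurable (so incorporated at the jump time in a way compatible with predictability). For $\bar{N}^{(2)}$, one needs that $\hat{N}$ remains a $(\mathcal{H}_t)$-Poisson process of rate $1$, which follows from the standing hypothesis that $\hat{N}$ is independent of $\mathcal{F}_\infty\vee\mathcal{G}_\infty$. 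Once these predictability and filtration-enlargement points are in place, the calculation of compensators is routine and invoking the marked-process version of Watanabe's theorem closes the argument: the deterministic compensator $dz\,dt$ characterizes $\bar{N}$ as a $(\mathcal{H}_t)$-Poisson process of intensity $1$ on $\mathbb{R}_+^2$.
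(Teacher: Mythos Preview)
The paper does not actually prove this proposition: it simply states ``This final result can be found in \cite{Bremaud_Massoulie_1996}'' and gives the statement without argument. So there is no paper proof to compare against.

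Your approach is the standard one and is essentially correct. Decomposing $\bar{N}=\bar{N}^{(1)}+\bar{N}^{(2)}$, computing the $(\mathcal{H}_t)$-compensator of each piece, and then invoking the (marked) Watanabe characterization is exactly how this is done in the literature (and in particular in Br\'emaud--Massouli\'e). A couple of minor points worth tightening: (i) for $\bar{N}^{(1)}$ you implicitly use that $\lambda(T_n,\mathcal{F}_{T_n-}^N)>0$ a.s.\ at the jump times of $N$, which holds because $N$ has no points on $\{\lambda=0\}$; (ii) to apply Watanabe you should also note that $\bar{N}$ is a.s.\ simple (no common atoms between $\bar{N}^{(1)}$ and $\bar{N}^{(2)}$, which follows since $\hat{N}$ has diffuse time-marginal independent of the $T_n$'s). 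With these small clarifications your sketch would constitute a complete proof along the lines of the cited reference.
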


\bibliographystyle{plain}       
\bibliography{Tout_Marie}           

\end{document}